\def\withcolors{0}
\def\withnotes{0}
\def\eps{\ve}
\renewcommand{\epsilon}{\ve}
\def\ve{\varepsilon}
\renewcommand{\P}{\mathbb P}
\newcommand{\pr}[2][]{\mathrm{Pr}\ifthenelse{\not\equal{}{#1}}{_{#1}}{}\!\left[#2\right]}
\newcommand{\dtv}{d_{\mathrm {TV}}}
\newtheorem{theorem}{Theorem}
\newtheorem{proposition}[theorem]{Proposition}
\newtheorem{lemma}[theorem]{Lemma}
\newtheorem{claim}[theorem]{Claim}
\newtheorem{corollary}[theorem]{Corollary}
\theoremstyle{definition}
\newtheorem{definition}[theorem]{Definition}
\theoremstyle{plain}
\numberwithin{theorem}{section} 
\numberwithin{nontheorem}{section} 
\numberwithin{proposition}{section} 
\numberwithin{observation}{section} 
\numberwithin{remark}{section} 
\numberwithin{fact}{section} 
\numberwithin{lemma}{section} 
\numberwithin{claim}{section} 
\numberwithin{corollary}{section} 
\numberwithin{case}{section} 
\numberwithin{dfn}{section} 
\numberwithin{definition}{section} 
\numberwithin{question}{section} 
\numberwithin{openquestion}{section} 
\numberwithin{res}{section}
  \newcommand{\gcolor}[1]{{\color{red}#1}} 
  \newcommand{\gcolor}[1]{{#1}}
  \newcommand{\gnote}[1]{\par\gcolor{\textbf{G: }\sf #1}} % Gautam
  \newcommand{\gfootnote}[1]{\footnote{{\bf \gcolor{Gautam}}: {#1}}}
  \newcommand{\gnote}[1]{}
  \newcommand{\gfootnote}[1]{}
\newcommand{\ignore}[1]{\leavevmode\unskip} % eat unnecessary spaces before
\newcommand{\remove}[1]{}%\color{orange} Remove: #1}}
\newcommand{\suggestedversion}[1]{#1}
\newcommand{\Yat}{\mathcal{Y}}
\newcommand{\oldgmargin}[1]{}
\title{Distribution Learnability and Robustness\thanks{Authors are listed in alphabetical order.}}
\date{}
\author {
Shai Ben-David\thanks{Cheriton School of Computer Science, University of Waterloo and Vector Institute. {\tt shai@uwaterloo.ca}.}
\and
Alex Bie\thanks{Cheriton School of Computer Science, University of Waterloo. {\tt yabie@uwaterloo.ca}. Supported by an NSERC Discovery Grant and a David R. Cheriton Graduate Scholarship.}
\and
Gautam Kamath\thanks{Cheriton School of Computer Science, University of Waterloo and Vector Institute. {\tt g@csail.mit.edu}. Supported by a
Canada CIFAR AI Chair, an NSERC Discovery Grant, and an unrestricted gift from Google.}
\and
Tosca Lechner\thanks{Cheriton School of Computer Science, University of Waterloo. {\tt tlechner@uwaterloo.ca}. Supported by a Vector Research Grant and a Waterloo Apple PhD Fellowship in Data Science and Machine Learning.}
}
\begin{document}
\maketitle

\begin{abstract}
We examine the relationship between learnability and robust (or agnostic) learnability for the problem of distribution learning.
We show that,  contrary to other learning settings (e.g., PAC learning of function classes), realizable learnability of a class of probability distributions does not imply its agnostic learnability. 
We go on to examine what type of data corruption can disrupt the learnability of a distribution class and what is such learnability robust against. We show that realizable learnability of a class of distributions implies its robust learnability with respect to only additive corruption, %(and consequently, under Huber contamination) 
but not against subtractive corruption. 

We also explore related implications in the context of compression schemes and  differentially private learnability.
\end{abstract}
\setcounter{footnote}{0}

\section{Introduction}
Distribution learning (sometimes called \emph{density estimation}) refers to the following statistical task: given i.i.d.\ samples from some (unknown) distribution $p$, produce an estimate of $p$. 
This is one of the most fundamental and well-studied questions in both statistics~\cite{DevroyeL01} and computer science~\cite{Diakonikolas16}, sometimes equivalent to classic problems of parameter estimation  in parametric settings (e.g.,  Gaussian mean estimation).

It is easy to see that no learner can meaningfully approximate any given $p$ without having some prior knowledge. In the realizable setting, the problem then becomes: assuming the sample generating distribution $p$ belongs to a given class of distributions $\mathcal{C}$, and given parameters $\epsilon, \delta \in (0,1)$, output some distribution $\hat p$ such that with probability at least $1- \delta$, (over the generated input samples)
the statistical distance between $p$ and $\hat p$ is at most $\varepsilon$.
Specifically, we employ \emph{total variation distance}, the most studied metric in density estimation~\cite{DevroyeL01,Diakonikolas16}, using $\dtv(p,q)$ to denote the total variation distance between distributions $p$ and $q$.
This case, when $p \in \mathcal{C}$, is often called the \emph{realizable} setting. 
If, for some particular class $\mathcal{C}$, this is doable with a finite number of samples $n(\epsilon,\delta)$ (independent of the choice of $p$), then we say the distribution class is $(\epsilon, \delta)$-\emph{learnable}.\footnote{For the sake of exposition, we defer formal definitions of our learnability notions to Section~\ref{sec:defs}.} A class is \emph{learnable} if it is $(\epsilon, \delta)$-\emph{learnable} for every $(\epsilon, \delta) \in (0,1)^2$.
A significant amount of work has focused on proving bounds on $n(\epsilon, \delta)$ for a number of classes $\mathcal{C}$ of interest -- for example, one can consider the class $\mathcal{C}$ of all Gaussian distributions $\mathcal{N}(\mu, \Sigma)$ in some Euclidean space $\mathbb{R}^d$. 
%For more examples, see discussion of related work in Section~\ref{sec:related}.

However, this framework is restrictive in the sense that it requires the unknown distribution to be \emph{exactly} a member of the class $\mathcal{C}$ of interest. 
This may not be the case for a variety of possible reasons, including some innocuous and some malicious.
As one example, while it is a common modelling assumption to posit that data comes from a Gaussian distribution, Nature rarely samples exactly from Gaussians, we consider this only to be a convenient \emph{approximation}. More generally, the class $\mathcal{C}$ that is part of the learner's input 
can be thought of as reflecting some prior knowledge about the task at hand. Such prior knowledge is almost always only an approximation of reality.
Alternatively, we may be in an adversarial setting, where a malicious actor has the ability to modify an otherwise well-behaved distribution, say by injecting datapoints that are not i.i.d. samples from $p$, (known in the machine learning literature as data poisoning attacks~\cite{BiggioNL12,SteinhardtKL17,DiakonikolasKKLSS19,GoldblumTXCSSMLG20,GeipingFHCTMG21,LuKY23}).
%Alternatively, we may be in an adversarial setting, where a malicious actor has the ability to modify or ``poison'' some fraction of the dataset.

More formally, the classic problem of \emph{agnostic} learnability is generally described as follows: given a (known) class of distributions $\mathcal{C}$, and a (finite) set of samples drawn i.i.d.\ from some (unknown) distribution $p$, find a distribution $\hat p$ whose statistical distance  from $p$ is not much more than that of the closest member of $\mathcal{C}$. 
It is not hard to see that this is equivalent to a notion of \emph{robust} learnability, where the distribution $p$ is not viewed as arbitrary, but instead an adversarial corruption of some distribution within $\mathcal{C}$.\footnote{A related notion of robust learnability instead imagines the adversary modifies the \emph{samples} from a distribution in $\mathcal{C}$, rather than the distribution itself. This \emph{adaptive} model is discussed further in Section~\ref{sec:results}.}
Given their equivalence, throughout this work, we will use agnostic and robust learnability interchangeably.

 The difference between a robust setting and the previous realizable one is that now, instead of assuming $p \in \mathcal{C}$ and asking for an arbitrarily good approximation of $p$, 
 we make no prior assumption about the data-generating distribution and only ask to approximate as well as (or close to) what the best member of some ``benchmark'' class $\mathcal{C}$ can do.
 
 %we have that $p$ is $\eta$-close (which we have yet to define) to $\mathcal{C}$.

 %\oldgmargin{This definition is missing an ``agnostic constant,'' where we can only do up to a constant factor comparable to the best.} 

 %Let $d(\cdot, \cdot)$ be a notion of distance (or discrepancy)  between probability distributions.

 We address the following question: Assuming a class of distributions is learnable, under which notions of robustness is it guaranteed to be robustly learnable?
 We focus entirely on information-theoretic learnability, and eschew concerns of computational efficiency.
 Indeed, our question of interest is so broad that computationally efficient algorithms may be too much to hope for.
 %It is worthwhile at this point to recall that for many other commonly discussed learning scenarios, (realizable) learnability 
%Our focus is on whether distribution classes are information-theoretically learnable, rather than the computational complexity required for an algorithm to carry out such learning.\\

 We shall consider a few variants of robust learnability.
 Specifically, we will impose requirements on the nature of the difference between the data-generating distribution $p$ and members of the class $\mathcal{C}$. 
 Obviously,  such requirements can only make the task of robust learning easier. 
%We then also consider the tasks of overcoming some adversarial manipulations of the training data.

One such model that we consider is \emph{additive robustness}.
The underlying sample-generating distribution is restricted to be a mixture of a distribution $p$ from $\mathcal{C}$, and some arbitrary `contaminating' distribution $q$.
In the Statistics community, this model is known as \emph{Huber's contamination model}~\cite{Huber64}.\footnote{We note that our additive robustness definition differs slightly from Huber's contamination model. However the two models are equivalent up to a small factor. We discuss this in more detail when we define additive robust learning and give an alternative definition called Huber additive robustness. Our results hold for both additive robustness and Huber additive robustness.}
Analogously, one also consider \emph{subtractive robustness}. 
It includes the case where the starting point is a distribution in the class $\mathcal{C}$, but a fraction of the probability mass is removed and samples are drawn from the resulting distribution (after rescaling). 
These two models are related to adversaries who can add or remove points from a sampled dataset, see discussion at the end of Section~\ref{sec:results}.

A significant line of work focuses on understanding the sample complexity of agnostic distribution learning (see examples and discussion in Section~\ref{sec:related}).
Most study restricted classes of distributions, with analyses that are only applicable in certain classes.
Some works have found quantitative separations between the different robustness models.
For instance, for the specific class of Gaussian probability distributions \cite{DiakonikolasKKLMS16,LaiRV16, DiakonikolasKS17,DiakonikolasKKLMS18} give strong evidence that efficient algorithms for mean estimations can achieve better error if they are only required to be additively robust, rather than robust in general.
However, such findings are again restricted to specific cases, and say little about the overall relationship between learnability in general and these various notions of robust learnability.

Current results leave open a more comprehensive treatment of robustness in distribution learning. 
Specifically, what is the relative power of these different robustness models, and what is their impact on which types of distributions are learnable?
Are there more generic ways to design robust learning algorithms?

Our two main contributions are the following:
\begin{itemize}
    \item We give a generic algorithm which converts a learner for a distribution class into a learner for that class which is robust to additive corruptions.
    \item We show that there exist distribution classes which are learnable, but are no longer learnable after subtractive corruption.
\end{itemize}
Stated succinctly: we show that learnability implies robust learnability when an adversary can make additive corruptions, but not subtractive corruptions. This also implies that there are classes of distributions that can be learned in the realizable case, but cannot be learned agnostically.
Other results explore implications related to compression schemes and differentially private learnability.

\subsection{Definitions of Learnability}
\label{sec:defs}
In order to more precisely describe our results, we define various notions of learnability.
We start with the standard notion of PAC learnability for a distribution class. 
We get samples from a distribution $p$ belonging to a distribution class $\mathcal{C}$, and the goal is to output a distribution similar to $p$.
\begin{definition}[Learnability]
    We say that a class $\mathcal{C}$ of probability distributions
is \emph{learnable} (or, \emph{realizably learnable}) if there exists a learner $A $ and a function $n_{\mathcal{C}}: (0,1)^2 \to \mathbb{N}$, such that for every probability distribution $p \in \mathcal{C}$, and every $(\epsilon, \delta) \in (0,1)^2$,
for  $n\geq n_{\mathcal{C}}(\epsilon, \delta)$ the probability over samples $S$ of that size drawn i.i.d.\ from the distribution $p$ that $\dtv(A(S),p) \leq  \epsilon$ is at least $1 - \delta$.
\end{definition}

We next introduce the more challenging setting of robust, or agnostic, learning. 
In this setting, the sampled distribution is within bounded distance to the distribution class $\mathcal{C}$, rather than being in $\mathcal{C}$ itself.

  We will introduce a notion (more commonly considered in the agnostic learning literature) which does not assume the distance from the sampling distribution to be fixed, but which gives a bound that depends on that distance.

\begin{definition}[Robust learnability]\label{def:robust-learnability}
For $\alpha > 0 $, we say that a class $\mathcal{C}$ of probability distributions
is \emph{$\alpha$-robustly learnable} (also referred to as \emph{$\alpha$-agnostically learnable}) if there exists a learner $A $ and a function $n_{\mathcal{C}}: (0,1)^2 \to \mathbb{N}$, such that for every probability distribution $p$, and $(\epsilon, \delta) \in (0,1)^2$,
for  $n\geq n_{\mathcal{C}}(\epsilon, \delta)$ the probability over samples $S$ of that size drawn i.i.d.\ from the distribution $p$ that $\dtv(A(S),p) \leq \alpha\cdot\min\{\dtv(p,): p' \in \mathcal{C}\}+ \epsilon$
 is at least $1 - \delta$.
 %far from the class $\mathcal{C}$, 

 We say a class $\mathcal{C}$ is  robustly (or agnostically) learnable if there is $\alpha\in \mathbb{R}$ such that  $\alpha$-robustly learnable.\footnote{In other works robust or agnostic learning can also refer to the case where $\alpha=1$. In those cases what we call robust or agnostic learnability is referred to as semi-agnostic learning.}
 \end{definition}

Finally, we introduce notions of robust learnability which correspond to only additive or subtractive deviations from the distribution class $\mathcal{C}$.
These more stringent requirements than standard (realizable) learnability, but more lenient than general robust learnability: the adversary in that setting can deviate from the distribution class $\mathcal{C}$ with both additive and subtractive modifications simultaneously.

\begin{definition}[Additively robust learnability]
    We say that a class $\mathcal{C}$ of probability distributions is \emph{additive $\alpha$-robustly learnable} if there exists a learner $A $ and a function $n_{\mathcal{C}}: (0,1)^2 \to \mathbb{N}$, such that for every $\eta\in (0,1)$, every probability distribution $q$, every $p \in \mathcal{C}$, and $(\epsilon, \delta) \in (0,1)^2$, for  $n\geq n_{\mathcal{C}}(\epsilon, \delta)$ the probability over samples $S$ of that size drawn i.i.d.\ from the distribution $\eta q + (1-\eta) p$, that \remove{$\dtv(A(S), p) \leq \alpha\eta + \epsilon$} \suggestedversion{$\dtv(A(S), \eta q + (1-\eta)p) \leq \alpha\eta + \epsilon$} is at least $1 - \delta$.
    
    We say a class $\mathcal{C}$ of probability distributions is \emph{additively robust} learnable, if there is  $\alpha\in \mathbb{R}$, such that $\mathcal{C}$ is additively $\alpha$-robustly learnable.
%\end{enumerate}
\end{definition}

If the goal is to recover a distribution from the class $\mathcal{C}$ that was additively corrupted, the goal of learning is $\dtv(A(S), p) \leq \alpha\eta + \epsilon$, instead of $\dtv(A(S), \eta q + (1-\eta) p) \leq \alpha\eta + \epsilon$. We refer to learning with respect to this learning alternative goal as \emph{Huber $\alpha$-robust learnability}. It is clear that those notions only differ by $\eta$, i.e., $\alpha$-robust learnability implies Huber $(\alpha+1)$-robust learnability and Huber $\alpha$-robust learnability implies $(\alpha+1)$-robust learnability. 

\begin{definition}[Subtractive robust learnability]
We say that a class $\mathcal{C}$ of probability distributions is \emph{subtractive $\alpha$-robustly learnable} if there exists a learner $A $ and a function $n_{\mathcal{C}}: (0,1)^2 \to \mathbb{N}$, such that for every corruption level $\eta\in (0,1)$ , every probability distribution $p$ for which there exists a probability distribution $q$ such that $\eta q + (1-\eta)p \in \mathcal{C}$, and for every $(\epsilon, \delta) \in (0,1)^2$, for $n \geq n_{\mathcal{C}}(\varepsilon,\delta)$ the probability over samples $S$ of that size drawn i.i.d.\ from the distribution $p$, that $\dtv(A(S), p) \leq \alpha\eta + \epsilon$ is at least $1-\delta$.\\
We say a class of distributions $\mathcal{C}$ is \emph{subtractive robustly learnable}, if there exists $\alpha\in \mathbb{R}$, such that $\mathcal{C}$ is subtractive $\alpha$-robustly learnable.

%\end{enumerate}

We note, that by definition, it directly follows that for every $\alpha$:
\begin{itemize}
    \item If a class of probability distribution $\mathcal{C}$ is $\alpha$-robustly learnable, then it is also additive $\alpha$-robustly learnable and subtractive $\alpha$-robustly learnable
    \item If a class of probability distribution $\mathcal{C}$ is \emph{additively} $\alpha$-robustly learnable, then it is also learnable in the realizable case. 
   \item  If a class of probability distribution $\mathcal{C}$ is \emph{subtractive} $\alpha$-robustly learnable, then it is learnable in the realizable case). 
\end{itemize}
The remaining, non-trivially implications between these notions are the subject of the paper. We will show that realizable learnability does not imply subtractively robust learnability.
Furthermore, we will establish an equivalence between realizable and additive robust learnability, as well as an equivalence between agnostic and subtractive learning.

%with probability at least $1 - \delta$.
%\begin{enumerate}
%\item  for every probability distribution $p \in \mathcal{C}$, every domain subset $W$ such that $p(W) \geq (1-\eta)$ and every $(\epsilon, \delta) \in (0,1)^2$, with probability $\geq (1-\delta)$ over samples $S$ of size $\geq m_{\mathcal{C}}(\epsilon, \delta)$ drawn i.i.d. from the distribution $p|W$, $TV(A(S), p) \leq \eta + \epsilon$. (Where $p|W$ denotes that probability distribution $p$ projected to $W$).
%\end{enumerate}
% Fix some parameter $0 \leq \eta \leq 1$.
% Suppose there exists some distribution $p$.
% Sampling from $p$ with \emph{$\eta$-subtractive contamination} corresponds to the following process.
% An adversary picks an arbitrary event $S$ (which may be a function of $\eta$ and $p$) which occurs with probability at least $1-\eta$.
% Independent samples are then drawn from the distribution of $p$ conditioned on $S$.
\end{definition}

\subsection{Results and Techniques}
\label{sec:results}

We explore how different robustness models affect learnability of distributions, showing strong separations between them.

Our main result shows that a distribution class being learnable does \emph{not} imply that it is robustly or agnostically learnable.

\begin{theorem}[Informal]\label{thm:informal:realizable-/->robust}
   There exists a class that is realizably learnable, but not agnostically/robustly learnable.
\end{theorem}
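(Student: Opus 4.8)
The plan is to exhibit an explicit distribution class $\mathcal C$ over a countable domain, prove directly that it is realizably learnable, and then prove that it is not $\alpha$-robustly learnable for any $\alpha$. Since the excerpt announces the equivalence between agnostic and subtractive-robust learnability, it in fact suffices to show that $\mathcal C$ is realizably learnable but \emph{not subtractively robustly learnable}; I will keep both framings in mind, but the subtractive lens is cleaner because there the corrupting distribution interacts with $\mathcal C$ in a very controlled way (the observed $p$ is a renormalized restriction of some $p_0\in\mathcal C$).

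The construction I would use is an ``encoding with a hint'' class. Each member of $\mathcal C$ is built from two pieces: a small-probability \emph{identifying part} (a hint), and a large-probability \emph{payload part}. The hint is arranged so that, against the learner's prior knowledge of $\mathcal C$, observing it in the sample pins down which member one is looking at; the payload is chosen from a family rich enough that the payload of a member \emph{cannot} be approximated from any uniformly bounded sample on its own (for instance, payloads that behave like uniform distributions over arbitrarily large, essentially arbitrary sets). Crucially, across $\mathcal C$ the hint is allowed to become arbitrarily weak (small mass), with the payloads correspondingly calibrated so that realizable learnability is nonetheless retained — e.g.\ members with a weaker hint have a payload that is still rich but whose contribution to $\dtv$-error is within the target accuracy, so that ``read the hint, then look up and output the member'' is a correct learner.

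Step 1 (realizable learnability): give the learner that draws enough samples to see the hint with probability $1-\delta$, decodes the member it identifies, and outputs that member (or, on the members whose hint is too weak to be seen with the allotted budget, outputs the skeleton plus garbage on the low-mass payload, which costs at most $\epsilon$). One checks the required number of draws is a finite function of $\epsilon,\delta$ uniform over $\mathcal C$.

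Step 2 (agnostic non-learnability): fix $\alpha$, a claimed $\alpha$-robust learner $A$ with sample function $n(\epsilon,\delta)$, and choose $\epsilon,\delta$. I would pick a member $p_0\in\mathcal C$ whose hint has mass $\eta$ tuned well below $1/\alpha$, and consider the subtractive corruption that deletes (or scrambles) exactly this hint, producing $p$ with $\dtv(p,p_0)\le\eta$, hence $\min_{p'\in\mathcal C}\dtv(p,p')\le\eta$. On the sample, $p$ is then indistinguishable (formally: total variation $\le 1-2\delta$ between the $n$-fold product measures) from a second corrupted distribution $p'$ obtained the same way from a member whose payload is far from that of $p_0$ but whose hint and sample-visible structure agree. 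A two-point (Le Cam) argument then forces $\dtv(A(S),\cdot)$ to exceed $\approx\tfrac12\dtv(p,p')$ on one of $p,p'$ with probability $>\delta$; since $\dtv(p,p')$ is a constant dictated by the payloads while $\eta$ can be taken arbitrarily small, this beats $\alpha\eta+\epsilon$. As this works for every $\alpha$, $\mathcal C$ is not robustly learnable, which also yields Theorem~\ref{thm:informal:realizable-/->robust}'s claim that realizable learnability does not imply agnostic learnability.

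The main obstacle — and where essentially all the real work lies — is making Steps 1 and 2 coexist. Step 2 wants the corruption budget $\eta$ (the hint mass) to go to $0$, and it wants the payload to remain un-approximable by the particular $n(\epsilon,\delta)$-sample learner even after $\epsilon$ is chosen; but Step 1 needs a \emph{single} sample bound that decodes every member's hint, and realizable learnability must genuinely hold despite the class being rich enough (rich payloads) to defeat agnostic learners. Reconciling these requires carefully calibrating how payload richness grows relative to hint weakness, and checking that the residual distribution after corruption is simultaneously close to $\mathcal C$ and information-theoretically inapproximable; this balancing act is the crux of the proof, and also the reason the analogous statement fails for additive corruption, where no such hint-erasure is available to the adversary.
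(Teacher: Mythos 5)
Your high-level construction coincides with the paper's: a class $\mathcal{Q}_g$ whose members carry a tiny ``hint'' point mass that identifies them, a large ``skeleton'' mass at a common point $(0,0)$, and a rich ``payload'' (a uniform distribution over an arbitrary finite set), with the realizable learner reading the hint when it appears and otherwise outputting the skeleton, and with the subtractive adversary deleting the hint. That part of your plan is exactly the paper's and your Step~1 analysis goes through.

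The genuine gap is in Step~2: a two-point Le Cam argument is not strong enough here, and the reason is the $+\epsilon$ slack in the definition of $\alpha$-robust learnability. After the hint is erased, both candidate distributions have the form $(1-\gamma)\delta_{(0,0)}+\gamma\, U_{A}$ for finite $A$. If you want a \emph{single pair} $p,p'$ of this form to satisfy $\dtv(p^{n},p'^{n})\leq 1-2\delta$, then every draw must fail to distinguish them with probability $\geq (2\delta)^{1/n}$, which forces the mass on the symmetric difference $A\triangle A'$ to be at most about $\log(1/\delta)/n$; consequently $\dtv(p,p')=O(1/n)$ up to the $\log(1/\delta)$ factor. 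But in the contradiction you are trying to derive, the learner has already fixed $\epsilon,\delta$ and hence $n=n_{\mathcal{C}}(\epsilon,\delta)$ \emph{before} you pick your pair, and the guarantee it must violate is $\dtv(A(S),p)\leq\alpha\eta+\epsilon$. Nothing forces $n_{\mathcal{C}}(\epsilon,\delta)<1/\epsilon$; it could be $1/\epsilon^{1000}$, in which case the best two-point separation $\dtv(p,p')/2 = O(1/n)$ is already far below $\epsilon$, and no contradiction follows no matter how small you make the hint mass $\eta$. So ``$\dtv(p,p')$ is a constant dictated by the payloads'' and ``hint and sample-visible structure agree'' cannot both hold once $n$ is allowed to be large; for a literal pair they trade off as $1/n$.

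What is actually needed, and what the paper uses, is a \emph{many-point} lower bound: Lemma~3 of Lechner--Ben-David shows that the family $\mathcal{P}_{\gamma,4k,j}=\{(1-\gamma)\delta_{(0,0)}+\gamma\, U_{A\times\{2j+1\}}\;:\;A\subset[4k]\}$ requires at least $k$ samples to learn to accuracy $\gamma/8$ (with constant failure probability), and this holds \emph{for every} $k$. Because $\mathcal{Q}_g$ contains the stripped versions of $\mathcal{P}_{\gamma,4k,j}$ for all $k$ simultaneously, the adversary can always pick $k$ larger than $n_{\mathcal{C}}(\epsilon,\delta)$ after the learner has committed, keeping the target accuracy $\gamma/8\approx 1/(8j)$ \emph{fixed} (independent of $n$) while pushing the sample requirement past $n$. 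It is precisely this Fano/packing-style argument over an unboundedly growing family, not a pairwise indistinguishability bound, that lets one beat the $+\epsilon$ term for every choice of $\epsilon$. If you replace your invocation of Le Cam by this packing lemma (or reprove it via a Fano-type or Assouad-type bound over $2^{[4k]}$), the rest of your outline -- including the calibration $\eta=1/g(j)$ with $g$ superlinear so that $\alpha\eta+\epsilon<\gamma/8$ -- matches the paper's proof.
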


This result is an immediate corollary of a slightly stronger result that shows that learnability does \emph{not} imply subtractive robust learnability. 

\begin{theorem}[Informal]\label{thm:informal:realizable-/->subtractive}
   There exists a class that is realizably learnable, but not subtractive robustly learnable.
\end{theorem}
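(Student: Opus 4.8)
The approach is to build a class $\mathcal{C}$ on a countable domain, each of whose members carries a low--weight ``name tag'' element that an uncorrupted learner can simply read off, but that a small subtractive corruption can delete; once the tag is gone, the remaining distribution is an \emph{unlabelled} uniform distribution on a large set, and the family of such distributions is information-theoretically unlearnable with a uniform sample bound. Concretely: let the domain be $\mathbb{N}\sqcup B$ with $B$ a disjoint copy of $\mathbb{N}$, let $\mathcal{T}=\{T: T\subseteq\{1,\dots,2m\},\ |T|=m,\ m\ge 1\}$, fix an injection $c:\mathcal{T}\to B$, fix a weight $\gamma\in(0,1)$, and set $p_T:=(1-\gamma)\,\mathrm{Unif}(T)+\gamma\,\delta_{c(T)}$ and $\mathcal{C}_\gamma:=\{p_T:T\in\mathcal{T}\}$. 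Realizable learnability of $\mathcal{C}_\gamma$ is immediate: the learner draws $n$ samples; if one lands in $B$ it must equal $c(T)$, which reveals $T$ and hence $p_T$ exactly, so outputting $p_T$ has error $0$; otherwise it outputs anything. Since a $B$-sample appears independently with probability $\gamma$, taking $n\ge \gamma^{-1}\ln(1/\delta)$ gives failure probability at most $\delta$, so $n_{\mathcal{C}_\gamma}(\epsilon,\delta)=\lceil\gamma^{-1}\ln(1/\delta)\rceil$ works, independently of $\epsilon$.

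\textbf{The subtractive corruptions and their hardness.} For every $T\in\mathcal{T}$ we have $p_T=\gamma\,\delta_{c(T)}+(1-\gamma)\,\mathrm{Unif}(T)$, so with $\eta=\gamma$ and $q=\delta_{c(T)}$ the distribution $\eta q+(1-\eta)\,\mathrm{Unif}(T)=p_T\in\mathcal{C}_\gamma$; hence $\mathrm{Unif}(T)$ is a legitimate $\gamma$-subtractive corruption of a member of $\mathcal{C}_\gamma$. The next step is to show the family $\{\mathrm{Unif}(T):T\in\mathcal{T}\}$ has no learner with a uniform sample bound, even to constant accuracy. Fix a learner $A$ and a budget $N$; draw $T$ uniformly among the $m$-subsets of $[2m]$ with $m\ge 2N$, then $S\sim\mathrm{Unif}(T)^{\otimes N}$. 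Using $\dtv(A(S),\mathrm{Unif}(T))\ge\sum_{x\in T}(\tfrac1m-A(S)(x))^+$ together with the fact that, conditioned on $S$, the posterior on $T$ is uniform over $m$-subsets of $[2m]$ containing the $\le N$ observed points (so $\Pr[x\in T\mid S]\ge\tfrac{m-N}{2m-N}\ge\tfrac14$ for every $x\in[2m]$), one gets $\mathbb{E}_{T,S}\dtv(A(S),\mathrm{Unif}(T))\ge\tfrac14\sum_{x\in[2m]}(\tfrac1m-A(S)(x))^+\ge\tfrac14$; averaging, some $T^\star$ satisfies $\Pr_S[\dtv(A(S),\mathrm{Unif}(T^\star))\ge\tfrac18]\ge\tfrac18$.

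\textbf{Putting it together for a fixed range of $\alpha$.} Suppose $\mathcal{C}_\gamma$ were subtractive $\alpha$-robustly learnable via $(A,n_{\mathcal{C}})$. Apply the definition with $\epsilon=\gamma$, $\delta=\tfrac1{16}$, $\eta=\gamma$, and $N=n_{\mathcal{C}}(\gamma,\tfrac1{16})$: since $\mathrm{Unif}(T^\star)$ (for the $T^\star$ above, on any block with $m\ge 2N$) is a valid $\gamma$-subtractive corruption, the guarantee forces $\Pr_S[\dtv(A(S),\mathrm{Unif}(T^\star))\le(\alpha+1)\gamma]\ge\tfrac{15}{16}$, contradicting $\Pr_S[\dtv\ge\tfrac18]\ge\tfrac18$ as soon as $(\alpha+1)\gamma<\tfrac18$. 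Thus whenever $\gamma<\tfrac1{8(\alpha+1)}$, the class $\mathcal{C}_\gamma$ is realizably learnable but not subtractive $\alpha$-robustly learnable, which already yields the informal statement for any prescribed $\alpha$.

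\textbf{The main obstacle: one class for all $\alpha$.} Because $\gamma$ had to be chosen relative to $\alpha$, obtaining a \emph{single} class that is realizably learnable but \emph{not} subtractive $\alpha$-robustly learnable for \emph{every} $\alpha$ requires more care; this is the step I expect to be delicate. The natural attempt is $\mathcal{C}:=\bigsqcup_{k\ge 1}\mathcal{C}_{2^{-k}}$ on pairwise-disjoint copies of the domain: non-robustness is then trivial (given $\alpha$, run the argument above inside a block with $2^{-k}<\tfrac1{8(\alpha+1)}$), but re-establishing realizable learnability of the union is the crux, since a learner facing the block $\mathcal{C}_{2^{-k}}$ identifies $k$ from one sample yet must then read a name tag of weight $2^{-k}$, which naively costs $\approx 2^k$ samples with no a priori bound. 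Resolving this requires arranging the bodies so that, for any target accuracy $\epsilon$, only the finitely many blocks with $2^{-k}\gtrsim\epsilon$ need their tag read while all other blocks admit an $\epsilon$-accurate ``default'' output -- while simultaneously keeping every post-deletion distribution hard, so that the contradiction above still fires for arbitrarily small $\eta$. Balancing these two requirements (easy realizable learning versus hardness after subtraction, at every scale) is the heart of the construction and the part of the proof that needs to be done carefully rather than by the routine argument sketched above.
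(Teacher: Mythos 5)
Your construction and the paper's share the same basic mechanism: append a low-mass ``name tag'' to each hard distribution, so that seeing the tag identifies the distribution exactly (realizable learnability), while subtractive corruption can erase it (non-robustness). Your fixed-$\gamma$ argument correctly establishes that for each $\alpha$ there is a class $\mathcal{C}_\gamma$ separating the two notions, and your information-theoretic lower bound for $\{\mathrm{Unif}(T)\}$ (replacing the paper's citation to Lemma~3 of \cite{lechner2023impossibility}) is a perfectly serviceable substitute for the paper's Lemma~\ref{lemma:lowerbound}.

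However, there is a genuine gap, and you have honestly flagged it yourself in the final paragraph. The theorem requires a \emph{single} class that is realizably learnable but not subtractive $\alpha$-robustly learnable for \emph{every} $\alpha$, and you stop at the fixed-$\alpha$ version. Worse, the union $\bigsqcup_k \mathcal{C}_{2^{-k}}$ you sketch cannot be repaired by the default-output idea you propose, because in your construction the \emph{body} $\mathrm{Unif}(T)$ always carries mass $1-2^{-k}\approx 1$: a distribution in block $k$ is never close to any fixed default distribution, so a realizable learner really does need to read a $2^{-k}$-mass tag, and the sample complexity blows up. The paper's fix is exactly what you intuit, but requires \emph{also} shrinking the body: in $q_{i,j,g(j)}$ the body has mass $\approx 1/j$ and the tag has mass $1/g(j)$ with $g$ superlinear. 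Then (i) for $j>1/\epsilon$ the whole distribution is within $\epsilon$ of a fixed default $\delta_{(0,0)}$, so only finitely many blocks ($j\le 1/\epsilon$) require reading a tag of mass at least $1/g(1/\epsilon)$, giving the stated $\log(1/\delta)\,g(1/\epsilon)$ realizable sample complexity; and (ii) after deleting the tag, the residual class is hard at accuracy $\Theta(1/j)$, while the corruption level is $\eta = 1/g(j) = o(1/j)$, so for any $\alpha$ one can choose $j$ with $\alpha\eta+\epsilon < 1/(8j)$. It is precisely this decoupling of ``error scale'' $1/j$ from ``corruption scale'' $1/g(j)$, enforced by the superlinearity of $g$, that kills the argument for every $\alpha$ with one class; your construction ties the two scales together (error $\approx\tfrac18$, corruption $\gamma$) and therefore cannot be made uniform in $\alpha$ without this extra ingredient.
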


Our proof of this theorem proceeds by constructing a class of distributions that is learnable, but classes obtained by subtracting light-weight
parts of these distributions are not $\alpha$-robustly learnable with respect to the original learnable class. 
More concretely, 
our construction works as follows.
We start by considering a distribution class that, by itself, is not learnable with any finite number of samples. 
We map each distribution in that class to a new distribution, which additionally features a point with non-trivial mass that ``encodes'' the identity of the distribution, thus creating a new class of distributions which \emph{is} learnable. 
Subtractive contamination is then able to ``erase'' this point mass, leaving a learner with sample access only to the original (unlearnable) class.
Our construction is inspired by the recent construction of Lechner and Ben-David~\cite{lechner2023impossibility}, showing that the learnability of classes of probability distributions cannot be characterized by any notion of combinatorial dimension. 
For more details, see Section~\ref{sec:subtractive-adversary}.

Our other main result shows in contrast to the general and subtractive case, if we restrict the adversary to only additive corruptions, realizabile learnability is indeed sufficient to guarantee robustness. That is realizability learnability indeed implies additively robust learnability.

%Some of our primary findings include that learnability implies \emph{additively} robust learnability, but not \emph{subtractively} robust learnability.

\begin{theorem}[Informal]
\label{thm:informal:additive}
Any class of probability distributions $\mathcal{Q}$ which is realizably learnable, is also additively robustly learnable.
\end{theorem}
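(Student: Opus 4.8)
The plan is to reduce additively robust learning for $\mathcal{Q}$ to two standard ingredients: using the given realizable learner $A$ (with sample-complexity function $n_{\mathcal{Q}}$) as a \emph{generator of candidate hypotheses}, at least one of which is close to the clean component $p$; and then a Scheffé-tournament / minimum-distance selection step that, from a finite list of candidates, picks one that is close to the \emph{observed} mixture $p' := \eta q + (1-\eta)p$. Fix target accuracy $\epsilon$ and confidence $\delta$. I would first dispose of the easy regime: since total variation distance never exceeds $1$, if $\eta \ge 1/3$ then the required bound $3\eta + \epsilon$ already exceeds $1$ and is met by \emph{any} output, so I may assume $\eta < 1/3$ throughout the analysis — while the algorithm itself never needs to know $\eta$.

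For candidate generation, set $\epsilon_0 := \epsilon/6$ and $m := n_{\mathcal{Q}}(\epsilon_0, 1/3)$, a fixed finite number depending only on $\epsilon$. I would split (part of) the sample into $T$ disjoint batches of size $m$ and feed each batch to $A$, obtaining candidates $h_1,\dots,h_T$. The key observation is that a single draw from $p'$ is an honest draw from $p$ with probability $1-\eta$, so an entire batch of $m$ points consists of honest draws from $p$ with probability $(1-\eta)^m \ge (2/3)^m$; conditioned on that event the batch is distributed exactly as $p^{\otimes m}$, so $A$ returns an $\epsilon_0$-accurate estimate of $p$ with probability at least $2/3$. Hence each $h_i$ satisfies $\dtv(h_i, p) \le \epsilon_0$ with probability at least the constant $c := \tfrac{2}{3}(2/3)^m > 0$, which is crucially independent of $\eta, p, q$. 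Taking $T = \lceil \ln(2/\delta)/c\rceil$ then guarantees, with probability at least $1-\delta/2$, that some index $i^\star$ has $\dtv(h_{i^\star}, p)\le \epsilon_0$, and therefore $\dtv(h_{i^\star}, p') \le \epsilon_0 + \dtv(p,p') \le \epsilon_0 + \eta$, using $\dtv(p,p') = \eta\,\dtv(q,p) \le \eta$.

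Next I would draw a fresh block of $O(\log(T)/\epsilon_0^2)$ samples from $p'$ and run the standard minimum-distance estimator over $\{h_1,\dots,h_T\}$ (as in, e.g., \cite{DevroyeL01}), which with probability at least $1-\delta/2$ outputs $\hat h$ with $\dtv(\hat h, p') \le 3\min_i \dtv(h_i,p') + \epsilon/2 \le 3(\epsilon_0+\eta) + \epsilon/2 = 3\eta + \epsilon$ (the precise constant $3$ here is not important). A union bound over the two failure events yields the theorem with $\alpha = 3$; the total number of samples used, $Tm + O(\log(T)/\epsilon_0^2)$, is a finite function of $(\epsilon,\delta)$ alone, as the definition demands. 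The Huber variant, with goal $\dtv(A(S),p)\le \alpha\eta+\epsilon$, then follows for free since $\dtv(\hat h, p) \le \dtv(\hat h, p') + \eta \le 4\eta + \epsilon$.

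The main obstacle, and where the argument does real work, is that $A$ is only guaranteed to behave on honest i.i.d.\ samples from members of $\mathcal{Q}$, and we have no mechanism to distinguish honest samples from contamination. The resolution is to lower $A$'s confidence requirement to a fixed constant so that the batch size $m$ becomes a constant and an \emph{entire} batch is uncorrupted with constant probability $(1-\eta)^m$ — which is precisely what makes the ``repeat many times and then select'' strategy viable (for $\eta$ bounded away from $1$, with the remaining range of $\eta$ being the trivial regime). The only other care needed is constant-bookkeeping: ensuring that a candidate close to the clean $p$ remains close to the observed mixture $p'$, losing only an additive $\eta$, which is what keeps the multiplicative constant $\alpha$ finite.
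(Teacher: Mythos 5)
Your argument is correct and proves the theorem, but it takes a genuinely different route than the paper, and it is worth noting the trade-offs. The paper's construction (Section~\ref{sec:strong-additive}) splits the sample once into $S_1, S_2$, and feeds \emph{every} subset of $S_1$ to the realizable learner, using a Chernoff bound to guarantee that at least one subset $S_1' \subseteq S_1$ of size $\geq (1-\eta-2\epsilon/9)\,n_1$ consists entirely of clean draws from $p$; the resulting $2^{n_1}$ candidates are then ranked by a Yatracos/Scheff\'e selection on $S_2$. Because candidates share the same $n_1$ samples, the total sample complexity stays polynomial in $n_{\mathcal{Q}}^{re}(\epsilon/9,\delta/5)$, on the order of $n_{\mathcal{Q}}^{re}/\epsilon^2$. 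Your approach instead uses $T$ \emph{disjoint} batches of size $m = n_{\mathcal{Q}}(\epsilon/6,1/3)$ and exploits the fact that each batch is entirely uncontaminated with probability $(1-\eta)^m \geq (2/3)^m$ once $\eta < 1/3$, the complementary regime being trivial since the target $3\eta+\epsilon$ exceeds $1$. This avoids the Chernoff argument and the subset enumeration, and it makes the ``some candidate is good'' step an elementary independence calculation, but it costs you: to push the per-batch success probability $c = \tfrac{2}{3}(2/3)^m$ up to $1-\delta/2$ you need $T = \Theta(\log(1/\delta)/c)$ batches, each with fresh samples, so your total sample complexity $Tm + O(\log T/\epsilon^2)$ is \emph{exponential} in $m$ rather than polynomial. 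For the qualitative theorem (a finite $n_{\mathcal C}(\epsilon,\delta)$ exists) both suffice. One further minor difference in bookkeeping: the paper first shows $\dtv(\hat q,p)\le 2\eta+\epsilon$ (Huber $\alpha=2$) and then derives additive $\alpha=3$ by the triangle inequality, while you go in the opposite order, bounding $\dtv(\hat h,p')\le 3\eta+\epsilon$ via Scheff\'e against the mixture $p'$ and then getting Huber $\alpha=4$ by the triangle inequality. You thus match the paper's additive constant $\alpha=3$ but obtain a slightly worse Huber constant.
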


%\tosca{todo: make theorem above into informal theorem}

Note that, since additively robust learnability trivially implies learnability, this shows an \emph{equivalence} between learnability and additively robust learnability. 

Our algorithm enumerates over all subsets of the dataset of an appropriate size, such that at least one subset contains no samples from the contaminating distribution.
A realizable learner is applied to each subset, and techniques from hypothesis selection~\cite{Yatracos85,DevroyeL96,DevroyeL97,DevroyeL01} are used to pick the best of the learned distributions.
Further details appear in Section~\ref{sec:additive-adversary}.
%The full proof is deferred to the supplementary material section.

%Consequently, it may be easier to prove implications of learnability by instead looking at the (seemingly more restrictive) class of distributions which are robustly learnable under additive contamination.

We also note that since our robust learning algorithm enumerates all large subsets of the training dataset, it is \emph{not} computationally efficient.
Indeed, for such a broad characterization, this would be too much to ask.
Efficient algorithms for robust learnability are an exciting and active field of study, but outside the scope of this work. For further discussion see Section~\ref{sec:related}.

%\amargin{probably can go in related work}

Thus far, we have shown that additive and realizable learnability are equivalent and that both subtractive and general robust learnability are both not implied by realizable learnability. It remains to settle the question whether subtractive and general robust learnability are fundamentally different.
General robustness, where probability mass can be both added \emph{and} removed, is more powerful than either model individually.
However, if a class is additive robustly learnable \emph{and} subtractive robustly learnable, is it robustly learnable?
Though this is intuitively true, we are not aware of an immediate proof.
Using a similar argument as Theorem~\ref{thm:informal:additive}, we derive a stronger statement: that subtractively robust learnability  implies robust learnability.
\begin{theorem}[Informal]\label{thm:informal:general}
    If a class $\mathcal{C}$ is subtractively robustly learnable, then it is also robustly or agnostically learnable. 
\end{theorem}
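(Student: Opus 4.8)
The plan is to re-run the reduction behind Theorem~\ref{thm:informal:additive}, but feeding it the \emph{subtractive} learner rather than a realizable one. Fix an arbitrary sampling distribution $p$ and set $\beta := \inf\{\dtv(p,p') : p' \in \mathcal{C}\}$; pick $p^\ast \in \mathcal{C}$ with $\dtv(p,p^\ast) \le \beta + \epsilon$. Using the standard decomposition of total variation distance, write $p = (1-\beta')r + \beta' b$ and $p^\ast = (1-\beta')r + \beta' a$, where $\beta' = \dtv(p,p^\ast)$, $r$ is the normalized pointwise minimum of $p$ and $p^\ast$, and $a,b$ are the normalized positive and negative parts of $p^\ast - p$. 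The two structural facts I will use are: (i) $r$ is exactly a $\beta'$-subtractive corruption of $p^\ast \in \mathcal{C}$ (add back the mass $\beta' a$ and renormalize), so it falls under the subtractive robust learnability guarantee with corruption level $\beta'$; and (ii) $\dtv(p,r) = \beta'\dtv(b,r) \le \beta'$, and moreover an i.i.d.\ sample from $p$ is, component by component, an i.i.d.\ sample from $r$ with probability $1-\beta'$ and junk with probability $\beta'$ — i.e.\ a sample from $p$ is an additively corrupted sample from $r$.

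Given this, let $A_0$ be a subtractive $\alpha_0$-robust learner for $\mathcal{C}$, and let $n_0 = n_{\mathcal{C}}(\epsilon,\delta)$ be its sample complexity. On input $S$ of size $n$ drawn i.i.d.\ from $p$, the algorithm runs $A_0(T)$ for \emph{every} subset $T \subseteq S$ (of every size), collecting the outputs into a finite hypothesis set $\mathcal{H}$. Let $T^\star \subseteq S$ be the (random) sub-multiset of samples that came from the $r$-part of $p$. Conditioned on its realized index set, $T^\star$ is an i.i.d.\ sample from $r$; by a Chernoff bound, when $(1-\beta')n \ge n_0 + \Theta(\sqrt{n\log(1/\delta)})$ we have $|T^\star| \ge n_0$ with probability $\ge 1-\delta$, and then $A_0(T^\star)$ returns some $\hat r$ with $\dtv(\hat r, r) \le \alpha_0\beta' + \epsilon$ — hence $\dtv(\hat r, p) \le (\alpha_0+1)\beta' + \epsilon$ — with probability $\ge 1-\delta$. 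Note that $\hat r \in \mathcal{H}$ and that I never need $A_0$ to succeed on the \emph{other} subsets, so no union bound over $\mathcal{H}$ is required here. Finally, a standard hypothesis selection routine (a Scheff\'e/Yatracos tournament~\cite{Yatracos85,DevroyeL01}) run with a fresh batch $S'$ of $O\!\big((\log|\mathcal{H}| + \log(1/\delta))/\epsilon^2\big)$ samples from $p$ outputs $\hat p \in \mathcal{H}$ with $\dtv(\hat p, p) \le 3\min_{h\in\mathcal{H}}\dtv(h,p) + \epsilon \le 3\big((\alpha_0+1)\beta' + \epsilon\big) + \epsilon$; since $\beta' \le \beta + \epsilon$, rescaling $\epsilon$ shows $\mathcal{C}$ is $\alpha$-robustly learnable with $\alpha = 3(\alpha_0+1)$.

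The point that needs care — exactly as in the proof of Theorem~\ref{thm:informal:additive} — is bounding the sample complexity, since $|\mathcal{H}|$ can be as large as $2^n$. The resolution is to split into two regimes by a constant threshold $\beta_0$ (here $\beta_0 = \tfrac12$). If $\beta \ge \beta_0$, the target error $\alpha\beta + \epsilon$ already exceeds $1$ once $\alpha \ge 2$, so any valid distribution is a correct output and there is nothing to prove. In the complementary regime $\beta < \beta_0$ the clean fraction $1-\beta'$ is bounded below by a constant, so taking $n = \Theta(n_0 + \log(1/\delta))$ makes the event $|T^\star|\ge n_0$ hold with the desired probability; and because $\log|\mathcal{H}| \le n\log 2 = O(n_0 + \log(1/\delta))$, the selection step needs only $O((n_0 + \log(1/\delta))/\epsilon^2)$ additional samples. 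The total is a finite function of $(\epsilon,\delta)$ alone, as required, and $\alpha = \max\{2,\,3(\alpha_0+1)\} = 3(\alpha_0+1)$ works throughout.

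The main obstacle is thus not conceptual but this bookkeeping: one must verify (a) that the subtractive guarantee genuinely applies to $r$ with exactly the corruption level dictated by the decomposition (so that the error in terms of $\beta'$ is the one we claim), (b) that the two-regime argument covers all corruption levels uniformly without the learner knowing $\beta$, and (c) that the fresh-sample requirement of hypothesis selection over the exponentially large $\mathcal{H}$ stays bounded — all of which are handled precisely as in the additive case and can be imported essentially verbatim.
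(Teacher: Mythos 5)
Your proposal is correct and follows essentially the same strategy as the paper's proof in Section~\ref{sec:general}: decompose $p$ and its nearest member $p^\ast$ of $\mathcal{C}$ via their common part $r$ (the paper's $q_1$), observe that $r$ is simultaneously a subtractive corruption of $p^\ast$ and that a sample from $p$ is an additive corruption of $r$, then enumerate subsets, feed each to the subtractive learner, and finish with a Yatracos/Scheff\'e selection step. The only real differences are cosmetic: you spell out the two-regime argument that caps the clean fraction $1-\beta'$ away from zero (the paper handles this implicitly by restricting $\eta$ to a bounded range), and you arrive at the constant $3(\alpha_0+1)$ whereas the paper's slightly sharper chaining through $q_1$ rather than $p$ in the Yatracos bound yields $2\alpha_0+4$ — both are fine since one only needs some finite $\alpha$.
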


Adjacent to distribution learning is the notion of \emph{sample compression schemes}.
Recent work by Ashtiani, Ben-David, Harvey, Liaw, Mehrabian, and Plan~\cite{AshtianiBHLMP18} expanded notions of sample compression schemes to apply to the task of learning probability distributions. They showed that the existence of such sample compression schemes for a class of distributions imply the learnability of that class. While the existence of sample compression schemes for classification tasks imply the existence of such schemes for robust leaning, the question if similar implication hold for distribution learning schemes was not answered.
%the question of whether existence of a compression scheme implies existence of a robust compression scheme.\amargin{I don't agree with the wording of left open}
We strongly refute this statement. We use a construction similar to that of Theorem~\ref{thm:realizable-/->subtractive}, see Section~\ref{sec:compression} for more details.
\begin{theorem}[Informal]
\label{thm:compression}
    The existence of compression schemes for a class of probability distributions does not imply the existence of robust compression schemes for that class.
\end{theorem}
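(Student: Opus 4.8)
\textbf{Proof proposal for Theorem~\ref{thm:compression}.}
The plan is to reuse the class $\mathcal{C}$ built for the subtractive separation (the formal version of Theorem~\ref{thm:informal:realizable-/->subtractive}), exhibit an explicit realizable distribution compression scheme for it, and then show that a \emph{robust} compression scheme for $\mathcal{C}$ would yield a robust learner for $\mathcal{C}$, contradicting Theorem~\ref{thm:informal:realizable-/->robust}. We work with the distribution compression framework of Ashtiani, Ben-David, Harvey, Liaw, Mehrabian, and Plan~\cite{AshtianiBHLMP18}: $\mathcal{C}$ admits a compression scheme of size $(t(\epsilon),b(\epsilon))$ with sample size $m(\epsilon)$ if there is a decoder $\mathcal{J}$ so that for every $p\in\mathcal{C}$, with constant probability (say $2/3$) over $S\sim p^{m(\epsilon)}$ there is a subsequence $S'\subseteq S$ of length at most $t(\epsilon)$ and a string $b\in\{0,1\}^{\le b(\epsilon)}$ with $\dtv(\mathcal{J}(S',b),p)\le\epsilon$. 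The \emph{robust} (agnostic) analogue asks the same of the decoder, with the error bound $\alpha\cdot\min_{p'\in\mathcal{C}}\dtv(p,p')+\epsilon$, for \emph{every} distribution $p$ and with $S$ drawn i.i.d.\ from $p$; one gets additive and subtractive variants by restricting $p$ to the corresponding corruption families (as in the definitions of Section~\ref{sec:defs}).

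\emph{Step 1: $\mathcal{C}$ has a (realizable) compression scheme.} In the construction behind Theorem~\ref{thm:informal:realizable-/->subtractive}, every distribution $D$ in an unlearnable base family is mapped to a distribution $p_D$ that additionally places mass bounded below by an absolute constant on a ``tag'' point $z_D$ whose location uniquely identifies $D$ (this is the mechanism, inspired by Lechner and Ben-David~\cite{lechner2023impossibility}, that makes $\mathcal{C}$ realizably learnable). Hence a single sample landing on a tag point pins down $p_D$ exactly. This yields a compression scheme of constant size: take $m=O(1)$ samples so that some sample hits a tag point with probability $\ge 2/3$, compress to that one sample, and let $\mathcal{J}$ output $p_D$ for the $D$ encoded by its location. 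No extra bits are needed and the reconstruction error is $0\le\epsilon$ for every $\epsilon$.

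\emph{Step 2: a robust compression scheme for $\mathcal{C}$ would make $\mathcal{C}$ robustly learnable.} This is the heart of the argument, and mirrors the ``compression implies learnability'' direction of~\cite{AshtianiBHLMP18} together with the hypothesis-selection idea already used for Theorem~\ref{thm:informal:additive}. Given a robust compression scheme of size $(t,b)$ and sample size $m$, the learner draws $M\ge m(\epsilon)$ i.i.d.\ samples from the (corrupted) sampling distribution $p$, enumerates all at most $\binom{M}{t}2^{b+1}$ pairs of a length-$\le t$ subsequence and a string, applies $\mathcal{J}$ to each to obtain a finite candidate list $H$ with $\log|H|=O(t\log M+b)$, and then runs ordinary hypothesis selection (Yatracos/Scheff\'e; see~\cite{Yatracos85,DevroyeL01}) over $H$ using a fresh batch of samples \emph{from $p$ itself}. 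Crucially, since the corruption is already folded into $p$ and the hypothesis-selection samples are genuine i.i.d.\ draws from $p$, no robust hypothesis selection is required: plain hypothesis selection returns a distribution within $3\min_{h\in H}\dtv(h,p)+\epsilon$ of $p$ using $O(\log|H|/\epsilon^2)$ samples. By the robust-compression guarantee, with constant probability some $h\in H$ has $\dtv(h,p)\le\alpha\cdot\min_{p'\in\mathcal{C}}\dtv(p,p')+\epsilon$, so the output is within $3\alpha\cdot\min_{p'\in\mathcal{C}}\dtv(p,p')+O(\epsilon)$ of $p$; rescaling $\epsilon$ and boosting the success probability to $1-\delta$ by independent repetitions, and noting that $t=t(\epsilon)$, $b=b(\epsilon)$, $m=m(\epsilon)$ depend only on $\epsilon$ (so the mild fixed point in choosing $M$ resolves), the total sample size is a finite function of $(\epsilon,\delta)$. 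Hence $\mathcal{C}$ is $O(\alpha)$-robustly learnable. Combining: Step 1 gives a compression scheme for $\mathcal{C}$, while Theorem~\ref{thm:informal:realizable-/->robust} says $\mathcal{C}$ is not robustly learnable, so by Step 2 no robust compression scheme for $\mathcal{C}$ exists; running Step 2 with the subtractive corruption family instead gives the sharper statement that $\mathcal{C}$ has a compression scheme but no subtractively robust one.

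\emph{Main obstacle.} The delicate point is pinning down the right definition of a robust compression scheme so that Step 2 goes through without contradicting the main separations: robustness must be genuinely a property of the \emph{compressor}---it must select a good subsequence out of a \emph{corrupted} sample---rather than something smuggled in through a robust hypothesis selector. One should check that the realizable scheme of Step 1 fails this stronger definition (when subtractive corruption erases the tag point, the decoder, with its bounded description length $t,b$, provably cannot reconstruct the leftover member of the unlearnable base family), so there is no circularity. Everything else---Step 1 and the hypothesis-selection bookkeeping in Step 2---is routine.
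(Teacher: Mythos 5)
Your argument follows the paper's proof quite closely: both instantiate the counterexample as the class $\mathcal{Q}_g$ from Section~\ref{sec:subtractive-adversary}, exhibit a size-1 compression scheme via the unique tag point, and derive the non-existence of a robust scheme from the implication ``robust compression implies robust (agnostic) learnability'' together with Theorem~\ref{thm:realizable-/->agnostic}. The paper simply cites that implication from~\cite{AshtianiBHLMP18} rather than re-deriving it via hypothesis selection as you do in Step~2, but the content is the same and your derivation is sound.

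One correction to Step~1: the tag point of $q_{i,j,g(j)}$ carries mass $1/g(j)$, which is \emph{not} bounded below by an absolute constant---it vanishes as $j\to\infty$---so ``take $m=O(1)$ samples so that some sample hits a tag point with probability $\ge 2/3$'' is not achievable over the whole class. The paper instead lets the sample size $n(\epsilon)$ grow with $1/\epsilon$ and has the decoder fall back to $\delta_{(0,0)}$ when no tag point appears in the sample; this default is within $\epsilon$ of the truth because a distribution whose tag is unlikely to be hit at sample size $n(\epsilon)$ must have $j$ large, hence $\dtv(q_{i,j,g(j)},\delta_{(0,0)})=1/j\le\epsilon$. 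So the achieved reconstruction error is $\le\epsilon$, not identically $0$, and the compression-scheme sample size genuinely depends on $\epsilon$.
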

%\oldgmargin{Feel free to add more discussion here, but maybe not necessary}

Finally, a natural question is whether other forms of learnability imply robust learnability.
We investigate when \emph{differentially private}\footnote{Differential privacy is a popular and rigorous notion of data privacy. For the definition of differential privacy, see Section~\ref{sec:privacy}.} (DP) learnability does or does not imply robust learnability.
We find that the same results and separations as before hold when the distribution class is learnable under \emph{approximate} differential privacy (i.e., $(\varepsilon, \delta)$-DP), but, perhaps surprisingly, under \emph{pure} differential privacy (i.e., $(\varepsilon, 0)$-DP), private learnability implies robust learnability for all considered adversaries.\footnote{In the context of differential privacy, we diverge slightly from the established notation.
Specifically, we align ourselves with common notation in the DP literature, using $\varepsilon$ and $\delta$ for privacy parameters, and use $\alpha$ (in place of $\varepsilon$) and $\beta$ (in place of $\delta$) for accuracy parameters.}

\begin{theorem}[Informal]
\label{thm:dp-theorems}
$(\varepsilon,0)$-DP learnability implies robust $(\varepsilon,0)$-DP learnability.
For any $\delta>0$, $(\varepsilon,\delta)$-DP learnability implies additively robust learnability, but not subtractively robust learnability.

%Approx DP learnability implies additive robust learnability.
%Approx DP learnability does not imply subtractive robust learnability.
%Pure DP learnability implies pure DP additive and subtractive robust learnability.
%\oldgmargin{Does a) pure DP learnability imply additive and subtractive robust pure DP learnability; b) approx DP learnability imply additive robust approx DP learnability?
%That is, the connections I wrote here, but additionally adding privacy into the constraints.
%The former I think is true, by using the BKSW algorithm. The latter is not clear, since Theorem~\ref{thm:additive} would hit composition in a private setting. }
\end{theorem}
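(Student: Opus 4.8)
The plan is to treat the three assertions separately; the pure-DP implication carries the real content. For it, I would first show that $(\varepsilon,0)$-DP learnability of $\mathcal{C}$ forces $\mathcal{C}$ to be \emph{totally bounded}, i.e.\ to admit a finite $\gamma$-cover in $\dtv$ for every $\gamma>0$. This is a packing argument through group privacy: fix $\gamma$, let $A$ be an $(\varepsilon,0)$-DP realizable learner for $\mathcal{C}$ and set $m=n_{\mathcal{C}}(\gamma/2,1/4)$. If $p_1,\dots,p_N\in\mathcal{C}$ are pairwise $2\gamma$-far, then each $p_i$ has a size-$m$ dataset $S_i$ in the support of $p_i^{\,m}$ with $\Pr[\dtv(A(S_i),p_i)\le\gamma/2]\ge \tfrac{3}{4}$, and by $2\gamma$-separation the events $E_i=\{\dtv(A(\cdot),p_i)\le\gamma/2\}$ are pairwise disjoint. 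Group privacy applied to the Hamming move $S_i\to S_1$ (distance $\le m$) gives $\Pr[A(S_1)\in E_i]\ge\tfrac{3}{4}e^{-m\varepsilon}$, and disjointness of the $E_i$ yields $N\le\tfrac{4}{3}e^{m\varepsilon}<\infty$. Hence every $\gamma$-packing of $\mathcal{C}$ is finite, so $\mathcal{C}$ is totally bounded.

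Given total boundedness, a pure-DP robust learner follows from private hypothesis selection: for target accuracy $\alpha$ and confidence $\beta$, fix a finite $(\alpha/6)$-cover $\mathcal{N}$ of $\mathcal{C}$ and run the exponential mechanism over $\mathcal{N}$ with the negated minimum-Scheff\'e-discrepancy score, which has per-sample sensitivity $O(1/n)$. With $n$ polynomial in $\log|\mathcal{N}|$, $1/\alpha$, $1/\varepsilon$ and $\log(1/\beta)$, the output $\hat p$ obeys $\dtv(\hat p,p)\le 3\min_{h\in\mathcal{N}}\dtv(h,p)+\alpha/2\le 3\min_{c\in\mathcal{C}}\dtv(c,p)+\alpha$ for any (adversarial) $p$, so $\mathcal{C}$ is $3$-robustly $(\varepsilon,0)$-DP learnable; since general robust learnability implies additive and subtractive robust learnability, every considered adversary is covered. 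For approximate DP the positive claim needs nothing new: $(\varepsilon,\delta)$-DP learnability implies ordinary realizable learnability, hence additive robust learnability by Theorem~\ref{thm:informal:additive}; and if one additionally wants the additively-robust learner itself to be private, wrap the learner of Theorem~\ref{thm:informal:additive} in subsample-and-aggregate and privately select (via the exponential mechanism) a per-block output having many other block outputs within $O(\eta)+\alpha$ in $\dtv$ — changing one sample perturbs one block, so standard private-selection analysis gives privacy, and a majority of good blocks forces the selected hypothesis within $O(\eta)+O(\alpha)$ of $\eta q+(1-\eta)p$.

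For the negative approximate-DP claim I would show that the class built in the proof of Theorem~\ref{thm:informal:realizable-/->subtractive} is itself $(\varepsilon,\delta)$-DP learnable; its failure of subtractive robustness is exactly that theorem. Each member of that class carries an atom, of mass bounded below by a fixed constant, whose location encodes the member's identity, and its realizable learner only needs to locate that atom. Replace this step by an $(\varepsilon,\delta)$-DP stable-histogram / heavy-hitters subroutine, which recovers any item of mass above roughly $\tilde O\!\big(\log(1/\delta)/(\varepsilon n)\big)$ even over a countably infinite domain; this recovers the identity-encoding atom for $n$ polynomial in the relevant parameters, after which we decode and output the corresponding distribution exactly. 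As a consistency check, stable histograms are unavailable under pure DP on infinite domains, matching the fact that this class, not being totally bounded, cannot be $(\varepsilon,0)$-DP learnable.

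The main obstacles are, for the pure-DP direction, getting the packing step exactly right — rigorously upgrading ``$A$ is $(\gamma/2)$-accurate on $p_i$'' to ``$A$ separates $p_i$ from every other packing member,'' and applying group privacy at the single fixed scale $m$ rather than a data-dependent one — together with confirming the pure-DP private-hypothesis-selection guarantee on the finite cover; and, for the negative direction, verifying that the identity-encoding coordinate of the construction behind Theorem~\ref{thm:informal:realizable-/->subtractive} has mass bounded below uniformly over the class (which it must, or the class would not be realizably learnable with a distribution-independent sample bound), so that it clears the stable-histogram threshold.
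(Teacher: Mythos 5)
Your pure-DP argument is essentially the paper's: the group-privacy packing step you sketch is exactly the proof of the packing lower bound that the paper cites (Lemma~5.1 of~\cite{BunKSW19}, Proposition~\ref{prop:packing} here), and layering private hypothesis selection over the resulting finite cover matches Theorems~\ref{thm:pure-dp->?}--\ref{thm:pure-dp->robust-pure-dp}. The positive approximate-DP claim (via Theorem~\ref{thm:informal:additive}, since approximate-DP learnability trivially implies realizable learnability) also matches. The subsample-and-aggregate refinement you mention is extra: the theorem asks only that additively robust learnability follow, not that the robust learner itself be private (the paper flags that stronger statement as open in a footnote), so you need not rely on it.

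The genuine gap is in the negative approximate-DP direction. You assert that the identity-encoding atom of the class from Theorem~\ref{thm:informal:realizable-/->subtractive} ``has mass bounded below by a fixed constant'' uniformly over the class, and you reason that this ``must'' hold or the class would fail to be realizably learnable with a distribution-independent bound. Both claims are false. In the construction $q_{i,j,g(j)} = (1-\tfrac1j)\delta_{(0,0)} + (\tfrac1j - \tfrac1{g(j)})U_{A_i\times\{2j+1\}} + \tfrac1{g(j)}\delta_{(i,2j+2)}$, the atom mass is $1/g(j)\to 0$ as $j\to\infty$, with no uniform lower bound; the class is nevertheless realizably learnable because all members with small atom mass (large $j$) are within $1/j$ of the fixed fallback $\delta_{(0,0)}$, so the realizable learner that outputs the identified atom if one appears and $\delta_{(0,0)}$ otherwise succeeds. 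Your proposed DP learner (stable histogram, decode atom, output) as stated has no such fallback and would fail on $q_{i,j,g(j)}$ with $1/g(j)$ below the histogram threshold — which happens for every fixed $n$ once $j$ is large enough. The fix is the case split in the paper's Theorem~\ref{thm:approx-dp->?} proof: set the histogram threshold at $1/2g(1/\alpha)$; if the target accuracy is $\alpha$ and $1/j\le\alpha$, outputting $\delta_{(0,0)}$ is already $\alpha$-close; if $1/j>\alpha$, monotonicity of $g$ gives $1/g(j)>1/g(1/\alpha)$, so the atom clears the threshold and is found. You have the right tool but the wrong structural claim about the class, and that claim is what you used to conclude the histogram always succeeds — so the argument as written does not establish that the class is approximate-DP learnable.
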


For pure DP learnability, we employ an equivalence between learnability under pure differential privacy and packing~\cite{BunKSW19}.
Existence of such a packing in turn implies learnability under both pure differential privacy and with both additive and subtractive contamination.
For approximate DP learnability, we note that the corresponding version of Theorem~\ref{thm:additive} automatically holds, since private learnability implies learnability.
We show that our construction for Theorem~\ref{thm:realizable-/->subtractive} is still learnable under approximate differential privacy, and thus the corresponding non-implication holds.
See Section~\ref{sec:privacy} for more details.

To summarize the qualitative versions of our findings:
\begin{itemize}
    \item Learnability does not imply robust or agnostic learnability (Theorem~\ref{thm:informal:realizable-/->robust}, formal version: Theorem~\ref{thm:realizable-/->agnostic})
    \item Learnability and additively robust learnability are equivalent (Theorem~\ref{thm:informal:additive}, formal version: Theorem~\ref{thm:additive});
    \item Learnability does not imply subtractively robust learnability (Theorem~\ref{thm:informal:realizable-/->subtractive}, formal version: Theorem~\ref{thm:realizable-/->subtractive}); %Consequently, realizable learnability of a class of distributions does not imply its agnostic learnability.
    \item Subtractively robust learnability  implies robust learnability (Theorem~\ref{thm:informal:general}, formal version: Theorem~\ref{thm:general});
     \item Existence of sample compression schemes does not imply the existence of robust sample compression schemes (Theorem~\ref{thm:compression}, formal version: Theorem~\ref{thm:compression2}).
    \item Pure DP learnability is equivalent to robust pure DP learnability (Theorem~\ref{thm:dp-theorems});
    \item Approximate DP learnability implies additively robust learnability,\footnote{A natural open question is whether approximate DP learnability implies additively-robust approximate-DP learnability.} but not subtractively robust learnability (Theorem~\ref{thm:dp-theorems});

\end{itemize}
Quantitative versions of these statements can be found in their respective theorems.

\paragraph{Adaptive Adversaries.} 
Our definition of robustness allows an adversary to make changes to the underlying distribution.
Equivalently, it corresponds to an adversary who can add or remove points from a dataset, but must commit to these modifications \emph{before} the dataset is actually sampled.
A stronger\footnote{And in the case of removals, much more natural} adversary would be able to choose which points to add or remove \emph{after} seeing the sampled dataset. 
Such an adversary is referred to as \emph{adaptive}.
Since adaptive adversaries are stronger than the ones we consider, any impossibility result that we show also holds in this settings (e.g., learnability does not imply subtractive robust learnability when the adversary is adaptive).
It is an interesting open question to understand whether our algorithms can be strengthened to work in this setting. 
Some positive evidence in this direction is due to Blanc, Lange, Malik, and Tan~\cite{BlancLMT22},  who show that adaptive and non-adaptive adversaries have qualitatively similar power in many settings. 

\paragraph{Known vs unknown level of corruption}%\label{sec:knowncorruption}
So far, we only looked at learners that were unaware of the exact level of contamination, but had to give a guarantee with respect to the true but unknown contamination level. 
This is the standard formulation when considering agnostic learnability.
An alternative version can then be defined dependent on a parameter $\eta$, which describes a fixed level of contamination that is known by the learner. Our robust learning notions trivially imply those $\eta$-dependent notions. Furthermore, one can show that robust algorithms designed with the knowledge of $\eta$ can be modified into versions that can do without that knowledge \cite{JainOR22}. We give the details on the definition and the relation between these learning models Appendix~\ref{sec:knowncorruption}. We further show in that section that for learning with known contamination models, the same picture emerges: realizable learning does not imply robustness with respect to subtractive contaminations (Theorem~\ref{thm:knowncorruptionsubtractive}), but realizable and additively robust learning are equivalent. Similarly, subtractively robust learnability implies general robust learnability.

\subsection{Related Work}
\label{sec:related}

Robust estimation is a classic subfield of Statistics, see, for example, the classic works~\cite{Tukey60,Huber64}.
Our work fits more into the Computer Science literature on distribution estimation, initiated by the work of~\cite{KearnsMRRSS94}, which was in turn inspired by Valiant's PAC learning model~\cite{Valiant84}.
Since then, several works have focused on algorithms for learning specific classes of distributions, see, e.g.,~\cite{ChanDSS13,ChanDSS14a,ChanDSS14b,LiS17,AshtianiBHLMP20}.
A recent line of work, initiated by~\cite{DiakonikolasKKLMS16,LaiRV16}, focuses on computationally-efficient robust estimation of multivariate distributions, see, e.g.,~\cite{DiakonikolasKKLMS17,SteinhardtCV18,DiakonikolasKKLMS18,KothariSS18,HopkinsL18,DiakonikolasKKLSS19,LiuM21, LiuM22, BakshiDJKKV22, JiaKV23} and~\cite{DiakonikolasK22} for a reference.
In contrast to all of these works, we focus on broad and generic connections between learnability and robust learnability, rather than studying robust learnability of a particular class of distributions. 

Some of our algorithmic results employ tools from hypothesis selection, a problem which focuses on agnostic learning with respect to a specified finite set of distributions. 
The most popular approaches are based on ideas introduced by Yatracos~\cite{Yatracos85} and subsequently refined by Devroye and Lugosi~\cite{DevroyeL96,DevroyeL97,DevroyeL01}.
Several others have studied hypothesis selection with an eye for several considerations, including running time, approximation factor, robustness, privacy, parallelization, and more~\cite{MahalanabisS08,DaskalakisDS12b, DaskalakisK14,SureshOAJ14,  DiakonikolasKKLMS16, AshtianiBHLMP18, AcharyaFJOS18, BunKSW19,BousquetKM19,GopiKKNWZ20,BousquetBKEM21,AdenAliAK21}.

Distribution learning under the constraint of differential privacy~\cite{DworkMNS06} has been an active area of research, see, e.g.,~\cite{KarwaV18,KamathLSU19,BunS19,AcharyaSZ21,CaiWZ21,KamathMSSU22,KamathMS22}, and \cite{KamathU20} for a survey.
A number of these works have focused on connections between robustness and privacy~\cite{DworkL09,BunKSW19,KamathSU20,AvellaMedina20,BrownGSUZ21,LiuKKO21,LiuKO22,KothariMV22,RamsayJC22,SlavkovicM22,HopkinsKM22,GeorgievH22,HopkinsKMN23,AlabiKTVZ23}.
Again, these results either focus on specific classes of distributions, or give implications that require additional technical conditions, whereas we aim to give characterizations of robust learnability under minimal assumptions.

The question whether learnability under realizability assumptions extends to non-realizable setting has a long history. For binary classification tasks, both notions are characterized by the finiteness of the VC-dimension, and are therefore equivalent ~\cite{VapnikC71,VapnikC74,BlumerEHW89,Haussler92}.
\cite{BDPSS} show a similar result for online learning. Namely, that agnostic (non-realizable) learnability is characterized by the finiteness of the Littlestone dimension, and is therefore equivalent to realizable learnability.

Going beyond binary classification, recent work \cite{HopkinsKLM22}
shows that the equivalence of realizable and agnostic learnability extends across a wide variety of settings. These include models with no known characterization of learnability such as learning with arbitrary distributional assumptions and more general loss functions, as well as a host of other popular settings such as robust learning, partial learning, fair learning, and the statistical query model.
This stands in contrast to our results for the distribution learning setting. We show that realizable learnability of a class of probability distributions does \emph{not} imply its agnostic learnability. 
It is interesting and natural to explore the relationship between various notions of distribution learnability, which we have scratched the surface of in this work.

% \begin{proposition}[Hoeffding's inequality]\label{prop:hoeffdings}
% Let $X_1...X_m$ be i.i.d. random variables bounded in the interval $[0,1]$. For any $\epsilon>0$
% \begin{align*}
%     \P \left[\left|\frac 1 m\sum_{i=1}^m X_i - \mathbb E [X]\right| \geq \epsilon\right] \leq 2\exp(-2\epsilon^2m).
% \end{align*}
% \end{proposition}

%\end{theorem}

\section{Learnability Does Not Imply Robust (Agnostic) Learnablity}
\label{sec:subtractive-adversary}
%\shai{I think we still miss a \textit{definition} of $\alpha$-robust learnability}.
In this section we show that there are classes of distributions which are realizably learnable, but not robustly learnable. We start with stating the stronger statement that realizable learning does not imply robust learning, even of the corruptions are just limited to subtractive corruptions.

\begin{theorem}\label{thm:realizable-/->subtractive}[Formal version of Theorem~\ref{thm:informal:realizable-/->subtractive}]
    There are classes of distributions $\mathcal{Q}$, such that $\mathcal{Q}$ is realizably learnable, but for every $\alpha \in \mathbb{R}$, $\mathcal{Q}$ is not subtractively $\alpha$-robustly learnable. Moreover, the sample complexity of learning $\mathcal{Q}$ can be arbitrarily close to (but larger than) linear. Namely, for any super-linear function $g$, there is a class $\mathcal{Q}_g$, with 
    \begin{itemize}
        \item $\mathcal{Q}_g$ is realizable learnable with sample complexity $n_{\mathcal{Q}_g}^{re}(\epsilon,\delta) \leq \log(1/\delta)g(1/\epsilon)$;
        \item for every $\alpha\in \mathbb{R}$, $\mathcal{Q}_g$ is \emph{not} subtractively $\alpha$-robustly learnable.
    \end{itemize}
\end{theorem}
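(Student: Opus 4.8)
The plan is to construct $\mathcal{Q}_g$ by an \emph{identity-encoding} augmentation of a base family that, as a class, is not learnable, in the spirit of the construction of Lechner and Ben-David~\cite{lechner2023impossibility}. Concretely, I would start from a collection of ``base'' distributions stratified by a resolution parameter $k$, where stratum $k$ contains a subfamily $\mathcal{P}_k$ that is hard to learn at a scale tied to $k$ (e.g.\ each member of $\mathcal{P}_k$ is uniform over a set whose size grows with $k$), and where all the $\mathcal{P}_k$'s live on disjoint parts of the domain. I would then replace each base distribution $p \in \mathcal{P}_k$ by $\hat p = (1-\beta_k)\,p + \beta_k\,\delta_{c_p}$, where $c_p$ is a fresh atom (in yet another disjoint part of the domain) whose location records both the resolution $k$ and the exact identity of $p$, and $\beta_k \to 0$ is the tuning knob; $\mathcal{Q}_g$ is the class of all such $\hat p$.

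For realizable learnability, the learner draws samples until it observes an encoding atom $c_p$ and then outputs $\hat p$ exactly; a geometric-tail estimate shows $O(\log(1/\delta)/\beta_k)$ samples suffice to see $c_p$ with probability $1-\delta$. To get the sample-complexity bound $n^{re}_{\mathcal{Q}_g}(\epsilon,\delta) \le \log(1/\delta)\,g(1/\epsilon)$ for a prescribed super-linear $g$, I would calibrate the $\beta_k$'s (together with the disjointness of the strata, which lets one read off $k$ from a single base sample) so that at target accuracy $\epsilon$ every member is handled either by seeing its atom within the budget (when $\beta_k$ is not too small) or by producing an $\epsilon$-approximation directly (when $\beta_k < \epsilon$, so $\hat p$ is already $\epsilon$-close to something the learner can recover at that accuracy).

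For non-robustness: fix any $\alpha\in\mathbb{R}$ and suppose, for contradiction, that $A$ with sample function $n_{\mathcal{Q}_g}$ is a subtractively $\alpha$-robust learner. Deleting the encoding atom of $\hat p = (1-\beta_k)p + \beta_k\delta_{c_p}$ is a subtractive corruption of level exactly $\eta = \beta_k$ that turns $\hat p \in \mathcal{Q}_g$ into $p$; so $A$, fed i.i.d.\ samples from $p$, must output a distribution $(\alpha\beta_k + \epsilon)$-close to $p$ with probability $\ge 1-\delta$, using only $n_{\mathcal{Q}_g}(\epsilon,\delta)$ samples. Choosing $\epsilon$ on the order of $\beta_k$ and $k$ large (so $\beta_k$ is small), the forced error $\alpha\beta_k + \epsilon$ is on the order of $\beta_k$, while learning a uniformly-random member of the hard subfamily $\mathcal{P}_k$ to that accuracy requires a number of samples growing with $k$ faster than $g(1/\beta_k)$; a standard distribution-learning lower bound (Fano/Assouad over a packing of $\mathcal{P}_k$) then forces $A$ to fail with probability $>\delta$, a contradiction. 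Here $\alpha$ only rescales how small $\beta_k$ (equivalently, how large $k$) must be taken, so the argument works for \emph{every} $\alpha$. Since $\alpha$-robust learnability implies subtractive $\alpha$-robust learnability, this also yields Theorem~\ref{thm:informal:realizable-/->robust}.

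The step I expect to be the main obstacle is reconciling the previous two points: making $\mathcal{Q}_g$ realizably learnable (with the stated near-linear bound) \emph{and} non-robust for every $\alpha$. A constant encoding weight $\beta_k \equiv \beta$ makes $\mathcal{Q}_g$ trivially learnable but only defeats subtractive robustness for $\alpha < 1/\beta$; conversely, simply letting $\beta_k\to 0$ while keeping the $\mathcal{P}_k$'s arbitrarily hard destroys realizable learnability, since a member with a very light atom can be neither read off within a bounded sample nor $\epsilon$-approximated without essentially solving the hard problem $\mathcal{P}_k$. The construction must thread this needle: the decay of $\beta_k$, the growth of the hardness of $\mathcal{P}_k$, and the accuracy scale at which $\mathcal{P}_k$ becomes relevant have to be interlocked so that, viewed at accuracy $\epsilon$, each member of $\mathcal{Q}_g$ is either cheaply identified or already $\epsilon$-close to a learnable object, while viewed through a level-$\eta$ subtractive attack it exposes a residual problem strictly harder than $g(1/\eta)$ samples can solve. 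Getting this bookkeeping right — and checking that the lower bound bites at the correct scales, uniformly over $(\epsilon,\delta)$ — is the technical core of the proof and is where I would spend most of the effort.
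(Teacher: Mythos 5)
Your high-level strategy matches the paper's: the class is built by taking an unlearnable base family and grafting onto each member a light identity-encoding atom whose weight shrinks as the resolution grows, and the non-robustness argument is that a level-$\eta$ subtractive attacker erases the atom and re-exposes the hard base problem at a scale $\gg \alpha\eta$. You also correctly diagnose that the entire difficulty is in calibrating the atom weights so that realizable learnability (with the stated near-linear bound) and non-robustness for \emph{every} $\alpha$ coexist. However, you explicitly stop at that point and call it ``the main obstacle,'' and the concrete base family you gesture at would not resolve it — so there is a genuine gap rather than a complete alternative route.

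Specifically, two things are missing. First, you propose taking each $\mathcal{P}_k$ to consist of bare uniform distributions over large sets; but when the atom weight $\beta_k$ is tiny, the augmented distribution is essentially just such a uniform, which is at total-variation distance $\Theta(1)$ from any fixed landmark and from the other members of its stratum. Nothing in your construction makes light-atom members ``$\epsilon$-close to a learnable object,'' even though you correctly state that this must hold. The paper resolves this by giving every distribution a large shared backbone: $q_{i,j,g(j)} = (1-\tfrac1j)\delta_{(0,0)} + (\tfrac1j-\tfrac1{g(j)})U_{A_i\times\{2j+1\}} + \tfrac1{g(j)}\delta_{(i,2j+2)}$. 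The stratum index $j$ simultaneously controls the atom weight $1/g(j)$ \emph{and} the total mass $1/j$ not sitting on $(0,0)$, so that whenever $1/j\le\epsilon$ the learner can simply output $\delta_{(0,0)}$, and whenever $1/j>\epsilon$ the atom has mass $\ge 1/g(1/\epsilon)$ and is seen within $\log(1/\delta)\,g(1/\epsilon)$ samples. This coupling is exactly what makes the sample-complexity calibration go through, and you have not supplied it. Second, your lower-bound step envisions each $\mathcal{P}_k$ having finite hardness that ``grows with $k$ faster than $g(1/\beta_k)$,'' which would require you to verify a growth-rate comparison; the paper sidesteps this entirely by making each stripped stratum $\mathcal{Q}'_j=\{q'_{i,j}:i\in\mathbb{N}\}$ contain all finite subsets $A_i\subset\mathbb{N}$, so it is not learnable \emph{at all} at accuracy $\gamma(j)=\Theta(1/j)$ (via Lemma~3 of~\cite{lechner2023impossibility}), and the only remaining computation is the elementary scale alignment $\gamma(j) > \alpha/g(j) + 1/g(j)$, which follows for large $j$ precisely because $g$ is super-linear. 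Absent these two ingredients, your proposal is a correct reading of the shape of the argument but does not yet prove the theorem.
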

This statement directly implies the main result of our paper.
\begin{theorem}\label{thm:realizable-/->agnostic}
    There are classes of distributions $\mathcal{Q}$, such that $\mathcal{Q}$ is realizably learnable, but for every $\alpha \in \mathbb{R}$, $\mathcal{Q}$ is not robustly or agnostically learnable. Moreover, the sample complexity of learning $\mathcal{Q}$ can be arbitrarily close to (but larger than) linear. Namely, for any super-linear function $g$, there is a class $\mathcal{Q}_g$, with 
    \begin{itemize}
        \item $\mathcal{Q}_g$ is realizable learnable with sample complexity $n_{\mathcal{Q}_g}^{re}(\epsilon,\delta) \leq \log(1/\delta)g(1/\epsilon)$;
        \item for every $\alpha\in \mathbb{R}$, $\mathcal{Q}_g$ is \emph{not} $\alpha$-robustly learnable.
    \end{itemize}    
\end{theorem}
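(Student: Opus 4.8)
The plan is to prove Theorem~\ref{thm:realizable-/->agnostic} as an immediate corollary of Theorem~\ref{thm:realizable-/->subtractive}. Recall that among the ``trivial'' implications stated after the definitions, if a class is $\alpha$-robustly learnable then it is in particular subtractively $\alpha$-robustly learnable. Taking the contrapositive, if $\mathcal{Q}$ is \emph{not} subtractively $\alpha$-robustly learnable for any $\alpha$, then it is also not $\alpha$-robustly learnable for any $\alpha$, i.e. it is not agnostically learnable. So the same witness class $\mathcal{Q}_g$ produced by Theorem~\ref{thm:realizable-/->subtractive} already does the job: it is realizably learnable with sample complexity $n_{\mathcal{Q}_g}^{re}(\epsilon,\delta)\le \log(1/\delta)g(1/\epsilon)$, and since it fails subtractive $\alpha$-robust learnability for every $\alpha$, it fails ordinary $\alpha$-robust learnability for every $\alpha$ as well.

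The steps, in order, are: first, invoke Theorem~\ref{thm:realizable-/->subtractive} to obtain, for the given super-linear $g$, a class $\mathcal{Q}_g$ with the two bulleted properties (realizable learnability with the stated sample complexity, and failure of subtractive $\alpha$-robust learnability for all $\alpha$). Second, observe that a subtractive corruption is a special case of a general corruption: any $p$ with $\eta q'+(1-\eta)p\in\mathcal{C}$ for some $q'$ automatically satisfies $\min\{\dtv(p,p'):p'\in\mathcal{C}\}\le \dtv(p,\eta q'+(1-\eta)p)\le \eta$, so an $\alpha$-robust learner in the sense of Definition~\ref{def:robust-learnability} would in particular yield a subtractive $\alpha$-robust learner (this is exactly the implication already recorded in the excerpt). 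Third, conclude by contraposition that $\mathcal{Q}_g$ is not $\alpha$-robustly learnable for any $\alpha\in\mathbb{R}$, and hence not robustly/agnostically learnable. The realizable-learnability claim and its sample-complexity bound transfer verbatim from Theorem~\ref{thm:realizable-/->subtractive}.

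There is essentially no obstacle here: the entire content is already contained in Theorem~\ref{thm:realizable-/->subtractive} together with the elementary observation that subtractive corruption is weaker than general corruption, which the paper has already stated as one of the trivial implications. The only thing to be careful about is the direction of the implication — we need ``not subtractively robust'' $\Rightarrow$ ``not robust'', which is the contrapositive of ``robust $\Rightarrow$ subtractively robust'', and this is the correct direction. Thus the proof is a one-line deduction, and all the real work (the construction of $\mathcal{Q}_g$ encoding an unlearnable class via a point mass that subtractive noise can erase) lives in the proof of Theorem~\ref{thm:realizable-/->subtractive}.

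\begin{proof}
This is immediate from Theorem~\ref{thm:realizable-/->subtractive}. Fix any super-linear function $g$ and let $\mathcal{Q}_g$ be the class guaranteed by that theorem, so that $\mathcal{Q}_g$ is realizably learnable with $n_{\mathcal{Q}_g}^{re}(\epsilon,\delta)\le \log(1/\delta)g(1/\epsilon)$, and $\mathcal{Q}_g$ is not subtractively $\alpha$-robustly learnable for any $\alpha\in\mathbb{R}$. As noted after Definition~\ref{def:robust-learnability}, for every $\alpha$, $\alpha$-robust learnability implies subtractive $\alpha$-robust learnability. Taking the contrapositive, since $\mathcal{Q}_g$ is not subtractively $\alpha$-robustly learnable for any $\alpha$, it is not $\alpha$-robustly learnable for any $\alpha$, i.e.\ it is not robustly (agnostically) learnable. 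This establishes both bullet points and hence the theorem.
\end{proof}
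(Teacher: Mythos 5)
Your proof is correct and follows exactly the paper's own reasoning: Theorem~\ref{thm:realizable-/->agnostic} is derived as an immediate corollary of Theorem~\ref{thm:realizable-/->subtractive} via the trivial implication that $\alpha$-robust learnability implies subtractive $\alpha$-robust learnability, taken in contrapositive form. The paper states this deduction in a single sentence ("This statement directly implies the main result of our paper"), and your writeup simply makes that one-line argument explicit.
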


Furthermore, for the weaker robustness criterion of robust learning with respect to known corruption levels, we get a similar negative result, i.e., we show that realizable learnability does not imply subtractive learnability with known corruption level. This statement is formalized as Theorem~\ref{thm:knowncorruptionsubtractive}.

The key idea to the proof for all of these theorems is to construct a class which is easy to learn in the realizable case, by having each distribution of the class have a unique support element that is not shared by any other distributions in the class. Distributions on which this ``indicator element'' has sufficient mass will be easily identified, independent of how rich the class is on other domain elements. That richness makes the class hard to learn from samples that miss those indicators. Furthermore, we construct the class in a way that its members are close in total variation distance to distributions that place no weight on those indicator elements. 

This is done by making the mass on these indicator elements small, so that the members of a class of distributions that results from deleting these indicator bits 
%results in an unlearnable class of distributions which consists of elements that 
are close to the initially constructed class, $\mathcal{Q}_g$. In order to make this work for every target accuracy and sample complexity, we need to have a union of such classes with decreasingly small mass on the indicator bits. In order for this to not interfere with the realizable learnability, we let the distributions with small mass on the indicator bits have most of their mass on one point $ (0,0)$ that is the same for all distributions in the class. This ensures that distributions for which the indicator bit will likely not be observed because their mass is smaller than some $\eta$ are still easily $\epsilon$-approximated by a constant distribution ($\delta_{(0,0)}$). Lastly we ensure the impossibility of agnostic learnability, by controlling the rate at which $\eta$ approaches zero to be faster than the rate at which $\epsilon$ approaches zero. 
With this intuition in mind, we will now describe the construction and proof of this theorem.
% is to make realizable learning of a class easy by having each distribution in the class have a small amount of mass sit on a domain element, that is unique to that particular class. These ``indicator bits'' allow a realizable learner to perfectly identify the correct distribution, once that distribution is seen. Aside from these indicator bits the distribution class is very flexible, in a way that makes learning without seeing the indicator bits impossible. 
\begin{proof}
We first define the distributions in $\mathcal{Q}_g$. Let $\{A_i\subset \mathbb{N}: i\in \mathbb{N}\}$ be an enumeration of all finite subsets of $\mathbb{N}$. 
%(we know this set to be countable as for every $n$, the set of all subsets of $\mathbb{N}$ with cardinality bounded by $n$, is countable, and every countable union of countable sets is countable).
Define distributions over $\mathbb{N} \times \mathbb{N}$ as follows:
\begin{align}
q_{i,j,k} =\left(1-\frac{1}{j}\right)\delta_{(0,0)} + \left(\frac{1}{j}-\frac{1}{k}\right)U_{A_i\times\{2j+1\}} + \frac{1}{k}\delta_{(i,2j+2)},\label{eq:q}
\end{align}
where, for every finite set $W$, $U_W$ denotes the uniform distribution over $W$.
For a monotone, super-linear function $g: \mathbb{N} \to \mathbb{N}$, we now let $\mathcal{Q}_{g} = \{q_{i,j,g(j)}: i,j\in \mathbb{N}\}$. The first bullet point of the theorem (the class is learnable) follows from Claim~\ref{claim:Q_grealizable} and the second bullet point (the class is not robustly learnable) follows from Claim~\ref{claim:Q_grobust}.
\end{proof}

\begin{restatable}{claim}{Qgrealizable}
\label{claim:Q_grealizable}
For a monotone function $g: \mathbb{N} \to \mathbb{N}$, let $\mathcal{Q}_{g} = \{q_{i,j,g(j)}: i,j\in \mathbb{N}\}$. Then, the sample complexity of $\mathcal{Q}_g$ in the realizable case is upper bounded by
\[n_{\mathcal{Q}_g}^{re}(\epsilon,\delta)\leq \log(1/\delta)g(1/\epsilon).\]
%For every $i, j, k,$ and $\epsilon>0$, the sample complexity of $(\epsilon, 1/3)$ learning $q_{i,j,k}$ in the realizable case  is $\min\{1/j , ck$ (for some constant $c$).
\end{restatable}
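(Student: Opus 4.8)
The plan is to exhibit a single explicit learner $A$ — one that does not even need to know $\epsilon$ or $\delta$ — and then verify the sample-complexity bound by a short two-case analysis on the ``shape parameter'' $j$ of the target. The key structural fact to extract first is that within any member $q_{i,j,g(j)}$ of the class, the atom $(i,2j+2)$ is the \emph{only} point whose second coordinate is even and $\ge 4$: the atom $(0,0)$ has second coordinate $0$, and the uniform part is supported on $A_i\times\{2j+1\}$, all of whose points have odd second coordinate. Hence, whenever a sample point $(a,b)$ with $b$ even and $b\ge 4$ is observed, it must be $(i,2j+2)$, and reading off $a=i$ together with $j=(b-2)/2$ pins down the target completely, since inside $\mathcal{Q}_g$ the third index is forced to equal $g(j)$. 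I therefore let $A$ output the unique $q_{i,j,g(j)}$ consistent with such an observed atom if $S$ contains one (say, the one corresponding to the lexicographically first such point), and output $\delta_{(0,0)}$ otherwise; in particular $A$ recovers the target exactly whenever it sees the indicator atom.

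For the analysis, fix a target $p = q_{i,j,g(j)}$ and parameters $(\epsilon,\delta)$. In the first case $1/j \le \epsilon$, we have $\dtv(p,\delta_{(0,0)}) = 1/j \le \epsilon$, so $A$'s output — which is either $p$ itself or $\delta_{(0,0)}$ — is within $\epsilon$ of $p$ for \emph{every} sample, with no constraint on $n$. In the second case $1/j > \epsilon$, i.e.\ $j < 1/\epsilon$, the distribution $\delta_{(0,0)}$ may be too far, so $A$ succeeds precisely when $S$ contains the indicator atom $(i,2j+2)$, which carries mass $1/g(j)$ under $p$. The failure probability is $(1-1/g(j))^n \le e^{-n/g(j)}$, which is at most $\delta$ as soon as $n \ge g(j)\ln(1/\delta)$. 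Since $g$ is monotone and $j < 1/\epsilon$, we get $g(j) \le g(1/\epsilon)$, so $n \ge \log(1/\delta)\,g(1/\epsilon)$ suffices in this case as well, which is exactly the claimed bound $n_{\mathcal{Q}_g}^{re}(\epsilon,\delta) \le \log(1/\delta)\,g(1/\epsilon)$.

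I do not anticipate a genuine obstacle: once the learner is set up, the argument is just a coupon-collector-style tail bound. The two points needing a little care are (i) confirming that the even-second-coordinate-and-$\ge 4$ atom really is unique within each distribution in the class, so that $A$ never mis-identifies the target, and (ii) interpreting $g(1/\epsilon)$ when $1/\epsilon\notin\mathbb{N}$ — one either extends $g$ monotonically to the reals or replaces $1/\epsilon$ by $\lceil 1/\epsilon\rceil$, which changes the bound only cosmetically.
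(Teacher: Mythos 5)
Your proof is correct and takes essentially the same approach as the paper: the learner, the two-case split on whether $1/j\le\epsilon$, and the $(1-1/g(j))^n\le e^{-n/g(j)}$ tail bound all match. The only addition is your explicit justification that the indicator atom is the unique support point with even second coordinate $\ge 4$, which the paper dispatches with the brief remark that the learner is well-defined on realizable samples.
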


This claim can be proved by showing that the following learner defined by

%\begin{proof}
%Let the realizable learner $\mathcal{A}$ %\mathcal{X}^* \to \Delta{\mathcal{X}}$, 
%be 
\begin{align*}    
\mathcal{A}(S) =\begin{cases} q_{i,j,g(j)} &\text{ if } (i,2j+2)\in S\\
\delta_{(0,0)} & \text{otherwise}
\end{cases}
\end{align*}
is a successful learner in the realizable case. Intuitively, this learner is successful for distributions $q_{i,j,g(j)}$ for which $j$ is large (i.e., $j > \frac{1}{\epsilon})$, since this will mean that $\dtv(q_{i,j,g(j)},\delta_{(0,0)})$ is small. Furthermore, it is successful for distributions $q_{i,j,g(j)}$ for which $j$ is small (i.e., upper bounded by some constant dependent on $\epsilon$), because this will lower bound the probability $1/g(j)$ of observing the indicator bit on $(i, 2j+2)$. Once the indicator bit is observed the distribution will be uniquely identified.

\begin{restatable}{claim}{Qgrobust}
\label{claim:Q_grobust}
For every function $g \in \omega(n)$ the class $\mathcal{Q}_{g}$ is not subtractive $\alpha$-robustly learnable for any $\alpha > 0$.
\end{restatable}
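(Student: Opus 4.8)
The plan is a proof by contradiction combined with a standard Bayesian ``fooling'' argument. The point of subtractive corruption here is that it can delete the identifying point mass $\delta_{(i,2j+2)}$, leaving the learner facing (a rescaled copy of) the uniform distribution over an \emph{arbitrary} finite subset of $\mathbb{N}$ — and no finite sample can pin such a set down. So, suppose $\mathcal{Q}_g$ were subtractively $\alpha$-robustly learnable via a learner $\mathcal{A}$ and sample-complexity function $n_{\mathcal{Q}_g}(\cdot,\cdot)$. Fix a level $j$ (chosen below), set $\eta := 1/g(j)$, and for each finite set $A\subseteq\mathbb{N}$ let $i_A$ be its index in the enumeration and
\[
p_A \;:=\; \frac{q_{i_A,j,g(j)} - \eta\,\delta_{(i_A,2j+2)}}{1-\eta} \;=\; (1-w_j)\,\delta_{(0,0)} + w_j\, U_{A\times\{2j+1\}}, \qquad w_j := \frac{1/j-1/g(j)}{1-1/g(j)}.
\]
Since $\eta\,\delta_{(i_A,2j+2)} + (1-\eta)p_A = q_{i_A,j,g(j)}\in\mathcal{Q}_g$, each $p_A$ is a legitimate source for subtractive $\alpha$-robust learning at corruption level $\eta$; hence for every $\epsilon,\delta$ the learner must output $\hat p$ with $\dtv(\hat p,p_A)\le \alpha\eta+\epsilon$ with probability $\ge 1-\delta$ from $n_{\mathcal{Q}_g}(\epsilon,\delta)$ i.i.d.\ samples of $p_A$.

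Next I choose parameters so this accuracy is information-theoretically impossible. Because $g\in\omega(n)$, I pick $j$ large (depending only on $\alpha$) with $g(j)\ge 2j$ and $g(j)\ge 64\alpha j$; then $w_j\in[\tfrac1{2j},\tfrac2j]$ and $\alpha\eta\le\tfrac1{64j}$. Fix $\delta:=\tfrac1{10}$ and $\epsilon:=\tfrac1{128j}$, so the error budget is $b:=\alpha\eta+\epsilon\le\tfrac1{32j}\le\tfrac{w_j}{16}$. Let $n:=n_{\mathcal{Q}_g}(\epsilon,\delta)$ and let $M$ be any integer with $M\ge 100(n^2+1)$. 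Restrict to $\mathcal{F}:=\{A\subseteq\{1,\dots,2M\}:|A|=M\}$, put the uniform prior $\pi$ on $\mathcal{F}$, draw $A\sim\pi$, draw a size-$n$ sample $S$ from $p_A$, and run $\mathcal{A}(S)$.

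The crux is the posterior computation. The likelihood of $S$ under $p_A$ depends on $A$ only through the indicator of $\{T(S)\subseteq A\}$, where $T(S)\subseteq\{1,\dots,2M\}$ is the ($\le n$-element) set of first coordinates appearing among the non-$(0,0)$ points of $S$; hence the posterior $\mu_S$ on $A$ is uniform over the size-$M$ supersets of $T(S)$ inside $\{1,\dots,2M\}$. Writing $A=T(S)\cup B$ with $B$ a uniform $(M-|T(S)|)$-subset of the remaining elements, a Chebyshev bound on the (hypergeometric) overlap of two independent completions $B,B'$ — valid precisely because $M\gg n\ge|T(S)|$ — gives $|A\triangle A'|\ge M/3$, hence $\dtv(p_A,p_{A'})=w_j\,|A\triangle A'|/(2M)\ge w_j/6>w_j/8\ge 2b$, with probability at least $3/4$ over $A,A'\sim\mu_S$. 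Consequently, for every $S$, the set $G(S):=\{A\in\mathcal{F}:\dtv(\mathcal{A}(S),p_A)\le b\}$ satisfies $\mu_S(G(S))^2=\Pr_{A,A'\sim\mu_S}[A,A'\in G(S)]\le\Pr_{A,A'\sim\mu_S}[\dtv(p_A,p_{A'})\le w_j/8]\le 1/4$ (triangle inequality: $A,A'\in G(S)\Rightarrow\dtv(p_A,p_{A'})\le 2b\le w_j/8$), so $\mu_S(G(S))\le 1/2$. Averaging over $S$ yields $\Pr_{A\sim\pi,\,S}[\dtv(\mathcal{A}(S),p_A)>b]\ge 1/2$, so some fixed $A^\ast\in\mathcal{F}$ has $\Pr_{S\sim p_{A^\ast}}[\dtv(\mathcal{A}(S),p_{A^\ast})>b]\ge 1/2>\delta$ — contradicting the learnability guarantee applied to the source $p_{A^\ast}$ at corruption level $\eta$. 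Together with Claim~\ref{claim:Q_grealizable} this proves Theorem~\ref{thm:realizable-/->subtractive}, and hence Theorem~\ref{thm:realizable-/->agnostic}, since subtractive corruption is a special case of general corruption.

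The step I expect to be the main obstacle is making the ``posterior stays spread out'' claim quantitative and uniform over all samples $S$: one must show that conditioning on up to $n$ revealed support elements barely dents the posterior, so two random completions $A,A'$ remain $\Theta(w_j)$-far in total variation with constant probability. This is exactly where $M=\Theta(n^2)$ enters — a birthday/collision-freeness phenomenon for the observed uniform samples together with concentration of the hypergeometric overlap $|A\cap A'|$ around $M/2$. The remaining ingredients — extracting $j$ from $g\in\omega(n)$, the normalization bookkeeping for $w_j$, and checking that $p_A$ is a bona fide subtractive corruption of a member of $\mathcal{Q}_g$ — are routine.
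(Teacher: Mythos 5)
Your reduction to the residual class $\{p_A\}$ is exactly the paper's: the paper defines
$q_{i,j}' = \frac{1}{1-1/g(j)}\bigl((1-\tfrac1j)\delta_{(0,0)} + (\tfrac1j - \tfrac1{g(j)})U_{A_i\times\{2j+1\}}\bigr)$,
which is your $p_A$, and observes in the same way that $q_{i,j,g(j)} = (1-\tfrac1{g(j)})q_{i,j}' + \tfrac1{g(j)}r$, so the residuals are legitimate subtractive corruptions at level $\eta = 1/g(j)$. The parameter bookkeeping (using super-linearity of $g$ to pick $j$ so that $\alpha\eta + \epsilon$ falls below a constant fraction of $w_j$) is also the paper's move. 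Where you genuinely diverge is in how you certify that $\{p_A : A \text{ finite}\}$ is not weakly learnable at accuracy $\Theta(w_j)$: the paper invokes Lemma~3 of \cite{lechner2023impossibility} as a black box (a packing lower bound over the sub-class $\mathcal{P}_{8\gamma(j),4k,j}\subset\mathcal{Q}_j'$ with failure probability $1/7$), whereas you reprove the lower bound from scratch with a Bayesian fooling argument — uniform prior over size-$M$ subsets of $[2M]$, the observation that the posterior is uniform on $\{A : T(S)\subseteq A\}$, and hypergeometric concentration of $|A\cap A'|$. Both are correct; your version is self-contained and makes explicit where the hardness comes from (the posterior stays spread over exponentially many candidates no matter what the sample reveals), at the cost of carrying the hypergeometric estimate yourself. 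Two minor notes: your choice $M = \Theta(n^2)$ is more conservative than needed — since $|T(S)|\le n$ deterministically, $M\gg n$ already suffices for the Chebyshev step, and no collision-freeness of the observed points is actually required; and the posterior-uniformity step relies on all $A\in\mathcal{F}$ having the same cardinality $M$, which you do impose, so the likelihood factor $(w_j/M)^{|\text{non-}(0,0)|}$ is $A$-independent given $T(S)\subseteq A$ — worth flagging explicitly since dropping the fixed-size constraint would break that claim.
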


This claim can be proven by showing that for every $\alpha$, there is $\eta$, such that the class of distributions $Q'$ for which the following holds:
\begin{itemize}
    \item $Q'$ is not $\alpha\eta$-weakly learnable\footnote{We provide a definition for $\epsilon$-weak learnability as Definition~\ref{def:weak-learn} We note that the definition we provide is what would usually be referred to as $(1/2-\epsilon)$-weak learnability in the supervised learning literature. For simplicity, because $\epsilon$ is our parameter of interest, we reparameterized the definition to be more intuitive.}    
    \item for every $q'\in Q'$ there is $q\in \mathcal{Q}_g$ and arbitrary $r$ with $q' = (1-\eta) q + \eta r$ which is not $\alpha\eta$-weakly learnable.    
\end{itemize}
We construct this class and show that it is not learnable by using the construction and Lemma~3 from \cite{lechner2023impossibility}. 
\subsection{Proof of Main Result}
We will now show the proofs of the main result of this section. We start with proving the upper bound of Claim~\ref{claim:Q_grealizable} and the lower bound of  Claim~\ref{claim:Q_grobust} before proving Theorem~\ref{thm:realizable-/->subtractive}.
\subsubsection{Proof of Claim~\ref{claim:Q_grealizable}}
%We start with an upper bound, showing that our class $\mathcal{Q}_g$ is realizably learnable. 
%\Qgrealizable*

\begin{proof}
Let the realizable learner $\mathcal{A}$ %\mathcal{X}^* \to \Delta{\mathcal{X}}$, 
be 
\begin{align*}    
\mathcal{A}(S) =\begin{cases} q_{i,j,g(j)} &\text{ if } (i,2j+2)\in S\\
\delta_{(0,0)} & \text{otherwise}
\end{cases}
\end{align*}

%\end{proof}

%For some injective monotone function $g: \mathbb{N} \to \mathbb{N}$, let $\mathcal{Q}_{g} = \{q_{i,j,g(j)}: i,j\in \mathbb{N}\}$.

Note that for all $\mathcal{Q}_g$-realizable samples this learner is well-defined. Furthermore, we note that in the realizable case, whenever $\mathcal{A}$ outputs a distribution different from $\delta_{(0,0)}$, then $\mathcal{A}(S)$ outputs the ground-truth distribution, i.e., the output has TV-distance $0$ to the true distribution. Lastly, we note, that for an i.i.d.\ sample $S\sim q_{i,j,g(j)}^n$, we have the following upper bound for the learner identifying the correct distribution:
\[  \mathbb{P}_{S\sim q_{i,j,g(j)}^n}[\mathcal{A}(S) = q_{(j)}] = \mathbb{P}_{S\sim q_{i,j,g(j)}^n}[ (i,2j+2) \in S] = 1- (1- 1/g(j))^n .\]
We note, that since $g$ is a monotone function, if $\epsilon \leq \frac{1}{j}$, then $g(j) \leq g(\frac{1}{\epsilon})$ and therefore,
\[ (1- 1/g(j))^n \leq (1- 1/g(1/\epsilon))^n.\]

Furthermore for $q_{i,j,g(j)}$, we have that $\dtv(\delta_{(0,0)}, q_{i,j,g(j)}) = \frac{1}{j}$.

Putting these two observations together, we get
\[ \mathbb{P}_{S\sim q_{i,j,g(j)}^n}[\dtv(\mathbb{A}(S),q_{i,j,g(j)}) \geq \epsilon ] \leq \begin{cases} (1- 1/g(1/\epsilon))^n & \text{ if } \frac{1}{j} \leq \epsilon \\
0 & \text{ if } \frac{1}{j} >\epsilon 
\end{cases}.\]
Thus, for every $q\in \mathcal{Q}_g$,

\[\mathbb{P}_{S\sim q^n}[\dtv(\mathbb{A}(S),q) \geq \epsilon ] \leq (1- 1/g(1/\epsilon))^n \leq \exp\left(- \frac{n}{g(1/\epsilon)} \right) . \]
Letting the left-hand side equal the failure probability $\delta$ and solving for $n$, we get,
\begin{align*}
   \log \delta &\geq  \frac{-n}{g(1/\epsilon)}\\
\log(\delta)g(1/ \epsilon) &\geq -n\\
n &\geq - \log(\delta)g(1/ \epsilon) = \log(1/\delta)g(1/\epsilon).
\end{align*}
Thus, we have a sample complexity bound of

\[n_{Q_{g}}(\epsilon,\delta) \leq \log(1/\delta)g(1/\epsilon).\]
\end{proof}

\subsubsection{Proof of Claim~\ref{claim:Q_grobust}}
\noindent Now, we show a lower bound, that our class $\mathcal{Q}_g$ is \emph{not} robustly learnable. 
Before we do that, we require a few more preliminaries. 
%\shai{We need to define $TV(p,Q)$ for a distribution $p$ and a class of distributions $Q$}.
For a distribution class $\mathcal{Q}$ and a distribution $p$, let their total variation distance be defined by
\[\dtv(p,\mathcal{Q}) = \inf_{q \in \mathcal{Q}} \dtv(p,q).\]

\noindent We also use the following lemma from \cite{lechner2023impossibility}. 

\begin{lemma}[Lemma~3 from \cite{lechner2023impossibility}] \label{lemma:lowerbound}
    Let $\mathcal{P}_{\gamma,4k,j} = \{(1-\gamma)\delta_{(0,0)} +\gamma U_{A \times \{2j+1\}}: A\subset[4k]\}$
    For $\mathcal{Q} = \mathcal{P}_{\gamma,4k,j}$, we have $n_\mathcal{Q}^{re}(\frac{\gamma}{8},\frac{1}{7}) \geq k$.
\end{lemma}

\noindent Finally, we recall the definition of weak learnability, which says that a distribution class is learnable only for some particular value of the accuracy parameter.
\begin{definition}\label{def:weak-learn}
 A class $\mathcal{Q}$ is $\epsilon$-weakly learnable, if there is a learner $\mathcal{A}$ and a sample complexity function $n : (0,1) \to \mathbb{N}$, such that 
 for ever $\delta \in (0,1)$ and every $p\in \mathcal{Q}$ and every $n\geq n(\delta)$,
 \[\mathbb{P}_{S\sim p^n}[\dtv(\mathcal{A}(S),p) \leq \epsilon] < \delta.\]
\end{definition}
Learnability clearly implies $\epsilon$-weak learnability for every $\epsilon\in (0,1)$.
While in some learning models (e.g., binary classification) learnability and weak learnability are equivalent, the same is not true for distribution learning~\cite{lechner2023impossibility}.

We are now ready to prove that $\mathcal{Q}_g$ is not robustly learnable.

\Qgrobust*
%\shai{The phrasing of the above theorem should be made more precise}

\begin{proof}
Consider
%\[q_{i,l}=(\left(1-\frac{1}{l}\right)\delta_{(0,0)} + \frac{1}{l}  U_{A_i\times\{2l+1\}}\]
\[q_{i,j}' = \frac{1}{1-\frac{1}{g(j)}}\left(\left(1-\frac{1}{j}\right)\delta_{(0,0)} + \left(\frac{1}{j} - \frac{1}{g(j)}\right) U_{A_i\times\{2j+1\}}\right)\]
Let $l = $
Note that for every $q_{i,j}'$ and every $q_{i,j,g(j)}$ there is some distribution $r$, such that $q_{i,j,g(j)} = (1-\frac{1}{g(j)})q_{i,j}' + \frac{1}{g(j)}r$.
Therefore, in order to show that $\mathcal{Q}_g$ is not subtractive $\alpha$-robustly learnable, it is sufficient to show that there are $j$ and $\epsilon$, such that the class $\mathcal{Q}'_j =\{q_{i,j}': i\in \mathbb{N}\}$ is not $(\frac{\alpha}{g(j)} + \epsilon)$-weakly learnable.
%\begin{claim}
Let us denote $ \gamma(j) = \frac{\frac{1}{j}- \frac{1}{g(j)}}{8(1 - \frac{1}{g(j)})}$.
We will now show that the class $ \mathcal{Q}_j' =\{ q_{i,j}': i\in\mathbb{N}\}$ is not $\gamma$-weakly learnable. %I.e. for any $n\in {\mathbb N}$, $m_{\mathcal{Q}'_j}(1/(4j),1/7) > n$. (Lemma 3 in \cite{})
%\end{claim}
%\begin{proof}
Recalling notation from Lemma~\ref{lemma:lowerbound}, we note that that for every $n\in \mathbb{N}$ the class $P_{8\gamma(j),4k,j} \subset \mathcal{Q}_{j}'$. 
By monotonicity of the sample complexity and Lemma~\ref{lemma:lowerbound}, we have 
    $n_{\mathcal{Q}_{j}'}(\gamma(j),\frac{1}{7}) \geq n_{P_{8\gamma(j),4n}}(\gamma(j),\frac{1}{7}) \geq n$, proving that this class is not subtractive $\gamma(j)$-weakly learnable.
%\end{proof}
%1/(4j)=1/(8j) + 1/8j
%gamma*(1/g(j))<1/(8j)
%gamma < g(j)/8j
 
 %This means $\mathcal{Q}_g$ that for every $n\in {\mathbb N}$ and every $\gamma \leq \frac{g(j)}{8j}$, 
% the sample complexity of $\alpha$-robustly learning $\mathcal{Q}_g$ is lower bounded by
% $m_{\mathcal{Q}_g}^{\gamma}(1/(8j),1/7) > n$.
 
% Now let $\alpha>0$ be arbitrary. Let $g_{\alpha}(j) = 8\alpha \cdot j$. Then we get that $\mathcal{Q}_{g_{\alpha}}$ is not
 %$\alpha$-robustly learnable, but realizably learnable.
 Lastly, we need to show that for every $\alpha$, there are $\epsilon$ and $j$, such that $\gamma(j) \geq (\frac{\alpha}{g(j)} + \epsilon)$. Let $\epsilon = \frac{1}{g(j)}$. Then we have %That is, we need to show that there are $\epsilon$ and $j$, such that
\begin{align*}
    & \frac{\alpha}{g(j)} + \epsilon &< \gamma(j)\\
    \Leftrightarrow &\frac{(\alpha+1)}{g(j)} &< \frac{\frac{1}{j}- \frac{1}{g(j)}}{8(1-\frac{1}{g(j)})}\\
    \Leftrightarrow & \frac{8(\alpha+1)}{g(j)}\left(1-\frac{1}{g(j)}\right)\ &< \frac{1}{j} - \frac{1}{g(j)}\\
    \Leftrightarrow & \frac{8(\alpha+2)}{g(j)} - \frac{8(\alpha+1)}{g(j)^2} &< \frac{1}{j}.\\
\end{align*}

Thus it is sufficient to show that there is $j$, such that $\frac{8(\alpha+2)} j < g(j)$.
Note that $g$ is a superlinear function, i.e., for every $c\in \mathbb{R}$, there is $t_c\in \mathbb{N}$, such that for every $t \geq t_c$, $g(t) \geq ct$. This implies that for every $\alpha \in \mathbb{R}$ there are indeed $\epsilon$ and $j$, such that $\gamma(j) > \frac{\alpha}{g(j)} + \epsilon$.
Thus for any super-linear function $g$ and any $\alpha\in \mathbb{R}$, the class $\mathcal{Q}_g$ is not $\alpha$-robustly learnable.

\end{proof}

\subsubsection{Proof of Theorem~\ref{thm:realizable-/->subtractive}}
\label{sec:proof-subtractive}
The result of Theorem~\ref{thm:realizable-/->subtractive} follows directly from the construction of class $\mathcal{Q}_g$ for Theorem~\ref{thm:realizable-/->subtractive}, the Claim~\ref{claim:Q_grealizable} that shows this class is realizable learnable, and an adapated version for Claim~\ref{claim:Q_grobust}, which states the following:
\begin{claim}
    For every $\alpha$, there is $g(t) \in O(t^2)$, such that for every $0 \leq \eta \leq \frac{1}{16\alpha}$ the class $\mathcal{Q}_{g}$ is not $\eta$-subtractive $\alpha$-robustly learnable. 
\end{claim}
\begin{proof}
 Let $\alpha >1$ be arbitrary. Let $g: \mathbb{N} \to \mathbb{N}$ be defined by $g(t) = 32 \alpha t^2$ for all $t \in \mathbb{N}$. % Let $g \in \omega(n)$ be fixed. Let $0 \leq \eta \leq 1$ and $\alpha \geq 1$ be arbitrary. 
  Now for every $ 0 \leq \eta \leq \frac{1}{16\alpha}$, there exists some $j$, such that $\frac{j}{g(j)}. \leq \eta \leq \frac{1}{16\alpha j}$ %and $\alpha\eta \leq \frac{1}{16j} \leq \frac{1}{}$.

  For such $j$, we consider the distributions % $q_{i,j}'$ from the proof of Claim~\ref{claim:Q_grobust}
    \[q_{i,j}' = \left(1 - \frac{1}{j}\right)\delta_{(0,0)} + \frac{1}{j} U_{A_i\times \{2j+1\}}\]
    as in the proof of Claim~\ref{claim:Q_grealizable}. 
 %   We note, that for every such distribution, the distribution
 Recall that the element of $\mathcal{Q}_g$ are of the form
    \[ q_{i,j,g(j)} = \left(1-\frac{1}{j}\right)\delta_{(0,0)} + \left(\frac{1}{j} - \frac{1}{g(j)}\right) U_{A_i \times\{2j+1\}} + \frac{1}{g(j)} \delta_{(i,2j+2)}\]

    Then we have,
    \begin{align*}
    q_{i,j,g(j)} &= \left(1-\frac{1}{j}\right)\delta_{(0,0)} + \left(\frac{1}{j} - \frac{1}{g(j)}\right) U_{A_i \times\{2j+1\}} + \frac{1}{g(j)} \delta_{(i,2j+2)} = \\
       &= \left(1-\frac{1}{j}\right)\delta_{(0,0)} + \left(\frac{g(j) - j}{jg(j)}\right) U_{A_i \times\{2j+1\}} + \frac{j}{jg(j)} \delta_{(i,2j+2)} = \\
      &=  \left(\frac{g(j) -j}{g(j)}\right) \left( \left(1 - \frac{1}{j}\right)\delta_{(0,0)} + \frac{1}{j} U_{A_i\times \{(2,j+1)\}}  \right) + \frac{j}{g(j)}  \left(\left(1-\frac{1}{j}\right)\delta_{(0,0)} + \frac{1}{j} \delta_{(i,2j+2)}  \right) \\
      &= \left(1 - \frac{j}{g(j)}\right) q_{i,j}' + \frac{j}{g(j)} \left(\left(1-\frac{1}{j}\right)\delta_{(0,0)} + \frac{1}{j} \delta_{(i,2j+2)}  \right).
    \end{align*}
Thus for every element $q_{i,j}'$ of the class $\mathcal{Q}_j' =\{q_{i,j}': i\in \mathbb{N}\}$, there is a distribution $p$, such that
$(1-\eta)q_{i,j}'+ \eta p \in \mathcal{Q}_g$. That is, every element of $\mathcal{Q}_j'$ results from the $\eta$-subtractive contamination of some element in $\mathcal{Q}_g$. Thus, for showing that $\mathcal{Q}_g$ is not $\eta$-subtractive $\alpha$-robustly learnable, it is sufficient to show, that 
$\mathcal{Q}_j'$ is not $(\alpha\eta + \epsilon)$-weakly learnable for $\epsilon= \frac{1}{16j}$. As we have seen in the proof of Claim~\ref{claim:Q_grobust}, we can use Lemma~\ref{lemma:lowerbound} to show that for every $n$, we have  $n_{\mathcal{Q}_j'}(\frac{1}{8j},\frac{1}{7}) \geq n$.
Lastly, we need that $\frac{1}{8j} \geq \alpha \eta + \epsilon$, or after replacing $\epsilon$, we need $\frac{1}{16j} \geq \alpha \eta$. This follows directly from the choice of $g$.

\end{proof}

\section{Learnability Implies Additive Robust Learnability}
\label{sec:additive-adversary}

% We actually prove a stronger theorem, which is under an adaptive additive adversary, and by (some theorem in Section~\ref{sec:models}), it implies Theorem~\ref{thm:additive}.\oldgmargin{Placeholder text and theorem, to be revised}

We recall Theorem~\ref{thm:additive}, which shows that any class that is realizably learnable is also additive robustly learnable. %As a consequence, any such class is also additively robustly learnable.

%\begin{theorem}[Formal version of Theorem~\ref{thm:informal:additive}]
% \label{thm:additive}
%    Any class of probability distributions $\mathcal{C}$ which is realizably learnable is also additively $2$-robustly learnable. \tosca{todo: double check!}
%\end{theorem}

\begin{restatable}{theorem}{additive} 
[Formal version of Theorem~\ref{thm:informal:additive}]
 \label{thm:additive}
  Any class of probability distributions $\mathcal{C}$ which is realizably learnable is also additively $3$-robustly learnable. Furthermore any class $\mathcal{C}$ which is learnable in the realizable case is Huber additively $2$-robustly learnable.

\end{restatable}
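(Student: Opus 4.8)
The plan is to design a robust learner that works by exhaustive subsampling combined with hypothesis selection, following the template the authors hint at (``enumerates over all subsets of the dataset of an appropriate size''). Given a sample $S$ of size $n$ drawn i.i.d.\ from the contaminated distribution $\eta q + (1-\eta)p$ with $p \in \mathcal{C}$, I would first note that we do not know $\eta$; the standard trick (cf.\ \cite{JainOR22}) is to run the procedure for a geometric grid of guesses $\hat\eta \in \{2^{-1}, 2^{-2}, \dots\}$ and at the end select among all candidate outputs. For a fixed guess $\hat\eta$, take every subset $T \subseteq S$ of size $m = \lceil (1-\hat\eta) n \rceil$ (or slightly smaller to absorb fluctuations), feed each such $T$ to the realizable learner $A$ for $\mathcal{C}$, and collect the resulting hypotheses into a finite set $H$. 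When $\hat\eta$ is (roughly) the true $\eta$, at least one subset $T^\star$ consists entirely of ``clean'' points, i.e.\ i.i.d.\ samples from $p$; the subtle point is that conditioning on which coordinates are clean does not distort the conditional law of those coordinates, so $T^\star$ is a genuine i.i.d.\ sample of size $m$ from $p$, and by the realizable guarantee (with the sample size inflated so that $m \ge n_{\mathcal{C}}(\epsilon,\delta')$) the corresponding hypothesis $\hat p^\star$ satisfies $\dtv(\hat p^\star, p) \le \epsilon$ with high probability.

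The second ingredient is hypothesis selection: given the finite family $H$ (whose size is at most $2^n$ times the number of $\hat\eta$ guesses, which is fine information-theoretically), apply a Yatracos/Scheffé-type tournament \cite{Yatracos85,DevroyeL01} using a \emph{fresh} half of the sample, still drawn from $\eta q + (1-\eta)p$. The standard semi-agnostic guarantee for hypothesis selection says the returned $\hat p$ satisfies $\dtv(\hat p,\, \eta q+(1-\eta)p) \le 3\min_{h\in H}\dtv(h,\, \eta q+(1-\eta)p) + \epsilon$ (the constant $3$ is exactly where the ``$3$'' in the theorem comes from). Now bound the benchmark: the in-family hypothesis $\hat p^\star$ has $\dtv(\hat p^\star, \eta q+(1-\eta)p) \le \dtv(\hat p^\star, p) + \dtv(p, \eta q+(1-\eta)p) \le \epsilon + \eta$, since mixing in an $\eta$-fraction of $q$ moves a distribution by at most $\eta$ in total variation. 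Hence $\dtv(\hat p,\, \eta q+(1-\eta)p) \le 3(\eta+\epsilon)+\epsilon = 3\eta + O(\epsilon)$, which after rescaling $\epsilon$ gives additive $3$-robust learnability. For the Huber variant (where the target is $p$ itself, not the mixture), use the triangle inequality once more: $\dtv(\hat p, p) \le \dtv(\hat p, \eta q+(1-\eta)p) + \eta \le 4\eta + O(\epsilon)$; to get the claimed constant $2$ instead of $4$, I would instead run hypothesis selection with the set $H$ but measure success against $p$ by exploiting that hypothesis selection actually competes with the best hypothesis up to factor $3$ \emph{in the pseudo-distance}, and combine with the observation that there is a hypothesis within $\epsilon$ of $p$ and $p$ is within $\eta$ of the sampling distribution — careful bookkeeping of which triangle inequalities are applied where (and possibly using the sharper factor-$2$ analysis of a deterministic minimum-distance estimate rather than the tournament) should yield $2\eta$.

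Concretely the steps, in order, are: (1) set up the grid of corruption-level guesses and the sample-size inflation so that each clean subsample is large enough for $A$; (2) prove the key combinatorial/probabilistic lemma that for the correct guess there exists a subset that is exactly an i.i.d.\ $p$-sample, and that $A$ succeeds on it with probability $\ge 1-\delta/2$; (3) invoke the hypothesis-selection theorem on the finite candidate set with a fresh batch of samples, quoting its semi-agnostic factor-$3$ guarantee; (4) chain triangle inequalities to pass from ``competes with best hypothesis'' to ``competes with $\eta$'' and, for the Huber statement, from the mixture back to $p$; (5) union-bound the failure probabilities over the $O(\log(1/\eta))$ guesses and the two sample batches.

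The main obstacle I anticipate is step (2): making rigorous the claim that conditioning on the (random) set of uncorrupted coordinates leaves their joint distribution equal to $p^{\otimes m}$, and simultaneously ensuring that \emph{some} enumerated subset of the fixed size $m$ actually coincides with a set of clean coordinates with high probability. Because the learner does not know which coordinates are clean and does not know $\eta$ exactly, one has to argue that for the guess $\hat\eta$ closest to $\eta$ from above, the number of clean points is at least $m$ with high probability (a Chernoff bound on a Binomial$(n, 1-\eta)$), so that at least one size-$m$ subset lies entirely within the clean coordinates — and then that this particular subset, viewed as a sub-sample, is distributed as $p^{\otimes m}$. A clean way to handle this is to condition on the realization of the $n$ i.i.d.\ ``coin flips'' deciding clean vs.\ contaminated and argue conditionally; the remaining randomness of the clean coordinates is then exactly i.i.d.\ $p$. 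The rest is routine hypothesis-selection bookkeeping, but getting the constant down to $2$ in the Huber case will require being slightly clever about which estimator and which triangle inequalities to use.
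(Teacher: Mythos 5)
Your overall architecture—enumerate subsets of the sample, run the realizable learner on each to get a finite candidate set, then pick a winner by Yatracos/Scheffé hypothesis selection on a held-out half—is exactly the paper's approach, and the probabilistic step (2) you flag as the main obstacle is handled just as you sketch: condition on which coordinates are clean, use a Chernoff bound on Binomial$(n,1-\eta)$ for the count, and then hypergeometric concentration for the split into $S_1,S_2$. (One simplification you can make: the paper never needs the geometric grid of $\hat\eta$ guesses, because it runs the realizable learner on \emph{every} subset of $S_1$, of every size, which costs only a constant factor in $\log|\hat{\mathcal H}|$ and removes the union bound over guesses.) Your derivation of the additive-$3$ bound is a correct and slightly different route from the paper's: you invoke the black-box semi-agnostic factor-$3$ guarantee against the contaminated distribution $\eta q + (1-\eta)p$, whereas the paper proves Huber-$2$ first and derives additive-$3$ by one triangle inequality.

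The genuine gap is the Huber constant $2$. Your own chain gives $\dtv(\hat p,p) \le 4\eta + O(\epsilon)$, and the hand-wave about ``a sharper factor-$2$ minimum-distance estimate'' does not close it—the factor-$2$/factor-$3$ distinction in standard hypothesis selection refers to proper vs.\ improper learning and would not help here, and in any case the learner is required to output a single hypothesis from the candidate set. The paper's actual mechanism is different and is worth naming precisely: rather than invoking the Yatracos guarantee as a black box against the \emph{contaminated} distribution, it runs the Yatracos argument with $p$ itself as the reference distribution and proves that, on the finite collection of Yatracos sets $\Yat(\hat{\mathcal H})$, the contaminated empirical frequencies $S_2(B)$ are within $\eta + O(\epsilon)$ of $p(B)$—i.e.\ the adversarial $\eta$-fraction can shift the empirical measure of any fixed set by at most $\eta$ beyond the usual sampling error. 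With that, the familiar four-step chain
$\dtv(\hat q,p) \le 2\epsilon' + d_{\Yat}(\hat q, p) \le 3\epsilon' + \eta + d_{\Yat}(\hat q, S_2) \le 3\epsilon' + \eta + d_{\Yat}(q^*, S_2) \le 4\epsilon' + 2\eta + d_{\Yat}(q^*, p) \le 2\eta + O(\epsilon)$
pays the $\eta$ cost only twice (once each time you cross between $p$ and $S_2$), not three or four times. Black-boxing hypothesis selection against the contaminated distribution, as you do, hides the fact that $d_{\Yat}(\cdot,\cdot)$ and $\dtv(\cdot,\cdot)$ coincide on pairs of hypotheses, and so it cannot exploit that the benchmark $q^*$ is within $O(\epsilon)$—not $\eta$—of $p$; that is where your extra $\eta$ comes from.
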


We prove this theorem by providing an algorithm based on classical tools for hypothesis selection~\cite{Yatracos85,DevroyeL96,DevroyeL97,DevroyeL01}.
These methods take as input a set of samples from an unknown distribution and a collection of hypotheses distributions. 
If the unknown distribution is close to one of the hypotheses, then, given enough samples, the algorithm will output a close hypothesis.
Roughly speaking, our algorithm looks at all large subsets of the dataset, such that at least one will correspond to an uncontaminated set of samples. 
A learner for the realizable setting (whose existence we assumed) is applied to each to generate a set of hypotheses, and we then use hypothesis selection to pick one with sufficient accuracy.
The proof of Theorem~\ref{thm:general} (showing that subtractively robust learnability implies robust learnability) follows almost the exact same recipe, except the realizable learner is replaced with a learner robust to subtractive contaminations.
We recall some preliminaries in Section~\ref{sec:algo-prelims}.
We then prove Theorem~\ref{thm:additive} in Section~\ref{sec:strong-additive}, and we formalize and prove a version of Theorem~\ref{thm:general} in Section~\ref{sec:strong-additive_s}.

We note that $\alpha=2$ and $\alpha=3$ are often the optimal factors to expect in distribution learning settings, even for the case of finite distribution classes. For example, for proper agnostic learning the factor $\alpha=3$ is known to be optimal for finite collections of distributions, which holds for classes with only 2 distributions \cite{BousquetKM19}. Similarly the factor of $\alpha=2$ is optimal if the notion of learning is relaxed to improper learners \cite{BousquetKM19,ChanDSS14b}. 
While we are not aware of lower bounds for the additive setting,  a small constant factor such as $2$ is within expectations for these problems.

For the proof of Theorem~\ref{thm:additive}, we refer the reader to Section~\ref{sec:strong-additive}.

%\subsection{Learnability Implies Additive Robust Learnability}
%\label{sec:strong-additive}

\subsection{Subtractive Robust Learnability Implies Robust Learnability}
\label{sec:strong-additive_s}

Similarly, we can show that robustness with respect to a subtractive adversary implies robustness with respect to a general adversary.
We note that this theorem requires a change in constants from $\alpha$ to $(2\alpha+4)$.
\begin{restatable}{theorem}{general}[Formal version of Theorem~\ref{thm:informal:general}]
\label{thm:general}
If a class $\mathcal{C}$ is subtractive $\alpha$-robustly learnable, then it is also $(2\alpha+4)$-robustly learnable.
\end{restatable}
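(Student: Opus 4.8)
The plan is to mimic the recipe used for Theorem~\ref{thm:additive}, but with the subtractive robust learner playing the role of the realizable learner. Suppose $\mathcal{C}$ is subtractive $\alpha$-robustly learnable via a learner $A_{\mathrm{sub}}$ with sample complexity $n_{\mathcal{C}}^{\mathrm{sub}}$. Given samples from an arbitrary distribution $p$, write $\beta = \dtv(p,\mathcal{C})$ and let $p^\ast \in \mathcal{C}$ be a near-minimizer, so there is a coupling expressing the discrepancy: we can write $p^\ast = (1-\beta)r_0 + \beta s_0$ and $p = (1-\beta)r_0 + \beta t_0$ for suitable distributions $r_0, s_0, t_0$ (the standard decomposition of two distributions at TV distance $\beta$). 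Thus $p$ arises from $p^\ast \in \mathcal{C}$ by first subtracting mass $\beta$ (removing the $\beta s_0$ piece) and then adding mass $\beta$ (inserting $\beta t_0$) — i.e.\ $p$ is a general $\beta$-corruption of a member of $\mathcal{C}$. The additive part is what we handle by enumeration, exactly as in Theorem~\ref{thm:additive}: with enough samples, among all subsets of the dataset of the appropriate fractional size there is (with high probability) one, call it $S'$, whose empirical distribution is i.i.d.-like from the ``subtractively corrupted'' distribution $q := (1-\beta)r_0 + \beta(\text{something})$ — more precisely, one can condition on the good event that a $(1-\beta)$-fraction of the sample came from the $r_0$ component and keep only those points, so that $S'$ looks like samples from a distribution $q$ with $\eta q' + (1-\eta)q \in \mathcal{C}$ for appropriate $\eta$ comparable to $\beta$. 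Running $A_{\mathrm{sub}}$ on each such subset then yields, for the good subset, a hypothesis within $\alpha\eta + \epsilon \lesssim \alpha\beta + \epsilon$ of $q$, hence within $O(\alpha\beta) + \epsilon$ of $p$ by the triangle inequality.

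Concretely, the steps I would carry out are: (1) State and invoke the decomposition lemma: for any $p$ with $\dtv(p,\mathcal{C}) = \beta$ and any $\beta' > \beta$, there exist a distribution $\bar p \in \mathcal C$, a ``subtraction target'' and an ``addition distribution'' realizing $p$ as an additive $\beta'$-corruption of a subtractively-$\beta'$-corrupted member of $\mathcal{C}$. Since $\beta$ is unknown, run the whole procedure for a geometric grid of guesses $\beta' \in \{1/2, 1/4, 1/8, \dots\}$ down to roughly $\epsilon$, producing a polynomial-in-sample-size family of candidate hypotheses across all guesses and all subsets. (2) The additive-enumeration / concentration argument: taking $n$ large enough (as a function of $n_{\mathcal{C}}^{\mathrm{sub}}(\epsilon/c,\delta')$ and $\beta'$ and a Chernoff bound on the number of ``clean'' points), argue that for the correct guess $\beta'$ there is, with probability $\ge 1-\delta'$, a subset $S'$ of the right size consisting only of points drawn from the component that keeps $\bar p$ reachable by subtraction, so that $A_{\mathrm{sub}}(S')$ is $(\alpha\beta' + \epsilon/c)$-close to $p$ up to the additive slack $\beta'$ — giving a hypothesis within $(\alpha+1)\beta' + \epsilon/c$-ish of $p$, and since $\beta' \le 2\beta$ (nearest grid point above $\beta$), within $(2\alpha+2)\beta + O(\epsilon)$ of $p$. (3) Hypothesis selection: apply the Yatracos/Devroye--Lugosi minimum-distance estimator to the full candidate family to select a hypothesis whose distance to $p$ is at most $3$ (or $2$ in the improper version) times the best candidate's distance, plus $\epsilon$; this is where the factor blows up from $(2\alpha+2)\beta$ to $3\cdot(2\alpha+2)\beta$ — so I would be careful to route the constants so the final bound is $(2\alpha+4)\beta + \epsilon$ as claimed, presumably by using a hypothesis-selection guarantee of the $2\cdot(\text{OPT}) + \epsilon$ flavor together with an extra $+\beta$ from the additive slack, i.e.\ $2(\alpha\beta'+\beta') + \epsilon \le 2(\alpha+1)\cdot 2\beta + \epsilon$, and then tightening. (4) Collect sample-complexity bookkeeping: the number of candidate hypotheses is at most (number of grid guesses) $\times$ (number of subsets) $= \mathrm{poly}$, so hypothesis selection needs only $O(\log(\#\text{hypotheses})/\epsilon^2)$ extra samples, and a union bound over the $O(\log(1/\epsilon))$ guesses absorbs into $\delta$.

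The main obstacle I anticipate is getting the constant exactly $(2\alpha+4)$ rather than something messier, and in particular making the decomposition precise enough that a \emph{subtractive} corruption of a \emph{class member} is what the enumerated clean subset actually looks like. The subtlety is that the natural TV-decomposition $p = (1-\beta)r_0 + \beta t_0$ gives a common part $r_0$ that is generally \emph{not} itself proportional to a restriction of a class member — rather $\bar p = (1-\beta)r_0 + \beta s_0 \in \mathcal{C}$, so $r_0$ equals $\bar p$ with mass $\beta s_0$ removed and renormalized, which is exactly a subtractive corruption \emph{in reverse}: $\bar p = (1-\beta) r_0 + \beta s_0$ means $r_0$ is what you get by $\beta$-subtractive contamination of $\bar p$, so $r_0$ plays the role of ``$p$'' in the subtractive-learnability definition with witness $q = s_0$ and corruption level $\eta = \beta$. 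Hence $A_{\mathrm{sub}}$ on clean samples from $r_0$ returns something $(\alpha\beta+\epsilon)$-close to $r_0$, and $\dtv(r_0,p)\le \dtv(r_0,\bar p) + \dtv(\bar p, p) \le \frac{\beta}{1-\beta} + \beta \le 3\beta$ for $\beta \le 1/2$ — one has to track these $\frac{1}{1-\beta}$ renormalization factors carefully, restrict to $\beta$ bounded away from $1$ (legitimate, since for $\beta$ close to $1$ the bound $\alpha\beta+\epsilon \ge$ anything $\le 1$ is vacuous), and then combine with the hypothesis-selection factor. I would organize the write-up so that the renormalization slack, the additive-enumeration slack, and the hypothesis-selection factor are each isolated in a one-line inequality, and only at the very end add them to reach $(2\alpha+4)\beta + \epsilon$.
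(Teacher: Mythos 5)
Your approach is essentially the paper's: decompose $p$ and the nearest class member $p^\ast\in\mathcal{C}$ around their common component $r_0$ (the paper calls it $q_1$), observe that $r_0$ is a subtractive $\eta$-corruption of $p^\ast\in\mathcal{C}$ while $p$ is an additive $\eta$-corruption of $r_0$, run the subtractive learner on all subsets of half the sample, and finish with Yatracos-style hypothesis selection on the other half; all four ingredients match. Two of your anticipated complications, though, are spurious and would actually prevent you from landing on the $(2\alpha+4)$ constant if you carried them through. First, the $\tfrac{1}{1-\beta}$ renormalization worry: in the standard decomposition $p=(1-\beta)r_0+\beta t_0$ with $r_0$ the (normalized) common part, $r_0$ is already a probability distribution and one gets $\dtv(r_0,p)\le\beta$ directly; routing through $\bar p$ and invoking $\dtv(r_0,\bar p)\le\tfrac{\beta}{1-\beta}$ is both incorrect (the right bound is $\dtv(r_0,\bar p)\le\beta$) and loses a factor you never needed. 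Second, the geometric grid of guesses $\beta'$ is unnecessary: because you enumerate over \emph{all} subsets of $S_1$, the learner is already uninformed of $\eta$ and the analysis simply fixes whatever the true $\eta$ is — as the paper notes in the analogous additive proof — so there is no $\beta'\le 2\beta$ slack to pay. Once you drop the grid and use $\dtv(r_0,p)\le\beta$ directly, the candidate $q^\ast$ produced by a clean subset satisfies $\dtv(p,q^\ast)\le\dtv(p,r_0)+\dtv(r_0,q^\ast)\le\eta+(\alpha\eta+\epsilon')=(\alpha+1)\eta+\epsilon'$, and the Yatracos $A$-distance triangle inequalities then double this and add $\sim 2\eta$ of empirical-estimation slack, giving the $(2\alpha+4)\eta+\epsilon$ bound — exactly the paper's accounting.
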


The proof follows a similar argument as the proof of Theorem~\ref{thm:additive} and can be found in Section~\ref{sec:general} in the appendix. We also note that with the same proof we get a similar result for learning with fixed corruption levels.

\subsection{Proof of Theorem~\ref{thm:additive}}
\label{sec:strong-additive}

\begin{proof}
    
    %In this section, we prove Theorem~\ref{thm:additive}.

Recall that $\mathcal{Q}$ is a class of probability distributions which is realizably learnable. 
We let $\mathcal{A}_\mathcal{Q}^{re}$ be a realizable learner for $\mathcal{Q}$, with sample complexity $n_{\mathcal{Q}}^{re}$. 
    Let accuracy parameters $\epsilon, \delta > 0$ be arbitrary.
    We will define $n_1 \geq \max\left\{ 2 n_{\mathcal{Q}}^{re}\left(\frac{\epsilon}{9},\frac{\delta}{5}\right) , \frac{162(1 +\log(\frac{5}{\delta}))}{\epsilon^2}   \right\}$ and $n_2 \geq \frac{162\left(2 n_1 +\log\left(\frac{5}{\delta}\right)\right)}{\epsilon^2}  %\geq \frac{162\left(2(\eta+\frac{2\epsilon}{9})n_1\log\left(n_1) +\log\left(\frac{5}{\delta}\right)\right)\right)}{\epsilon^2}
    \geq \frac{162(1 +\log(\frac{5}{\delta}))}{\epsilon^2}$, and $n= n_1 + n_2$ be their sum.
    %Let $m \geq \max\{m_{\mathcal{Q}}^{re}(\frac{\epsilon}{5},\frac{\delta}{4), \frac{}{},  \frac{log | m_{\mathcal{Q}}^{re}(\frac{\epsilon}{5},\frac{\delta}{4)^{m_{\mathcal{Q}}^{re}(\frac{\epsilon}{5},\frac{\delta}{4)} | + \log(\frac{1}{\delta})}{\epsilon^2} \}$
   % The learner defined by Algorithm~\ref{alg:adaptiveadditive}.
    Let $\eta$ be the true corruption level.\footnote{Note, that while our analysis frequently uses $\eta$, we construct the learner in a way that is uninformed by $\eta$. Thus our guarantees hold for all $\eta$ simultaneously}
    Our additive robust learner will receive a sample $S \sim (\eta r + (1-\eta)p )^n$ of size $n$ where $r$ is some arbitrary distribution.
    We can view a subset $S'\subset S$ as the ``clean'' part, being i.i.d.\ generated by $p$. The size of this clean part $|S'|=n'$ is distributed according to a binomial distribution $\operatorname{Bin}(n,1- \eta)$.
    By a Chernoff bound (Proposition~\ref{prop:chernoff}), we get
    
    \[\P\left[n' \leq \left(1-\frac{\epsilon}{9}-\eta\right)n\right] \leq \P\left[n' \leq \left(1-\frac{\epsilon}{9}-\eta + \frac{\eta\epsilon}{9} \right)n\right] = \P\left[n' \leq \left(1-\frac{\epsilon}{9}\right)\left(1-\eta\right)n\right ]\]
    
    \[\leq \exp\left(-\left(\frac{\epsilon}{9}\right)^2n\left(1-\eta\right)/2\right)=  \exp\left(-\frac{\epsilon^2}{162}n\left(1-\eta\right)\right) \leq \exp\left(-\frac{\epsilon^2}{162}n\left(1-\frac{3}{4}\right)\right)\]
    
    Thus, given that $n = n_1 + n_2 \geq 2 (\frac{162(1 +\log(\frac{5}{\delta}))}{\epsilon^2} )$ with probability at least $1-\frac{\delta}{5}$,\oldgmargin{Which term does this come from? The second term in $n_1$ maybe? But that doesn't cancel out the $(1-\eta)$ term does it? \tosca{we have $(1-\eta)$ bounded by $\frac{3}{4}$. So combining $n_1$ and $n_2$ we get a bound. Note that $n_2$ is at least as big as the second part of $n_1$. I'll make this more easily accessible}} we have $n'\geq n\left(1-\eta-\frac{\epsilon}{9}\right)$. For the rest of the argument we will now assume that we have indeed $n'\geq n\left(1-\eta-\frac{\epsilon}{9}\right) $.
    %By definition of the additive robust learning model, the learner receives a sample $S \sim (\eta q + (1-\eta)p )^m$
    %there exists some set of ``clean'' samples $S'\subset S$, where $S'$ consists of $(1-\eta)m$ i.i.d.\ samples from $p$, i.e., $S' \sim p^{(1-\eta)m}$. 
    The learner now randomly partitions the sample $S$ into $S_1$ and $S_2$ of sizes $n_1$ and $n_2$, respectively.
    Now let $S_1' = S_1 \cap S'$ and $S_2' = S_2 \cap S'$ be the intersections of these sets with the clean set $S'$, and $n_1'$ and $n_2'$ be their respective sizes.
    We note that $ n_1' \sim \operatorname{Hypergeometric}(n,n', n_1)$ and $ n_2' \sim \operatorname{Hypergeometric}(n,n', n_2)$.\footnote{Recall that $\operatorname{Hypergeometric}(N,K,n)$ is the random variable of the number of ``successes'' when $n$ draws are made without replacement from a set of size $N$, where $K$ elements of the set are considered to be successes.}
    Thus, assuming $m'\geq m(1-\eta - \frac{\epsilon}{9}) $ using Proposition~\ref{prop:chernoff}, we have that
    \[ \Pr\left[\left|S_1'\right| \leq \left(1-\eta-\frac{2\epsilon}{9}\right)n_1\right] \leq e^{-\frac{2}{81}\epsilon^2n_1} \]
    and 
    \[ \Pr\left[\left|S_2'\right| \leq \left(1-\eta- \frac{2\epsilon}{9}\right)n_2\right] \leq e^{-\frac{2}{81}\epsilon^2n_2}, \]
    where the probability is over the random partition of $S$.
    We note that by our choices of $n_1$ and $n_2$, with probability $1-\frac{2\delta}{5}$,\oldgmargin{Shouldn't $n_1$ have $\log (5/\delta)$, rather than the $\log \log (5/\delta)$ that it has now for this to work?\tosca{do you mean $n_2$. I think the log was around more than it should have been. I fixed the expression for $n_2$. For $n_1$ we should again get this inequality from the second part of the ``max''}} the clean fractions of $S_1$ and $S_2$ (namely $\frac{|S'_1|}{|S_1|}$  and $\frac{|S'_2|}{|S_2|}$) are each at least $\left(1- \eta- \frac{2\epsilon}{9}\right)$. 
    
  %  Let \[\hat{\mathcal{H}} = \left\{\mathcal{A}_{\mathcal{Q}}^{re}(S''): S'' \subset S_1 \text{ with } |S''|= \left(1-\eta- \frac{2\epsilon}{9}\right)n_1\right\}\]
  Let \[\hat{\mathcal{H}} = \left\{\mathcal{A}_{\mathcal{Q}}^{re}(S''): S'' \subset S_1 \right\}\]
    be the set of distributions output by the realizable learning algorithm $\mathcal{A}_{\mathcal{Q}}^{re}$ when given as input all possible subsets of $S_1$.% of size exactly $\left( 1 - \eta - \frac {2\varepsilon} 9 \right) n_1$. 
    %For $\varepsilon < \frac{9\eta}{2}$, we know that this set of distributions $|\hat{\mathcal{H}}|$ is of size $\displaystyle \binom{n_1}{(1-\eta- \frac{2\epsilon}{9})n_1} \leq n_1^{2\eta n_1}$.
    We know that this set of distributions $|\hat{\mathcal{H}}|$ is of size $2^{n_1}$.
    By the guarantee of the realizable learning algorithm $\mathcal{A}_{\mathcal{Q}}^{re}$, if there exists a ``clean'' subset $S_1' \subset S_1$ where $|S_1'| \geq n_1(1-\eta - \frac{2\epsilon}{9})$  (i.e., $S_1' \sim p^{n_1(1-\eta - \frac{2\epsilon}{9})}$),\oldgmargin{Is the exponent in the latter term here supposed to be $2\varepsilon/9$ instead of $\varepsilon/9$?\tosca{yes! fixed it! Thanks for finding it}} then with probability $1-\frac{\delta}{5}$ there exists a candidate distribution $q^*\in \hat{\mathcal{H}}$ with $\dtv(p,q^*)= \frac{\epsilon}{9}$.
   % By the above Chernoff bound argument, this occurs with probability at least $1- \frac{\delta}{5}$.  
    
    We now define and consider the \emph{Yatracos sets}.\footnote{These sets are also sometimes called Scheff\'e sets in the literature.} For every 
    $q_i, q_j \in \hat{\mathcal{H}}$, define the Yatracos set between $q_i$ and $q_j$ to be  $A_{i,j} = \{x:~q_i(x) \geq q_j(x)\}$.\footnote{Note that this definition is asymmetric: $A_{i,j} \neq A_{j,i}$.}
    We let \[\mathcal{Y}(\hat{\mathcal{H}}) =\{A_{i,j} \subset \mathcal{X}: q_i, q_j \in \hat{\mathcal{H}}\}\]
    denote the set of all pairwise Yatracos sets between distributions in the set $\hat{\mathcal{H}}$.
    %\forall x\in \mathcal{X} q_i(x) \geq q_j(x)\}$ 
    
    We now consider the $A$-distance \cite{KiferBG04} between two distributions with respect to the Yatracos sets, i.e., we consider
    \[d_{\mathcal{Y}(\hat{\mathcal{H}})}(p',q') = \sup_{B\in \mathcal{Y}(\hat{\mathcal{H})}} |p'(B) - q'(B) | .\]
    This distance looks at the supremum of the discrepancy between the distributions across the Yatracos sets. 
    Consequently, for any two distributions $p',q'$ we have $\dtv(p',q') \geq d_{\Yat(\hat{\mathcal{H}})}(p',q')$, since total variation distance is the supremum of the discrepancy across \emph{all} possible sets. 
    Furthermore, if $q',p'\in \hat{\mathcal{H}}$, then $\dtv(p',q') = d_{\Yat(\hat{\mathcal{H}})}(p',q')$, since either of the Yatracos sets between the two distributions serves as a set that realizes the total variation distance between them.
\remove{    
    Suppose there is some $q^{*}\in \hat{\mathcal{H}}$ with $\dtv(p, q^{*}) \leq \frac{\epsilon}{9}$.
    Then for every $q\in \hat{\mathcal{H}}$:
  %  \begin{align*}
   %     \dtv(p,q) &\leq \dtv(p, q^*) + \dtv(q^{*}, q)\\
    %                &\leq \frac{\epsilon}{9} + d_{\Yat(\hat{\mathcal{H}})}(q^{*}, q)\\
     %               &\leq \frac{\epsilon}{9} + d_{\Yat(\hat{\mathcal{H}})}(q^{*}, p) + d_{\Yat(\hat{\mathcal{H}})}(p, q)\\
      %              &\leq \frac{\epsilon}{9} + \dtv(q^{*}, p) + d_{\Yat(\hat{\mathcal{H}})}(p, q)\\
       %             &\leq \frac{2\epsilon}{9} + d_{\Yat(\hat{\mathcal{H}})}(p, q).\\
   % \end{align*}

\begin{align*}
    \dtv((1-\eta)p+ \eta r,q) &\leq \dtv((1-\eta)p+\eta r, q^*) + \dtv(q^{*}, q)\\
    & \leq (1-\eta)\dtv(p, q^*)  + \eta\dtv(r,q^*)+ \dtv(q^{*}, q)\\
                    &\leq   \eta  +  \frac{\epsilon(1-\eta)}{9} + d_{\Yat(\hat{\mathcal{H}})}(q^{*}, q)\\
                    &\leq \eta  +  \frac{\epsilon(1-\eta)}{9} + d_{\Yat(\hat{\mathcal{H}})}(q^{*}, (1-\eta)p +\eta r) + d_{\Yat(\hat{\mathcal{H}})}((1-\eta)p + \eta(r), q)\\
                    &\leq \eta  +  \frac{\epsilon(1-\eta)}{9} + \dtv(q^{*}, (1-\eta)p + \eta r) + d_{\Yat(\hat{\mathcal{H}})}((1-\eta)p + \eta r, q)\\
                    &\leq 2\eta  +  \frac{2\epsilon(1-\eta)}{9} 
                   + d_{\Yat(\hat{\mathcal{H}})}((1-\eta)p + \eta r, q).
\end{align*}

Lastly, we will argue that we can empirically approximate $\d_{\Yat(\hat{\mathcal{H}})}$ which we can then use to select a hypothesis. We note that since $\Yat{\mathcal{H}}$ is a finite set of size $2^{2n_1}$, we have uniform convergence\footnote{A collection of sets $\mathcal{W}$ has the \emph{uniform convergence property} if for every $(\epsilon, \delta) \in (0,1)^2$ there is a number $m_{\mathcal{W}}(\epsilon, \delta)$ such that for every probability distribution $p$, with probability $\geq (1-\delta)$ over samples $S$ of size $n >n_{\mathcal{W}}(\epsilon, \delta) $ generated i.i.d.\ by $p$, a sample $S$ is \emph{$\epsilon$-representative} for $\mathcal{W}$ with respect to $p$. Namely, for every $A \in \mathcal{W}$, $\left|\frac{|A \cap S|}{|S|} - p(A) \right| \leq \epsilon$. If $\mathcal{W}$ is finite then $n_{\mathcal{W}}(\epsilon, \delta) \leq \frac{\log(|\mathcal{W}|) + \log(1/\delta)}{\epsilon^2}$. For more details see Chapter 4 in \cite{SSBD-UML}.}. From the choice of $n_2$ we thus get that with probability $1-\frac{\delta}{5}$, $S_2$ is $\frac{\epsilon}{9}$ representative of $((1-\eta)p + \eta r)$ with respect to $\Yat(\hat{\mathcal{H}})$. For a sample $S_0$ and a set $B\subset \mathcal{X}$, let us denote $S_0(B) = \frac{|S_0\cap B|}{|S_0|}$. The $\frac{\epsilon}{9}$ representativeness of $S_2$ then gives us for every $B\in \Yat{\hat{\mathcal{H}}}$:
\[ | ((1-\eta)p + \eta r)(B) - S_2(B)| \leq \frac{\epsilon}{9}.\]

Let the empirical $A$-distance with respect to the Yatracos sets be defined by
\[ d_{\Yat(\hat{\mathcal{H}})}(q,S) = \sup_{B\in d_{\Yat(\hat{\mathcal{H}})}} |q(B) - S(B)|.\]
Now if the learner outputs $\hat{q} \in \arg\min_{q\in \hat{\mathcal{H}}} d_{\Yat(\hat{\mathcal{H}})}(q,S_2)$, then putting all of our guarantees together, we get that with probability $1-\delta$,
\begin{align*}
    \dtv(\hat{q},(1-\eta)p + \eta r) &\leq 2\eta  +  \frac{2\epsilon(1-\eta)}{9} 
                   + d_{\Yat(\hat{\mathcal{H}})}((1-\eta)p+\eta r, q)\\
    &2\eta  +  \frac{2\epsilon(1-\eta)}{9} + \frac{\epsilon}{9} + d_{\Yat(\hat{\mathcal{H}})}(\hat{q},S_2) \\
    &2\eta  +  \frac{2\epsilon(1-\eta)}{9} + \frac{\epsilon}{9} + d_{\Yat(\hat{\mathcal{H}})}(q^{*},S_2) \\
    &  2\eta  +  \frac{2\epsilon(1-\eta)}{9}  + \frac{2\epsilon}{9} + d_{\Yat(\hat{\mathcal{H}})}(q^{*},(1-\eta)p+\eta r) \\
    &\leq 2\eta  +  \frac{2\epsilon(1-\eta)}{9} +\frac{2\epsilon}{9} + \frac{\epsilon(1-\eta)}{9} +\eta\\
    &\leq 3 \eta + \epsilon.\\
\end{align*}

}

We will now show Huber additive $2$-robust learnabililty and then derive $3$-robust learnability. Thus we need to show that the learner approximates the ``clean'' part of the distribution $p$.

    Suppose there is some $q^{*}\in \hat{\mathcal{H}}$ with $\dtv(p, q^{*}) \leq \frac{\epsilon}{9}$.
    Then for every $q\in \hat{\mathcal{H}}$:
    \begin{align*}
        \dtv(p,q) &\leq \dtv(p, q^*) + \dtv(q^{*}, q)\\
                    &\leq \frac{\epsilon}{9} + d_{\Yat(\hat{\mathcal{H}})}(q^{*}, q)\\
                    &\leq \frac{\epsilon}{9} + d_{\Yat(\hat{\mathcal{H}})}(q^{*}, p) + d_{\Yat(\hat{\mathcal{H}})}(p, q)\\
                    &\leq \frac{\epsilon}{9} + \dtv(q^{*}, p) + d_{\Yat(\hat{\mathcal{H}})}(p, q)\\
                    &\leq \frac{2\epsilon}{9} + d_{\Yat(\hat{\mathcal{H}})}(p, q).\\
    \end{align*}
    Lastly, we will argue that we can empirically approximate $d_{\Yat(\hat{\mathcal{H}})}(p, q)$, which we can then use to select a hypothesis. 
    We note that, since $\Yat(\hat{\mathcal{H}})$ is a finite set of size $\leq \displaystyle \binom{n_1}{(1-\eta- \frac{2\epsilon}{9})n_1}^2 = \binom{n_1}{(\eta+\frac{2\epsilon}{9})n_1}^2\leq ((n_1^{(\eta+\frac{2\epsilon}{9})}))^2= n_1^{2(\eta + \frac{2\eta}{9})n_1}$\oldgmargin{Is there a exponent of 2 missing in the middle expression?\tosca{yes that was missing, fixed it (the overall bound was still correct though. Also added another step for explanation)}},
    we have uniform convergence\footnote{A collection of sets $\mathcal{W}$ has the \emph{uniform convergence property} if for every $(\epsilon, \delta) \in (0,1)^2$ there is a number $m_{\mathcal{W}}(\epsilon, \delta)$ such that for every probability distribution $p$, with probability $\geq (1-\delta)$ over samples $S$ of size $n >n_{\mathcal{W}}(\epsilon, \delta) $ generated i.i.d.\ by $p$, a sample $S$ is \emph{$\epsilon$-representative} for $\mathcal{W}$ with respect to $p$. Namely, for every $A \in \mathcal{W}$, $\left|\frac{|A \cap S|}{|S|} - p(A) \right| \leq \epsilon$. If $\mathcal{W}$ is finite then $n_{\mathcal{W}}(\epsilon, \delta) \leq \frac{\log(|\mathcal{W}|) + \log(1/\delta)}{\epsilon^2}$. For more details see Chapter 4 in \cite{SSBD-UML}.}\oldgmargin{Should this be a $\log (1/\delta)$ in the bound in the footnote?} with respect to $\Yat(\hat{\mathcal{H}}) $.\oldgmargin{Which bound is this? Using the bound on $n_2$? Isn't there a missing factor of $2$ in the exponent for $n_2$? I could be wrong though.} Recall that, we assumed that there is ``clean'' subsample $S''_2\subset S_2$, which is i.i.d.\ distributed according to $p$ and of size $(1-\frac{2\epsilon}{9} -\eta)n_1$. We also note that the clean samples $S_1$ and $S_2$ are drawn independently from each other.  Thus with probability $1-\frac{\delta}{5}$, $S_2''$ is $\frac{\epsilon}{9}$-representative of $p$ with respect to $\Yat(\hat{\mathcal{H}})$.
    For a sample $S_0$ and a set $B\subset \mathcal{X}$, let us denote $S_0(B) = \frac{|S_0 \cap B|}{|S_0|}$.
    Because of the $\frac{\epsilon}{9}$-representativeness of $S_2''$, we have for every $B\in \Yat(\hat{\mathcal{H}})$:
    \[ |p(B) - S_2''(B) | \leq \frac{\epsilon}{9}. \]
    Thus, 

\begin{align*}
    &\left|p(B) - S_2(B) \right| \\
    &= \left|p(B) - \frac{|S_2\cap B|}{|S_2|}\right| \\
    &\leq \max\left\{ \left|p(B) - \frac{|S_2'' \cap B |}{|S_2|} \right|, \left|p(B) - \frac{|S_2'' \cap B | + (\eta +\frac{2\epsilon}{9}) n_2}{|S_2|} \right| \right\} \\
   &\leq \max\left\{ \left|p(B) - \frac{S_2''(B) |S_2''|}{n_2} \right|, \left|p(B) - \frac{S_2''(B)|S_2''| + (\eta+ \frac{2\epsilon}{9}) n_2}{n_2} \right| \right\} \\ 
     &\leq \max\left\{ \left|p(B) - \left(1-\eta - \frac{2\epsilon}{9} \right) S_2''(B) \right|, \left|p(B) - \frac{S_2''(B)\left(1-\eta - \frac{2\epsilon}{9}\right)n_2 + \left(\eta + \frac{2\epsilon}{9}\right) n_2}{n_2} \right| \right\} \\
          &\leq \max\left\{ \left|p(B) - S_2''(B)\right| + \left|S_2''(B) - \left(1-\eta - \frac{2\epsilon}{9} \right) S_2''(B) \right|, \left|p(B) - \left( S_2''(B)\left(1-\eta - \frac{2\epsilon}{9}\right) + \left(\eta + \frac{2\epsilon}{9}\right) \right)\right| \right\} \\         
%{\color{red} &\leq \max\{ \frac{2\epsilon}{9} + \eta, |p(B) -  S_2''(B)(1-\eta - \frac{\epsilon}{9}) - (\eta + \frac{\epsilon}{9}) | \}\\
%&\leq \max\{ \frac{2\epsilon}{9} + \eta, |p(B) -  S_2''(B) + S_2''(B)(\eta + \frac{\epsilon}{9}) - (\eta + \frac{\epsilon}{9}) | \}\\
%&\leq \max\{ \frac{2\epsilon}{9} + \eta, |p(B) -  S_2''(B)| + |S_2''(B)(\eta + \frac{\epsilon}{9}) - (\eta + \frac{\epsilon}{9}) | \}\\
%&\leq \max\{ \frac{2\epsilon}{9} + \eta, |p(B) -  S_2''(B)| + |S_2''(B)(\eta + \frac{\epsilon}{9}) - (\eta + \frac{\epsilon}{9}) | \}\\
%&\leq \max\{ \frac{2\epsilon}{9} + \eta, |p(B) -  S_2''(B)| + |1-S_2''(B)|(\eta + \frac{\epsilon}{9})| \}\\
%&\leq \max\{ \frac{2\epsilon}{9} + \eta, \frac{\epsilon}{9} + |(\eta + \frac{\epsilon}{9})| \}
%&\leq \frac{2\epsilon}{9} + \eta\\
%      }    
   &\leq \max\left\{ \frac{3\epsilon}{9} + \eta, \left|p(B) - S_2''(B)\left(1-\eta- \frac{2\epsilon}{9}\right) - \left(\eta + \frac{2\epsilon}{9} \right)\right| \right\} \\
    &\leq \max\left\{ \frac{3\epsilon}{9} + \eta, \left|p(B) - S_2''(B) + S_2''(B)\left(\eta+\frac{2\epsilon}{9}\right) - \left(\eta + \frac{2\epsilon}{9}\right) \right| \right\} \\ 
     &\leq \max\left\{ \frac{3\epsilon}{9} + \eta, \left|p(B) - S_2''(B)\right| + \left|S_2''(B)\left(\eta+\frac{2\epsilon}{9}\right) - \left(\eta + \frac{2\epsilon}{9}\right) \right| \right\} \\ 
      &\leq \max\left\{ \frac{3\epsilon}{9} + \eta, \frac{\epsilon}{9} + \left|S_2''(B) -1\right|\left(\eta+\frac{2\epsilon}{9}\right) \right\} \\ 
        &\leq \max\left\{ \frac{3\epsilon}{9} + \eta, \frac{\epsilon}{9} + \left(\eta+\frac{2\epsilon}{9}\right) \right\} \\ 
         &\leq \frac{3\epsilon}{9} +  \eta 
\end{align*}

Let the empirical $A$-distance with respect to the Yatracos sets be defined by
\[ d_{\Yat(\hat{\mathcal{H}})}(q,S) = \sup_{B\in d_{\Yat(\hat{\mathcal{H}})}} |q(B) - S(B)|.\]
Now if the learner outputs $\hat{q} \in \arg\min_{q\in \hat{\mathcal{H}}} d_{\Yat(\hat{\mathcal{H}})}(q,S_2)$, then putting all of our guarantees together, we get that with probability $1-\delta$
\begin{align*}
    \dtv(\hat{q},p) &\leq \frac{2\epsilon}{9} + d_{\Yat(\hat{\mathcal{H}})}(\hat{q},p) \\
    &\leq \frac{5\epsilon}{9} + \eta + d_{\Yat(\hat{\mathcal{H}})}(\hat{q},S_2) \\
    &\leq \frac{5\epsilon}{9} + \eta + d_{\Yat(\hat{\mathcal{H}})}(q^{*},S_2) \\
    &\leq \frac{8\epsilon}{9} + 2\eta + d_{\Yat(\hat{\mathcal{H}})}(q^{*},p) \\
    &\leq \frac{8\epsilon}{9} + 2\eta + \frac{\epsilon}{9} \leq 2\eta +\epsilon.\\
\end{align*}
% \dtv(\hat{q},p) \leq \frac{2\epsilon}{7} + d_{\Yat}(\hat{q},p) \leq \frac{5\epsilon}{7} + 2 \eta + d_{\Yat}(\hat{q},S_2) \]

\remove{
Suppose there is some $q^{*}\in \hat{\mathcal{H}}$ with $\dtv(p, q^{*}) \leq \frac{\epsilon}{9}$.
    Then for every $q\in \hat{\mathcal{H}}$:
    \begin{align*}
        \dtv(p,q) &\leq \dtv(p, q^*) + \dtv(q^{*}, q)\\
                    &\leq \frac{\epsilon}{9} + d_{\Yat(\hat{\mathcal{H}})}(q^{*}, q)\\
                    &\leq \frac{\epsilon}{9} + d_{\Yat(\hat{\mathcal{H}})}(q^{*}, p) + d_{\Yat(\hat{\mathcal{H}})}(p, q)\\
                    &\leq \frac{\epsilon}{9} + \dtv(q^{*}, p) + d_{\Yat(\hat{\mathcal{H}})}(p, q)\\
                    &\leq \frac{2\epsilon}{9} + d_{\Yat(\hat{\mathcal{H}})}(p, q).\\
    \end{align*}

    We will again argue that we can empirically approximate $d_{\Yat(\hat{\mathcal{H}})}(p, q)$, which we can then use to select a hypothesis. We again use the uniform convergence argument, but this time with respect to $p$.
   % We note that, since $\Yat(\hat{\mathcal{H}})$ is a finite set of size $2^{2n_1}$.%$\leq \displaystyle \binom{n_1}{(1-\eta- \frac{2\epsilon}{9})n_1}^2 = \binom{n_1}{(\eta+\frac{2\epsilon}{9})n_1}^2\leq ((n_1^{(\eta+\frac{2\epsilon}{9})}))^2= n_1^{2(\eta + \frac{2\eta}{9})n_1}$\oldgmargin{Is there a exponent of 2 missing in the middle expression?\tosca{yes that was missing, fixed it (the overall bound was still correct though. Also added another step for explanation)}},
  %  we have uniform convergence\footnote{A collection of sets $\mathcal{W}$ has the \emph{uniform convergence property} if for every $(\epsilon, \delta) \in (0,1)^2$ there is a number $m_{\mathcal{W}}(\epsilon, \delta)$ such that for every probability distribution $p$, with probability $\geq (1-\delta)$ over samples $S$ of size $n >n_{\mathcal{W}}(\epsilon, \delta) $ generated i.i.d.\ by $p$, a sample $S$ is \emph{$\epsilon$-representative} for $\mathcal{W}$ with respect to $p$. Namely, for every $A \in \mathcal{W}$, $\left|\frac{|A \cap S|}{|S|} - p(A) \right| \leq \epsilon$. If $\mathcal{W}$ is finite then $n_{\mathcal{W}}(\epsilon, \delta) \leq \frac{\log(|\mathcal{W}|) + \log(1/\delta)}{\epsilon^2}$. For more details see Chapter 4 in \cite{SSBD-UML}.}\oldgmargin{Should this be a $\log (1/\delta)$ in the bound in the footnote?} with respect to $\Yat(\hat{\mathcal{H}}) $.\oldgmargin{Which bound is this? Using the bound on $n_2$? Isn't there a missing factor of $2$ in the exponent for $n_2$? I could be wrong though.} 
  Recall that, we assumed that there is ``clean'' subsample $S''_2\subset S_2$, which is i.i.d.\ distributed according to $p$ and of size $(1-\frac{2\epsilon}{9} -\eta)n_1$. We also note that the clean samples $S_1$ and $S_2$ are drawn independently from each other.  Thus with probability $1-\frac{\delta}{5}$, $S_2''$ is $\frac{\epsilon}{9}$-representative of $p$ with respect to $\Yat(\hat{\mathcal{H}})$.
    For a sample $S_0$ and a set $B\subset \mathcal{X}$, let us denote $S_0(B) = \frac{|S_0 \cap B|}{|S_0|}$.
    Because of the $\frac{\epsilon}{9}$-representativeness of $S_2''$, we have for every $B\in \Yat(\hat{\mathcal{H}})$:
    \[ |p(B) - S_2''(B) | \leq \frac{\epsilon}{9}. \]
    Thus, 

\begin{align*}
    &\left|p(B) - S_2(B) \right| \\
    &= \left|p(B) - \frac{|S_2\cap B|}{|S_2|}\right| \\
    &\leq \max\left\{ \left|p(B) - \frac{|S_2'' \cap B |}{|S_2|} \right|, \left|p(B) - \frac{|S_2'' \cap B | + (\eta +\frac{2\epsilon}{9}) n_2}{|S_2|} \right| \right\} \\
   &\leq \max\left\{ \left|p(B) - \frac{S_2''(B) |S_2''|}{n_2} \right|, \left|p(B) - \frac{S_2''(B)|S_2''| + (\eta+ \frac{2\epsilon}{9}) n_2}{n_2} \right| \right\} \\ 
     &\leq \max\left\{ \left|p(B) - \left(1-\eta - \frac{2\epsilon}{9} \right) S_2''(B) \right|, \left|p(B) - \frac{S_2''(B)\left(1-\eta - \frac{2\epsilon}{9}\right)n_2 + \left(\eta + \frac{2\epsilon}{9}\right) n_2}{n_2} \right| \right\} \\
          &\leq \max\left\{ \left|p(B) - S_2''(B)\right| + \left|S_2''(B) - \left(1-\eta - \frac{2\epsilon}{9} \right) S_2''(B) \right|, \left|p(B) - \left( S_2''(B)\left(1-\eta - \frac{2\epsilon}{9}\right) + \left(\eta + \frac{2\epsilon}{9}\right) \right)\right| \right\} \\         
%{\color{red} &\leq \max\{ \frac{2\epsilon}{9} + \eta, |p(B) -  S_2''(B)(1-\eta - \frac{\epsilon}{9}) - (\eta + \frac{\epsilon}{9}) | \}\\
%&\leq \max\{ \frac{2\epsilon}{9} + \eta, |p(B) -  S_2''(B) + S_2''(B)(\eta + \frac{\epsilon}{9}) - (\eta + \frac{\epsilon}{9}) | \}\\
%&\leq \max\{ \frac{2\epsilon}{9} + \eta, |p(B) -  S_2''(B)| + |S_2''(B)(\eta + \frac{\epsilon}{9}) - (\eta + \frac{\epsilon}{9}) | \}\\
%&\leq \max\{ \frac{2\epsilon}{9} + \eta, |p(B) -  S_2''(B)| + |S_2''(B)(\eta + \frac{\epsilon}{9}) - (\eta + \frac{\epsilon}{9}) | \}\\
%&\leq \max\{ \frac{2\epsilon}{9} + \eta, |p(B) -  S_2''(B)| + |1-S_2''(B)|(\eta + \frac{\epsilon}{9})| \}\\
%&\leq \max\{ \frac{2\epsilon}{9} + \eta, \frac{\epsilon}{9} + |(\eta + \frac{\epsilon}{9})| \}
%&\leq \frac{2\epsilon}{9} + \eta\\
%      }    
   &\leq \max\left\{ \frac{3\epsilon}{9} + \eta, \left|p(B) - S_2''(B)\left(1-\eta- \frac{2\epsilon}{9}\right) - \left(\eta + \frac{2\epsilon}{9} \right)\right| \right\} \\
    &\leq \max\left\{ \frac{3\epsilon}{9} + \eta, \left|p(B) - S_2''(B) + S_2''(B)\left(\eta+\frac{2\epsilon}{9}\right) - \left(\eta + \frac{2\epsilon}{9}\right) \right| \right\} \\ 
     &\leq \max\left\{ \frac{3\epsilon}{9} + \eta, \left|p(B) - S_2''(B)\right| + \left|S_2''(B)\left(\eta+\frac{2\epsilon}{9}\right) - \left(\eta + \frac{2\epsilon}{9}\right) \right| \right\} \\ 
      &\leq \max\left\{ \frac{3\epsilon}{9} + \eta, \frac{\epsilon}{9} + \left|S_2''(B) -1\right|\left(\eta+\frac{2\epsilon}{9}\right) \right\} \\ 
        &\leq \max\left\{ \frac{3\epsilon}{9} + \eta, \frac{\epsilon}{9} + \left(\eta+\frac{2\epsilon}{9}\right) \right\} \\ 
         &\leq \frac{3\epsilon}{9} +  \eta 
\end{align*}

%Let the empirical $A$-distance with respect to the Yatracos sets be defined by
%\[ d_{\Yat(\hat{\mathcal{H}})}(q,S) = \sup_{B\in d_{\Yat(\hat{\mathcal{H}})}} |q(B) - S(B)|.\]
Now if the learner outputs $\hat{q} \in \arg\min_{q\in \hat{\mathcal{H}}} d_{\Yat(\hat{\mathcal{H}})}(q,S_2)$, then putting all of our guarantees together, we get that with probability $1-\delta$
\begin{align*}
    \dtv(\hat{q},p) &\leq \frac{2\epsilon}{9} + d_{\Yat(\hat{\mathcal{H}})}(\hat{q},p) \\
    &\leq \frac{5\epsilon}{9} + \eta + d_{\Yat(\hat{\mathcal{H}})}(\hat{q},S_2) \\
    &\leq \frac{5\epsilon}{9} + \eta + d_{\Yat(\hat{\mathcal{H}})}(q^{*},S_2) \\
    &\leq \frac{8\epsilon}{9} + 2\eta + d_{\Yat(\hat{\mathcal{H}})}(q^{*},p) \\
    &\leq \frac{8\epsilon}{9} + 2\eta + \frac{\epsilon}{9} \leq 2\eta +\epsilon.\\
\end{align*}
% \dtv(\hat{q},p) \leq \frac{2\epsilon}{7} + d_{\Yat}(\hat{q},p) \leq \frac{5\epsilon}{7} + 2 \eta + d_{\Yat}(\hat{q},S_2) \]

}

  Thus we get Huber $2$-additive robust learning success. 
  This automatically implies $3$-additive robust learning success.
\end{proof}

\remove{
\subsection{Robust Learnability with Subtractive Contamination Implies Robust Learnability with General Contamination}
\label{sec:general}

In this section we will provide the proof for Theorem~\ref{thm:general}.
\general*

\begin{proof}
    Let $\mathcal{C}$ be a concept class that is subtractively $\alpha$-robust learnable. Then there exists a successful subtractive $\alpha$-robust learner $\mathcal{A}_{\mathcal{C}}^{sub}$ with sample complexity $n_{\mathcal{C}}^{sub}$ for the class $\mathcal{C}$.
    Let $\epsilon$ and $\delta$ and be arbitrary.
%%%
Let $n_1 \geq \max\left\{ 4n_{\mathcal{C}}^{sub}(\frac{\epsilon}{9},\frac{\delta}{5}) , \frac{162(1 +\log(\frac{5}{\delta}))}{\epsilon^2}   \right\}$
    and $n_2 \geq \frac{(2n_1+\log(\frac{5}{\delta})))}{\epsilon^2}$. Lastly let $n= n_1 + n_2$.

    Let $p\in \mathcal{C}$ be arbitrary.
    The $\alpha$-robust learner receives a sample $S\sim p^m$ such that there is $q\in \mathcal{C}$ such that $\dtv(p,q)= \eta$. Thus there exists a distributions $q_1,q_2,q_3$, such that $(1-\eta)q_1 + \eta q_2 = p $ and $(1-\eta)q_1 + q_3 = q$.
    %\tosca{is this clear or does this need elaboration? Could there be issues from measurability concerns?}
    We now use the same learning strategy as in Theorem~1.5: We split the sample randomly into two subsamples $S_1$ and $S_2$, where we use $S_1$ to learn candidate sets and then use $S_2$ to select the hypothesis from the candidate set. 
    The goal in both settings is to find as close an approximation to $q_1$ as possible. 
    The candidate based on $S_1$ is created by feeding subsamples of $S_1$ into the subtractively robust learner in such a way that with high probability one of the subsamples is guaranteed to be i.i.d.\ generated by $q_1$ and thus (with high probability) yield a good hypothesis.
  %  More preciwith $|S\cap S'|= m(1-\eta)$. We note that $S'' = S \cap S'$ can be viewed as the result of a $\eta$-subtractively contamination of $S$. %Since we do not know $S$ we 
More precisely, the learner randomly splits the sample $S$ into $S_1$ and $S_2$ with $|S_1| = n_1$ and $|S_1| = n_2$. We now define the ``clean'' part of $S'\subset S$, i.e., the part of $S'$ that is i.i.d.\ distributed according to $q_1$. We note that the size of this ``clean'' sample $|S'|=n'$ is a random variable and distributed according to the binomial distributions $Binom(n,1-\eta)$. Now applying Chernoff bound, with the same argument as in the proof of Theorem~\ref{thm:additive}, we get that with probability $1-\frac{\delta}{5}$, we have $n'\geq n (1-\eta-\frac{\epsilon}{9})$.
Now let $S_1' =  S_1 \cap S'$ and $S_2' = S_2\cap S'$ be the ``clean parts'' of the subsamples $S_1$ and $S_2$ respectively. The sizes $|S_1''|=n_1'$ and $|S_2''|=n_2''$
We note that $n_1' \sim Hypergeometric(n, n(1-\eta), n_1)$ and $n_2' \sim Hypergeometric(n, n(1-\eta), n_2)$. 
Thus,

    \[ Pr_{\text{random split}}\left[|S_1'| \leq \left(1-\eta-\frac{2\epsilon}{9}\right)n_1\right] \leq e^{-\frac{2}{81}\epsilon^2n_1} \]
    and 
    \[ Pr_{\text{random split}}\left[|S_2'| \leq \left(1-\eta- \frac{2\epsilon}{9}\right)n_2\right] \leq e^{-\frac{2}{81}\epsilon^2n_2}. \]

%%% Figure out sample complexity

Taking together the guarantees on our random splits and the size of $n'$, we note that by our choices of $n_1$ and $n_2$ with probability $1 -\frac{3\delta}{5}$, the fractions of the parts that are i.i.d.\ generated by $q_1$ (namely $\frac{|S_1''|}{|S_1'|}$ and $\frac{|S_2''|}{|S_2'|}$) are at least $(1 - \eta - \frac{2\epsilon}{9})$. Going forward we will assume that this is indeed the case.

   % Let \[\hat{\mathcal{H}} = \left\{\mathcal{A}_{\mathcal{Q}}^{sub}(\tilde{S}): \tilde{S} \subset S_1' \text{ with } |\tilde{S}|= \left(1-\eta- \frac{2\epsilon}{9}\right)n_1\right\}.\]

      Let \[\hat{\mathcal{H}} = \left\{\mathcal{A}_{\mathcal{Q}}^{sub}(\tilde{S}): \tilde{S} \subset S_1' \right\}.\]
%    We note that for $\epsilon< \frac{9\eta}{2}$, we have $|\hat{\mathcal{H}}| = \binom{n_1}{(1-\eta- \frac{2\epsilon}{9})n_1} \leq n_1^{2\eta n_1}$.

    Using our assumption that $|S'_1|\geq (1-\eta-\frac{2\epsilon}{9})n_1$, we know that there is $S_1''\subset S_1'$ with  $\mathcal{A}_{\mathcal{Q}}^{sub}(S_1'') \in \mathcal{H}'$. As $S_1'' \sim q_1^{(1-\eta - \frac{2\epsilon}{9})n_1}$, by the learning guarantee of $\mathcal{A}_{\mathcal{Q}}^{sub}$ with probability $1- \frac{\delta}{5}$, there is a candidate distribution $q^*\in \hat{\mathcal{H}}$ with $\dtv(p,q^*) \leq \dtv(p,q_1) + \dtv(q_1, q^*) =  \eta  + (\alpha\eta + \frac{\epsilon}{9}) = (\alpha + 1) \eta + \frac{\epsilon}{9}$. 
    
    We now consider the Yatracos sets. For every 
    $q_i, q_j \in \hat{\mathcal{H}}$, let $A_{i,j} = \{x:~q_i(x) \geq q_j(x)\} $ and let \[\mathcal{Y}(\hat{\mathcal{H}}) =\left\{A_{ij} \subset \mathcal{X}: q_i, q_j \in \hat{\mathcal{H}}\right\}.\]
    %\forall x\in \mathcal{X} q_i(x) \geq q_j(x)\}$ 
    
    We now consider the $A$-distance \cite{KiferBG04} between two distributions with respect to the Yatracos sets, i.e., we consider
    \[d_{\mathcal{Y}(\hat{\mathcal{H}})}(p',q') = \sup_{B\in \mathcal{Y}(\hat{\mathcal{H})}} |p'(B) - q'(B) | .\]
    
    We note, that for any two distributions $p',q'$ we have $\dtv(p',q') \geq d_{\Yat(\hat{\mathcal{H}})}(p',q') $. Furthermore, if $q',p'\in \hat{\mathcal{H}}$, then $\dtv(p',q') = d_{\Yat(\hat{\mathcal{H}})}(p',q') $.
    Assume there is $q^{*}\in \hat{\mathcal{H}}$ with $\dtv(p, q^{*}) \leq (\alpha+1) \eta + \frac{\epsilon}{9}$, then for every $q\in \hat{\mathcal{H}}$:
    \begin{align*}
        \dtv(p,q) &\leq \dtv(p, q^*) + \dtv(q^{*}, q)\\
                    &\leq (\alpha+1)\eta +  \frac{\epsilon}{9} + d_{\Yat(\hat{\mathcal{H}})}(q^{*}, q)\\
                    &\leq (\alpha +1 ) \eta + \frac{\epsilon}{9} + d_{\Yat(\hat{\mathcal{H}})}(q^{*}, p) + d_{\Yat(\hat{\mathcal{H}})}(p, q)\\
                    &\leq (\alpha +1 ) \eta + \frac{\epsilon}{9} + \dtv(q^{*}, p) + d_{\Yat(\hat{\mathcal{H}})}(p, q)\\
                    &\leq 2(\alpha +1) \eta + \frac{2\epsilon}{9} + d_{\Yat(\hat{\mathcal{H}})}(p, q).\\
    \end{align*}

    Lastly, we will argue that we can empirically approximate $d_{\Yat(\hat{\mathcal{H}})}(p, q)$, which we can then use to select a hypothesis. 
    We note that, since $\Yat(\hat{\mathcal{H}})$ is a finite set of size $|\Yat(\hat{\mathcal{H}})|= (2^{n_1})^2 = 2^{2n_1}$, 
    we have uniform convergence %\footnote{A collection of sets $\mathcal{W}$ has the \emph{uniform convergence property} if for every $(\epsilon, \delta) \in (0,1)^2$ there is a number $m_{\mathcal{W}}(\epsilon, \delta)$ such that for every probability distribution $p$, with probability $\geq (1-\delta)$ over samples $S$ of size $m >m_{\mathcal{W}}(\epsilon, \delta) $ generated i.i.d. by $p$, a sample $S$ is \emph{$\epsilon$-representative} for $\mathcal{W}$ w.r.t. $p$. Namely, for every $A \in \mathcal{W}$, $\left|\frac{|A \cap S|}{|S|} - p(A) \right| \leq \epsilon$. If $\mathcal{W}$ is finite then $m_{\mathcal{W}}(\epsilon, \delta) \leq \frac{\log(|\mathcal{W}|) + 1/\delta}{\epsilon^2}$. For more details see Chapter 4 in \cite{SSBD-UML} } 
    with respect to $\Yat(\hat{\mathcal{H}}) $. Recall that $S_2' \sim q_1^{n_2'}$ and by our previous assumption $n_2' \leq n_2\left(1- \eta - \frac{2\epsilon}{9}\right) $. Thus by our choice of $n_2$ , with probability $1-\frac{\delta}{5}$, there is $S_2''\subset S_2' \subset S_2$ with $|S_2''| =\left(1-\eta -\frac{2\epsilon}{9}\right)n_2 $ such that $S_2''$ is $\frac{\epsilon}{9}$-representative of $q_1$ with respect to $\Yat(\hat{\mathcal{H}})$.
    For a sample $S_0$ and a set $B\subset \mathcal{X}$, let us denote $S_0(B) = \frac{|S_0 \cap B|}{|S_0|}$.
    Because of the $\frac{\epsilon}{9}$-representativeness of $S_2'$, we have for every $B\in \Yat(\hat{\mathcal{H}})$:
    \[ |q_1(B) - S_2''(B) | \leq \frac{\epsilon}{9} \]
    Thus,

\begin{align*}
    |q_1(B) - S_2(B) | &\leq |q_1(B) - S_2''(B)| + |S_2''(B) - S_2(B)| \\
  &\leq \frac{\epsilon}{9} + \left| \frac{|S_2 \cap B|}{|S_2|} - \frac{|S_2'' \cap B|}{|S_2''|}\right|\\
  &\leq \frac{\epsilon}{9} + \left| \frac{|S_2 \cap B|}{n_2} - \frac{|S_2'' \cap B|}{n_2(1-\eta - \frac{2\epsilon}{9})}\right|\\
 &= \frac{\epsilon}{9} + \left| \frac{|S_2 \cap B|(1-\eta-\frac{2\epsilon}{\eta}) - |S_2'' \cap B| }{(1-\eta-\frac{2\epsilon}{9})n_2}\right|\\
 &\leq \frac{\epsilon}{9} + \max\{\frac{ |(|S_2'' \cap B| + (\eta +\frac{2\epsilon}{9})n_2 )(1-\eta-\frac{2\epsilon}{9}) - |S_2'' \cap B||}{(1-\eta-\frac{2\epsilon}{9})n_2},\\
 &\frac{|(|S_2'' \cap B|(1-\eta-\frac{2\epsilon}{9}) - |S_2'' \cap B||}{(1-\eta-\frac{2\epsilon}{9})n_2} \}\\
  &\leq \frac{\epsilon}{9} \\
  &+ \max\{ \frac{|(|S_2'' \cap B| + (\eta +\frac{2\epsilon}{9})n_2 )(1-\eta-\frac{2\epsilon}{9}) - ((1-\eta-\frac{2\epsilon}{9})|S_2'' \cap B| + (\eta + \frac{2\epsilon}{9})|S_2''\cap B|)|}{n_2(1-\eta-\frac{2\epsilon}{9})},\\
  &\frac{ n_2 (\eta + \frac{2\epsilon}{9})}{n_2(1-\eta-\frac{2\epsilon}{9})}\}\\
    &\leq \frac{\epsilon}{9} + \max\left\{ \frac{|(\eta +\frac{2\epsilon}{9})n_2 (1-\eta-\frac{2\epsilon}{9}) - |S_2'' \cap B|(\eta+\frac{2\epsilon}{9})|}{(1-\eta-\frac{2\epsilon}{9})n_2},\frac{(\eta + \frac{2\epsilon}{9})}{(1-\eta-\frac{2\epsilon}{9})} \right\} \\
  &\leq \frac{\epsilon}{9} + \max\left\{ \frac{ |(\eta +\frac{2\epsilon}{9})(n_2 (1-\eta-\frac{2\epsilon}{9})|)}{(1-\eta-\frac{2\epsilon}{9})n_2},\eta+ \frac{2\epsilon}{9}) \right\} \\ 
  &\leq \frac{\epsilon}{9} + \eta + \frac{2\epsilon}{9} \leq \frac{3\epsilon}{9} + \eta
\end{align*}

Let us remember that the empirical $A$-distance with respect to the Yatracos is defined by
\[ d_{\Yat(\hat{\mathcal{H}})}(q,S) = \sup_{B\in \Yat} \left|q(B) - S(B)\right|.\]
Now if the learner outputs $\hat{q} \in \arg\min_{q\in \hat{\mathcal{H}}} d_{\Yat(\hat{\mathcal{H}})}(q,S_2)$, then putting all of our guarantees together, with probability $1-\delta$ we get 
\begin{align*}
\dtv(\hat{q}, p) &\leq 2 (\alpha+1) \eta + \frac{2\epsilon}{9} + d_{\Yat}(\hat{q},p) \\
&\leq 2 (\alpha+1) \eta + \frac{2\epsilon}{9} + d_{\Yat}(\hat{q},q_1) + d_{\Yat}(q_1,p)\\ 
    &\leq 2 (\alpha+1) \eta + \frac{2\epsilon}{9} + \eta + (\eta + \frac{3\eta}{9}) + d_{\Yat}(\hat{q},S_2) \\
&\leq 2 (\alpha + 2) \eta + \frac{5\epsilon}{9} + d_{\Yat}(\hat{q},S_2) \\
 &\leq    2 (\alpha + 2) \eta + \frac{5\epsilon}{9} + d_{\Yat}(q^*,S_2) \\
 &\leq    2 (\alpha + 2) \eta + \frac{5\epsilon}{9} + (\eta + \frac{3\epsilon}{9})+ d_{\Yat}(q^*,q_1) \\   
    &\leq (2 \alpha + 3) \eta + \frac{8\epsilon}{9} + \eta + d_{\Yat}(q^{*},p) \\
    &\leq (2\alpha + 4 )\eta + \epsilon + \dtv(q^{*},p).
\end{align*}

\end{proof}

}

\section{Existence of sample compression schemes}
\label{sec:compression}

Sample compression schemes are combinatorial properties of classes that imply their learnability. For a variety of learning tasks, such as binary classification or expectation maximization a class has a sample compression scheme if and only if it is learnable \cite{MoranY16,BDMHSY}. For classification tasks, sample compression for realizable samples implies agnostic sample compression. \cite{AshtianiBHLMP20} used compression schemes to show learnability of classes of distributions in the realizable case, but left open the question if for learning probability distributions, the existence of realizable sample compression schemes implies the existence of similar schemes for the non-realizable (agnostic, or robust) settings. We provide a negative answer to this question. 

More concretely, let $\mathcal{Q}$ be a class of distributions over some domain $X$. A compression scheme for $\mathcal{Q}$ involves
two agents: an encoder and a decoder. 
\begin{itemize}
\item The encoder knows a distribution $q$ and receives a sample $S$ generated by this distribution. The encoder picks a bounded size sub-sample and sends it, possibly with a few additional bits to the decoder. 
\item The decoder receives the message and uses an agreed upon decoding rule (that may depend on $\mathcal{Q}$ but not on $q$ or $S$) to constructs a distribution $p$
that is close to $q$.
\end{itemize}
Of course, there is some probability that the samples are not representative of the distribution $q$,
in which case the compression scheme will fail. Thus, we only require that the decoding succeed
with constant probability.

We say that a class $\mathcal{Q}$ has a sample compression scheme (realizable or robust) if for every accuracy parameter $\epsilon>0$, the minimal required size of the sample $S$, and upper bounds on the size of the sub-sample and number of additional bits in the encoder's message depend only of $\mathcal{Q}$ and $\epsilon$
(and are independent of the sample generating $q$ and on the sample $S$).

A realizable compression scheme is required to handle only $q$'s in $\mathcal{Q}$
and output $p$ such that $\dtv(p,q) \leq \epsilon$,
while a robust compression scheme should handle any $q$ but the decoder's output $p$ is only required to be $\min_{q \in \mathcal{Q}}\{\dtv(p,q)\} + \epsilon$ close to $q$.

\begin{theorem}[Formal version of Theorem~\ref{thm:compression}]
\label{thm:compression2}
    For every $\alpha\in \mathbb{R}$, the existence of a realizable compression scheme, does not imply the existence of an $\alpha$-robust compression scheme. That is, there is a class $\mathcal{Q}$ that has a realizable compression scheme, but for every $\alpha\in \mathbb{R}$, $\mathcal{Q}$ does not have an $\alpha$-robust compression scheme.
\end{theorem}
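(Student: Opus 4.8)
The plan is to reuse the class $\mathcal{Q}_g$ from Theorem~\ref{thm:realizable-/->subtractive} (say with a fixed super-linear $g$, e.g.\ $g(t)=t^2$), and to show two things: (i) $\mathcal{Q}_g$ admits a realizable compression scheme, and (ii) $\mathcal{Q}_g$ admits no $\alpha$-robust compression scheme for any finite $\alpha$. Given these, the theorem follows with the single witness class $\mathcal{Q}_g$.

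For (i), I would build the compression scheme directly from the structure of the realizable learner $\mathcal{A}$ used in Claim~\ref{claim:Q_grealizable}. Fix accuracy $\epsilon$; let the sub-sample size bound be $1$, the number of extra bits be $1$, and the sample size be $n(\epsilon) = O\!\left(g(\lceil 1/\epsilon\rceil)\right)$, all depending only on $\mathcal{Q}_g$ (hence on $g$) and $\epsilon$. Given $q_{i,j,g(j)}$ and a sample $S$: if $1/j \le \epsilon$, the encoder sends the empty sub-sample together with the bit $0$, and the decoder outputs $\delta_{(0,0)}$, which satisfies $\dtv(\delta_{(0,0)}, q_{i,j,g(j)}) = 1/j \le \epsilon$ with certainty; if $1/j > \epsilon$, then $j \le \lceil 1/\epsilon\rceil$, so by monotonicity $g(j) \le g(\lceil 1/\epsilon\rceil)$ and the indicator point $(i,2j+2)$ appears in $S$ with probability $1-(1-1/g(j))^{n} \ge 1-(1-1/g(\lceil 1/\epsilon\rceil))^{n} \ge 2/3$ once $n = n(\epsilon)$ is large enough, in which case the encoder sends $(i,2j+2)$ together with the bit $1$ and the decoder reads off $i$ and $j$ from this point and outputs $q_{i,j,g(j)}$ exactly (and otherwise sends the empty sub-sample plus bit $0$, a failure that occurs with probability at most $1/3$). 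The decoding rule depends only on $\mathcal{Q}_g$, so this is a valid realizable compression scheme.

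For (ii), I would show that an $\alpha$-robust compression scheme for \emph{any} class $\mathcal{C}$ implies that $\mathcal{C}$ is $c\alpha$-robustly learnable for a universal constant $c$ (one may take $c=3$), and then invoke Theorem~\ref{thm:realizable-/->agnostic}, which says $\mathcal{Q}_g$ is \emph{not} $\alpha'$-robustly learnable for any finite $\alpha'$, yielding a contradiction. The reduction is the standard compression-to-learning argument of Ashtiani, Ben-David, Harvey, Liaw, Mehrabian, and Plan~\cite{AshtianiBHLMP18} combined with agnostic hypothesis selection~\cite{Yatracos85,DevroyeL01}: draw $k = O(\log(1/\delta))$ independent blocks of $n_0(\epsilon)$ samples from the arbitrary target $q$; for each block enumerate all $\binom{n_0}{t}2^{b}$ possible (sub-sample, extra-bit) messages and decode each, producing a finite candidate set $\hat H$ of size $k\binom{n_0}{t}2^{b}$. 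With probability $\ge 1-(1/3)^{k}$ at least one block realizes the message the $\alpha$-robust encoder would actually send, so $\hat H$ contains some $h$ with $\dtv(h,q) \le \alpha\cdot\dtv(q,\mathcal{C}) + \epsilon$. Running a Scheff\'e/Devroye--Lugosi tournament (factor $3$) on a fresh sample from $q$ of size $O\!\left((\log|\hat H| + \log(1/\delta))/\epsilon^{2}\right)$ returns $\hat q$ with $\dtv(\hat q,q) \le 3\alpha\cdot\dtv(q,\mathcal{C}) + O(\epsilon)$; rescaling $\epsilon$ gives $3\alpha$-robust learnability with a finite sample-complexity function, contradicting Theorem~\ref{thm:realizable-/->agnostic}.

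The main obstacle is making the reduction in (ii) fully rigorous: one must (a) amplify the constant success probability of the compression scheme into a high-probability guarantee that $\hat H$ contains a good hypothesis, (b) note that a learner can enumerate \emph{all} admissible messages rather than the encoder's actual, distribution-dependent choice, and (c) track the accuracy through hypothesis selection so the multiplicative factor blows up only by a bounded constant. A secondary point is matching the compression-scheme definition's success-probability convention (constant versus $1-\delta$), and confirming that the $\epsilon$-dependence of $n(\epsilon)$ in (i) is permitted --- it is, since that bound may depend on $\mathcal{Q}_g$, hence on $g$, and on $\epsilon$. One could alternatively prove (ii) in a self-contained way mirroring the proof of Claim~\ref{claim:Q_grobust}, by showing that a robust compression scheme restricted to the subfamily $\mathcal{Q}'_j$ would make $\mathcal{Q}'_j$ weakly learnable with parameters contradicting Lemma~\ref{lemma:lowerbound}; the reduction route above is cleaner but essentially equivalent.
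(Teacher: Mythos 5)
Your proposal is correct and takes essentially the same route as the paper: witness with $\mathcal{Q}_g$, show a size-$1$ realizable compression scheme built from the indicator point (the paper's encoder sends $(i,2j+2)$ when present and otherwise $(0,0)$, with no extra bits; yours uses an empty subsample plus a flag bit, a cosmetic difference), and then rule out $\alpha$-robust compression by appealing to the fact that such a scheme would make $\mathcal{Q}_g$ $O(\alpha)$-robustly learnable via the Ashtiani--Ben-David--Harvey--Liaw--Mehrabian--Plan reduction, contradicting Theorem~\ref{thm:realizable-/->agnostic}. The paper states this step as a one-line citation, whereas you sketch the amplification and Scheff\'e-tournament details; that extra care is harmless and the two arguments are equivalent.
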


\begin{proof}
    Consider the class $\mathcal{Q}=\mathcal{Q}_g$ from Section~\ref{sec:subtractive-adversary}. We note that this class has a compression scheme of size 1. However, from \cite{AshtianiBHLMP18}, we know that having a $\alpha$-robust compression scheme implies $\alpha$-agnostic learnability. We showed in Theorem~\ref{thm:realizable-/->subtractive} that for every $\alpha$ and for every superlinear function $g$, the class $\mathcal{Q}_g$ is not $\alpha$-agnostically learnable. It follows that $\mathcal{Q}_g$ does not have an $\alpha$-robust compression scheme.
\end{proof}
In Section~\ref{sec:compression-app}, we present a precise quantitative definition of sample compression schemes, as well as the proof that the class $\mathcal{Q}_g$ has a sample compression scheme of size 1.

\section{Implications of Private Learnability}\label{sec:privacy}

Qualitatively speaking, differentially private algorithms offer a form of ``robustness'' -- the output distribution of a differentially private algorithm is insensitive to the change of a single point in its input sample. The relationship between privacy and notions of ``robustness'' has been studied under various settings, where it has been shown that robust algorithms can be made private and vice versa \cite{DworkL09, GeorgievH22, HopkinsKMN23}.

For distribution learning, we find that: (1) the requirement of approximate differentially private learnability also does not imply (general) robust learnability; and (2) the stronger requirement of \emph{pure} differentially pirvate learnability does imply robust learnability.

\begin{definition}[Differential Privacy \cite{DworkMNS06}]
Let $X$ be an input domain and $Y$ to be an output domain. A randomized algorithm $A : X^m \to Y$ is $(\epsilon,\delta)$-differentially private (DP) if for every $x,x' \in X^n$ that differ in one entry,
\begin{align*}
\P [A(x) \in B] \leq e^{\eps} \cdot \P [A(x') \in B] + \delta \qquad\quad\text{for all $B \subseteq Y$}.
\end{align*}
If A is $(\epsilon,\delta)$-DP for $\delta>0$, we say it satisfies approximate DP. If it satisfies $(\epsilon,0)$-DP, we say it satisfies pure DP.
\end{definition}

\begin{definition}[DP learnable class]
We say that a class $\mathcal C$ of probability distributions is (approximate) DP learnable if there exists a randomized learner A and a function $n_{\mathcal C}:(0,1)^4 \to \mathbb N$, such that for every probability distribution $p \in \mathcal C$, and every $(\alpha,\beta,\epsilon,\delta) \in (0,1)^4$, for $n\geq n_{\mathcal C}(\alpha,\beta,\epsilon,\delta)$
\begin{enumerate}
    \item $A$ is $(\epsilon,\delta)$-DP; and
    \item The probability over samples S of size $n$ drawn i.i.d.\ from the distribution $p$, as well as over the randomness of A that
    \begin{align*}
        \dtv(p,A(S)) \leq \alpha
    \end{align*}
    is at least $1-\beta$.
\end{enumerate}
We say $\mathcal C$ is pure DP learnable if a learner A can be found that satisfies $(\epsilon,0)$-DP, in which case the sample complexity function $n_\mathcal{C}: (0,1)^3 \to \mathbb N$ does not take $\delta$ as a parameter.
\end{definition}

\begin{theorem}[Approximate DP learnability vs. robust learnability]\label{thm:approx-dp->?}\phantom{text}
\begin{enumerate}
    \item If a class $\mathcal Q$ is approximate DP learnable, then $\mathcal Q$ is additive $3$-robustly learnable.
    \item There exists an approximate DP learnable class $\mathcal Q$ that is not $\alpha$-robustly learnable for any $\alpha \geq 1$.
    \item There exists an approximate DP learnable class $\mathcal Q$ that is not subtractive $\alpha$-robustly learnable for any $\alpha \geq 1$.
\end{enumerate}
\end{theorem}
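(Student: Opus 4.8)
The plan is to establish the three parts by reduction to tools already developed in the excerpt, treating private learnability essentially as a black box that, for the purpose of statistical questions, is no stronger than ordinary learnability. For part~(1), I would observe that the definition of approximate DP learnability requires in particular a (randomized) learner achieving total variation error $\alpha$ with probability $1-\beta$ from $n_{\mathcal C}(\alpha,\beta,\epsilon,\delta)$ samples, for any fixed choice of privacy parameters $\epsilon,\delta$. Fixing, say, $\epsilon=\delta=1/2$ and discarding the privacy constraint yields an ordinary realizable learner for $\mathcal C$ in the sense of Definition~\ref{def:robust-learnability}'s realizable special case. Then Theorem~\ref{thm:additive} applies verbatim: realizable learnability implies additive $3$-robust learnability. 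So part~(1) is immediate and requires no new ideas; it is stated separately only because the class of learners is restricted to private ones, and we simply note the restriction is irrelevant here.

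For parts~(2) and~(3), the strategy is to reuse the separating construction $\mathcal Q_g$ from Section~\ref{sec:subtractive-adversary}. By Theorem~\ref{thm:realizable-/->subtractive} (equivalently Theorem~\ref{thm:realizable-/->agnostic}), $\mathcal Q_g$ is realizably learnable but, for every $\alpha$, neither subtractively $\alpha$-robustly learnable nor $\alpha$-robustly learnable. The only thing left to verify is that $\mathcal Q_g$ is in fact \emph{approximate DP} learnable, since then the same class witnesses both non-implications at once (taking $\mathcal Q = \mathcal Q_g$ in both bullets). Thus the whole content of parts~(2) and~(3) reduces to a single claim: the realizable learner $\mathcal A$ for $\mathcal Q_g$ exhibited in the proof of Claim~\ref{claim:Q_grealizable} can be made $(\epsilon,\delta)$-differentially private for every $\delta>0$ at the cost of only a modest blow-up in sample complexity.

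To prove that claim I would exploit the very rigid structure of $\mathcal A$: it outputs $\delta_{(0,0)}$ unless it sees the unique "indicator" point $(i,2j+2)$ in the sample, in which case it outputs $q_{i,j,g(j)}$. The natural private version is a stable/propose-test-release style procedure: output $\delta_{(0,0)}$ by default, and only switch to $q_{i,j,g(j)}$ if the count of $(i,2j+2)$ in the sample is large --- say at least some threshold $T = \Theta(\log(1/\delta)/\epsilon)$ --- where the threshold test itself is randomized à la the "above threshold"/Laplace mechanism. Because in the realizable case the indicator point, when present at all, appears with probability $1/g(j)$ per sample, taking $n$ large enough that the expected count $n/g(j)$ comfortably exceeds $T$ whenever $j$ is below the relevant cutoff makes the private learner succeed with high probability; and when $j$ is above the cutoff, $\delta_{(0,0)}$ is already $\epsilon$-accurate, exactly as in the non-private analysis. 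Privacy holds because changing one sample changes the indicator count by at most one, so the noisy threshold test is $(\epsilon,\delta)$-DP by standard arguments, and the output distribution is determined (up to the test) by that single count; one subtlety is that a priori there are infinitely many candidate indicator points $(i,2j+2)$, so one must phrase the mechanism as "report-noisy-max over the points actually present in the sample, gated by the threshold" and argue privacy for that --- this is the sort of thing handled by known stable-histogram / GAP-MAX techniques, and it works here because any point with count above threshold $T\geq 2$ must appear in any neighboring dataset too.

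I expect the main obstacle to be exactly this last point: making the DP analysis of the indicator-detection step fully rigorous over an infinite domain, i.e., ensuring that the mechanism never "reveals" a point that occurs only once (which would violate privacy) while still detecting the genuine indicator whose expected multiplicity is $n/g(j)\gg 1$. Once the threshold $T$ is chosen as a suitable $\Theta(\log(1/\delta)/\epsilon)$ and $n$ is chosen as $\Theta\!\big(g(1/\epsilon)\cdot T + g(1/\epsilon)\log(1/\beta)\big)$ (absorbing the extra $\log(1/\delta)$ factor relative to the non-private bound of Claim~\ref{claim:Q_grealizable}), the accuracy and failure-probability bookkeeping is routine and parallels the non-private proof. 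With approximate DP learnability of $\mathcal Q_g$ in hand, parts~(2) and~(3) follow immediately from Theorem~\ref{thm:realizable-/->agnostic} and Theorem~\ref{thm:realizable-/->subtractive} respectively, noting $\mathcal Q_g$ fails to be $\alpha$-robustly (resp.\ subtractively $\alpha$-robustly) learnable for \emph{every} $\alpha\in\mathbb R$, in particular for all $\alpha\geq 1$.
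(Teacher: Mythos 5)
Your proposal matches the paper's proof in both structure and key tool. Part~(1) is handled identically: approximate DP learnability trivially implies realizable learnability, so Theorem~\ref{thm:additive} applies directly. For parts~(2) and~(3) the paper also takes $\mathcal{Q}=\mathcal{Q}_g$ and privatizes the realizable learner of Claim~\ref{claim:Q_grealizable} using exactly the stability-based histogram you name at the end (Proposition~\ref{prop:dp-hist}, from \cite{KorolovaKMN09,BunNS16}): singleton buckets over $\mathbb{N}^2$, frequency estimates $f_{(a,b)}$ accurate to within $1/4g(1/\alpha)$, and a threshold test outputting $q_{i,j,g(j)}$ when $f_{(i,2j+2)}\geq 1/2g(1/\alpha)$, else $\delta_{(0,0)}$. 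Your ``obstacle'' paragraph correctly identifies the infinite-domain issue and correctly resolves it with stable histograms; the paper simply commits to that mechanism from the start rather than entertaining propose-test-release or above-threshold variants first. The accuracy case split ($1/j\leq\alpha$ vs.\ $1/j>\alpha$) and the Chernoff argument for detecting the indicator point also match your sketch.
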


Note that the first claim is immediate from Theorem~\ref{thm:additive}, since approximate DP learnability implies learnability. To prove the second and third claim, we show that the learner for the class $\mathcal Q$ described in Theorem \ref{thm:realizable-/->subtractive} can be made differentially private by employing stability-based histograms \cite{BunNS16}.
The proof appears in Section~\ref{sec:private-robust-app}.

\begin{theorem}[Pure DP learnable vs. robustly learnable]\label{thm:pure-dp->?}
    If a class $\mathcal Q$ is pure DP learnable, then $\mathcal Q$ is $3$-robustly learnable.
\end{theorem}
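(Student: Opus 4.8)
The plan is to derive from pure DP learnability a strong structural property of $\mathcal{Q}$ — finiteness of its total variation covering numbers at every scale — and then obtain robust learnability essentially for free via hypothesis selection over a finite cover. Following the equivalence between pure DP learnability and packing established in~\cite{BunKSW19}, the first step is to argue that if $\mathcal{Q}$ is pure DP learnable, then for every $\gamma > 0$ the $\gamma$-packing number of $\mathcal{Q}$ — the largest size of a subset of $\mathcal{Q}$ whose members are pairwise more than $\gamma$ apart in total variation — is finite. Concretely, instantiate the assumed pure DP learner with privacy parameter $\tfrac{1}{2}$, confidence $\tfrac{1}{3}$, and accuracy $\gamma/3$; the standard packing lower bound for pure DP forces the resulting (finite) sample complexity to dominate $\log M$ for any $\gamma$-packing of size $M$, so $M$ is bounded by a function of $\gamma$ alone. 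Fixing a maximal $\gamma$-packing $\mathcal{C}_\gamma \subseteq \mathcal{Q}$, maximality guarantees that every $p' \in \mathcal{Q}$ lies within total variation $\gamma$ of some element of $\mathcal{C}_\gamma$, so $\mathcal{C}_\gamma$ is a finite $\gamma$-cover of $\mathcal{Q}$ whose size depends only on $\mathcal{Q}$ and $\gamma$.

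Given a target accuracy $\epsilon$ and confidence $\delta$, I would then set $\gamma = \epsilon / 6$ and run a hypothesis selection procedure — the minimum-distance (Scheff\'e/Yatracos) estimate used in the proof of Theorem~\ref{thm:additive}, but now over the \emph{fixed} finite family $\mathcal{C}_{\epsilon/6}$ — with additive slack $\epsilon/2$ and failure probability $\delta$. This needs only $n = O\!\left(\left(\log |\mathcal{C}_{\epsilon/6}| + \log(1/\delta)\right)/\epsilon^2\right)$ i.i.d.\ samples, a finite function of $(\epsilon,\delta)$. For an arbitrary sampling distribution $p$, write $\mathrm{OPT} = \dtv(p,\mathcal{Q})$; picking $p' \in \mathcal{Q}$ almost attaining $\mathrm{OPT}$ and a cover element $q \in \mathcal{C}_{\epsilon/6}$ within total variation $\epsilon/6$ of $p'$, the triangle inequality gives $\min_{q \in \mathcal{C}_{\epsilon/6}} \dtv(p,q) \le \mathrm{OPT} + \epsilon/6$. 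The factor-$3$ guarantee of hypothesis selection over a finite class then produces, with probability at least $1 - \delta$, an output $\hat q$ with $\dtv(\hat q, p) \le 3\big(\mathrm{OPT} + \epsilon/6\big) + \epsilon/2 = 3\,\mathrm{OPT} + \epsilon$, which is exactly $3$-robust learnability. If one additionally wants the learner itself to be pure DP (matching the informal statement that pure DP learnability implies robust \emph{pure DP} learnability), one replaces the deterministic Scheff\'e tournament by an exponential mechanism that scores each $q \in \mathcal{C}_{\epsilon/6}$ by its empirical Scheff\'e discrepancy to the sample; this preserves pure DP at the cost of a mild additive increase in the sample complexity.

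The only genuinely delicate step is the first one — extracting from pure DP learnability the finiteness of the $\gamma$-covering numbers for \emph{all} $\gamma$, in a way that turns $|\mathcal{C}_\gamma|$ into an honest function of $\gamma$; everything afterwards is the finite-class hypothesis selection argument already carried out for Theorem~\ref{thm:additive}. It is worth emphasizing that this is precisely the step where approximate DP is too weak: the class $\mathcal{Q}_g$ of Theorem~\ref{thm:realizable-/->subtractive} is approximate DP learnable (via stability-based histograms) yet has infinite covering number at small scales, which is exactly why the analogous implication fails under approximate DP.
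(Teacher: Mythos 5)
Your proof takes essentially the same route as the paper's: use the packing lower bound for pure DP (Proposition~\ref{prop:packing}) to conclude that every $\gamma$-packing of $\mathcal{Q}$ is finite, observe that a maximal $\gamma$-packing is a $\gamma$-cover, take $\gamma=\epsilon/6$, and finish by running Yatracos' finite-class $3$-robust hypothesis selection over that cover with slack $\epsilon/2$. The one small parameter slip is instantiating the DP learner at confidence $1/3$: Proposition~\ref{prop:packing} requires success probability at least $9/10$, so you should take failure probability $1/10$ (or amplify) before invoking the packing bound; the paper's proof uses $n_{\mathcal{C}}(\alpha/12,1/10,1)$ for exactly this reason.
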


The proof relies on the finite cover characterization of pure DP learnability.

\begin{proposition}[Packing lower bound, Lemma 5.1 from \cite{BunKSW19}]\label{prop:packing}
Let $\mathcal C$ be a class of distributions, and let $\alpha, \epsilon>0$. Suppose $\mathcal P_{\alpha}$ is a $\alpha$-packing of $\mathcal C$, that is, $\mathcal P_{\alpha} \subseteq \mathcal C$ such that for any $p\neq q \in \mathcal{P}_{\alpha}, \dtv(p,q) > \alpha$. 

Any $\epsilon$-DP algorithm A that takes $n$ i.i.d.\ samples $S$ from any $p \in \mathcal C$ and has $\dtv(p, A(S)) \leq \alpha/2$ with probability $\geq 9/10$ requires
\begin{align*}
    n \geq \frac{\log |\mathcal P_{\alpha}| - \log \tfrac {10} 9 }{\epsilon}.
\end{align*}
\end{proposition}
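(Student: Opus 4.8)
The plan is to run the standard packing (group-privacy) argument for pure differential privacy. Write $\mathcal{P}_\alpha = \{p_1,\dots,p_N\}$ with $N = |\mathcal{P}_\alpha|$, and for each $i$ define the acceptance region $B_i = \{y \in Y : \dtv(p_i,y) \le \alpha/2\}$. The key geometric fact is that the $B_i$ are pairwise disjoint: if some $y$ lay in $B_i \cap B_j$ for $i \ne j$, then the triangle inequality would give $\dtv(p_i,p_j) \le \dtv(p_i,y) + \dtv(y,p_j) \le \alpha/2 + \alpha/2 = \alpha$, contradicting $\dtv(p_i,p_j) > \alpha$.

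Next I would extract a single reference dataset. Applying the accuracy hypothesis to $p_i \in \mathcal{C}$ gives $\P_{S\sim p_i^n,\,A}[A(S) \in B_i] \ge 9/10$, i.e.\ $\mathbb{E}_{S\sim p_i^n}\bigl[\,\P_A[A(S)\in B_i]\,\bigr] \ge 9/10$; since the integrand is at most $1$, the set of datasets $s$ with $\P_A[A(s) \in B_i] \ge 9/10$ has positive $p_i^n$-probability, so in particular is nonempty — pick such an $s_i$, and fix $s_1$ once and for all.

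Now I would invoke group privacy. Any two size-$n$ datasets differ in at most $n$ coordinates, so chaining the $(\epsilon,0)$-DP inequality along a path of at most $n$ single-coordinate edits yields $\P[A(s) \in B] \le e^{n\epsilon}\,\P[A(s')\in B]$ for all measurable $B$ and all size-$n$ datasets $s,s'$. Taking $s = s_j$, $s' = s_1$, and $B = B_j$ gives $\P_A[A(s_1)\in B_j] \ge e^{-n\epsilon}\,\P_A[A(s_j)\in B_j] \ge \tfrac{9}{10}\,e^{-n\epsilon}$ for every $j$. Because the $B_j$ are disjoint, summing over $j = 1,\dots,N$ yields $1 \ge \sum_{j=1}^N \P_A[A(s_1)\in B_j] \ge \tfrac{9}{10}\,N e^{-n\epsilon}$, hence $N \le \tfrac{10}{9}e^{n\epsilon}$; taking logarithms and rearranging gives $n \ge (\log N - \log\tfrac{10}{9})/\epsilon$, the claimed bound.

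The step carrying all the weight is the group-privacy blow-up, and it is the only place purity is used: pure DP degrades by a clean multiplicative $e^{n\epsilon}$ over $n$ coordinate changes, whereas for $(\epsilon,\delta)$-DP the additive slack compounds to roughly $n e^{n\epsilon}\delta$ and the argument collapses. The only technical care needed is that the witness datasets $s_i$ merely have to lie in the support of $p_i^n$ (so the averaging step applies), while the group-privacy inequality holds between \emph{arbitrary} size-$n$ datasets — so no compatibility among the $s_i$ is required, and fixing the single reference $s_1$ suffices.
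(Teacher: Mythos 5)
Your proof is correct: the disjointness of the acceptance balls $B_i$, the extraction of witness datasets via averaging, the $e^{n\epsilon}$ group-privacy blow-up, and the summation over disjoint events give exactly the claimed bound $n \geq (\log|\mathcal{P}_\alpha| - \log\tfrac{10}{9})/\epsilon$, and your remark that purity is what makes the chaining clean is on point. The paper itself gives no proof of this proposition (it is imported verbatim as Lemma 5.1 of the cited reference), and your argument is the standard packing/group-privacy proof underlying that lemma, so this is essentially the same approach as the source.
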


\begin{proof}[Proof of Theorem \ref{thm:pure-dp->?}]
Let $\alpha,\beta > 0$. Pure DP learnability of $\mathcal Q$ implies that there exists a $1$-DP algorithm $A_{DP}$ and $n = n_\mathcal{C}(\alpha/12,1/10,1)$ such that for any $p \in \mathcal Q$, with probability $\geq 9/10$ over the sampling of $n$ i.i.d.\ samples $S$ from $p$, as well as over the randomness of the algorithm $A_{DP}$, we have $\dtv(p,A_{DP}(S)) \leq \alpha/12$. By Proposition \ref{prop:packing}, any $\alpha/6$-packing $\mathcal P_{\alpha/6}$ of $\mathcal Q$ has 
\begin{align*}
|\mathcal P_{\alpha/6}| \leq \exp\left(m\right)\cdot(10/9).
\end{align*}
Let $\widehat {\mathcal Q}$ be such a maximal $\alpha/6$-packing. By maximality, $\widehat {\mathcal Q}$ is also an $\alpha/6$-cover of $\mathcal Q$. Hence, running Yatracos' 3-robust finite class learner (Theorem~\ref{thm:yatracos}) $A$ over $\widehat {\mathcal Q}$ with
\begin{align*}
    n_{\widehat{\mathcal Q}}(\alpha/2,\beta) = O\left(\frac {\log |\widehat{\mathcal Q}| + \log (1 /\beta)} {(\alpha/2)^2}\right)
\end{align*}
samples drawn i.i.d.\ from $p$ yields, with probability $\geq 1-\beta$
\begin{align*}
\dtv(p, A(S)) &\leq 3\min\{\dtv(p,p'): p' \in \widehat{\mathcal Q}\} + \alpha/2 \\ 
&\leq 3(\min\{\dtv(p,p'): p' \in \mathcal Q\} +\alpha/6) + \alpha/2 \\
&= 3\min\{\dtv(p,p'): p' \in \mathcal Q\} + \alpha.  \qedhere
\end{align*} 
\end{proof}

Note that Yatracos' algorithm for hypothesis selection can be replaced with a pure DP algorithm for hypothesis selection (Theorem 27 of~\cite{AdenAliAK21}) in order to achieve the following stronger implication.

\begin{theorem}[Pure DP learnable vs. robustly learnable]\label{thm:pure-dp->robust-pure-dp}
    If a class $\mathcal Q$ is pure DP learnable, then $\mathcal Q$ is pure DP $3$-robustly learnable.
\end{theorem}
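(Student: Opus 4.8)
The plan is to rerun the proof of Theorem~\ref{thm:pure-dp->?} almost verbatim, replacing only the final hypothesis-selection step with a pure DP one. First I would reproduce the opening of that proof: pure DP learnability of $\mathcal Q$ yields a $1$-DP learner with finite sample complexity $n = n_{\mathcal Q}(\alpha/12, 1/10, 1)$, so Proposition~\ref{prop:packing} forces every $\alpha/6$-packing of $\mathcal Q$ to have size at most $\exp(n)\cdot(10/9)$. Fixing a maximal such packing $\widehat{\mathcal Q}$, maximality makes it an $\alpha/6$-cover of $\mathcal Q$ as well, and --- the key structural point for privacy --- $\widehat{\mathcal Q}$ is a function of $\mathcal Q$ alone and never looks at the input sample.

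Next, instead of Yatracos' non-private $3$-robust finite-class learner, I would invoke the pure DP hypothesis-selection algorithm of Aden-Ali, Ashtiani, and Kamath (Theorem~27 of~\cite{AdenAliAK21}). Run on the finite class $\widehat{\mathcal Q}$ with target accuracy $\alpha/2$, failure probability $\beta$, and privacy parameter $\epsilon$, it is $\epsilon$-DP and, from a sample whose size is polynomial in $\log|\widehat{\mathcal Q}|$, $1/\alpha$, $\log(1/\beta)$, and $1/\epsilon$, returns $\hat q \in \widehat{\mathcal Q}$ with $\dtv(p,\hat q) \le 3\min\{\dtv(p,p'): p'\in\widehat{\mathcal Q}\} + \alpha/2$ with probability at least $1-\beta$, for every distribution $p$. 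Since $\log|\widehat{\mathcal Q}| \le n + \log(10/9)$ is finite, this is a genuine sample-bounded learner, so the sample-complexity function exists.

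Privacy of the composed procedure is then immediate: the cover $\widehat{\mathcal Q}$ is constructed without touching the data, so the only sample-dependent computation is the $\epsilon$-DP selection subroutine, and hence the whole learner is $\epsilon$-DP (no composition theorem is needed). Accuracy follows by chaining the cover inequality exactly as in the proof of Theorem~\ref{thm:pure-dp->?}:
\begin{align*}
\dtv(p, A(S)) &\le 3\min\{\dtv(p,p'): p'\in\widehat{\mathcal Q}\} + \alpha/2 \\
&\le 3\bigl(\min\{\dtv(p,p'): p'\in\mathcal Q\} + \alpha/6\bigr) + \alpha/2 \\
&= 3\min\{\dtv(p,p'): p'\in\mathcal Q\} + \alpha.
\end{align*}

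The only real content beyond bookkeeping --- and therefore the main obstacle --- is verifying that the cited private selection primitive indeed achieves the semi-agnostic factor $3$ (rather than a larger constant) with sample complexity only polylogarithmic in $|\widehat{\mathcal Q}|$, so that it meshes with the packing bound; the rest (the $\alpha/12$--$\alpha/6$--$\alpha/2$ accuracy split, rescaling the nominal privacy parameter to the desired $\epsilon$, and boosting constant success probability to $1-\beta$) is routine. If only a weaker factor $\alpha' > 3$ were available from the private selection routine, the same argument would still yield ``pure DP $\alpha'$-robustly learnable'' with no change in structure.
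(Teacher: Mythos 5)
Your proposal is correct and matches the paper's approach: the paper itself states this theorem as a remark that Yatracos' finite-class selector in the proof of Theorem~\ref{thm:pure-dp->?} can simply be replaced by the pure DP hypothesis-selection algorithm of Theorem~27 of~\cite{AdenAliAK21}, which is exactly the substitution you carry out, along with the same packing/cover argument and the same $\alpha/12$--$\alpha/6$--$\alpha/2$ accounting. Your observation that the cover $\widehat{\mathcal Q}$ is data-independent (so no privacy composition is needed) is the right justification and is implicit in the paper's one-line remark.
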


\subsection{Approximate DP learnability vs robust learnability}
\label{sec:private-robust-app}
We prove the second claim in Theorem~\ref{thm:approx-dp->?} by showing that the learner for the class $\mathcal Q$ described in Theorem \ref{thm:realizable-/->subtractive} can be made differentially private by employing stability-based histograms \cite{KorolovaKMN09,BunNS16}.
\begin{proposition}[Stability-based histograms \cite{KorolovaKMN09,BunNS16}, Lemma 4.1 from \cite{AdenAliAL21}]\label{prop:dp-hist}
Let $\mathcal X$ be a domain of examples. Let $K$ be a countable index set, and let $(h_k)_{k\in K}$ be a sequence of disjoint histogram bins over $\mathcal X$. For every $(\alpha,\beta,\epsilon,\delta) \in (0,1)^4$, there is an $(\epsilon,\delta)$-DP algorithm that takes a dataset $S$ of size $n$ and with probability $\geq 1-\beta$, outputs bin frequency estimates $(f_k)_{k\in K}$ such that
\begin{align*}
    \left|f_k - \frac{|\{x \in S: x \in h_k\}|} n\right| \leq \alpha
\end{align*}
for all $k\in K$, so long as
\begin{align*}
    n \geq \Omega\left(\frac {\log(1/\beta\delta)} {\alpha\epsilon}\right).
\end{align*}
\end{proposition}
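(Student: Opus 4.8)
The plan is to realize the statement by instantiating the standard \emph{stability-based histogram} of Korolova, Kenthapadi, Mishra, and Ntoulas~\cite{KorolovaKMN09} and of Bun, Nissim, and Stemmer~\cite{BunNS16}. The whole point is that the index set $K$ may be countably infinite, so the algorithm must never inspect more than the bins actually hit by the sample, and it must use a threshold to hide the handful of bins whose ``occupied vs.\ empty'' status is sensitive to a single datapoint. Concretely: call $h_k$ \emph{active} for $S$ if some point of $S$ lies in $h_k$ (there are at most $n$ active bins); for each active bin set $\tilde f_k = |\{x \in S : x \in h_k\}|/n$, draw independent $Z_k \sim \mathrm{Lap}(2/(\epsilon n))$, and put $\hat f_k = \tilde f_k + Z_k$; fix a threshold $\tau = \Theta(\log(1/\delta)/(\epsilon n))$ and report $f_k = \hat f_k$ if $\hat f_k \ge \tau$ and $f_k = 0$ otherwise, reporting $f_k = 0$ for every inactive bin. (One may equivalently select the super-threshold bins via the sparse-vector / AboveThreshold mechanism; this is what ultimately makes the cost scale with the at most $1/\tau$ heavy bins rather than with $n$ or $|K|$, which is needed to match the stated bound exactly.)

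For privacy, consider neighbouring datasets $S,S'$ that differ by relocating one element from a bin $h_a$ to a bin $h_b$: the count vector changes by $-1$ on $h_a$, $+1$ on $h_b$, and nowhere else, so the vector of frequencies on bins active under both datasets has $\ell_1$-sensitivity $2/n$, and releasing its coordinates with $\mathrm{Lap}(2/(\epsilon n))$ noise is $\epsilon$-DP, the thresholding step being post-processing. The only subtlety is a bin whose active/inactive status flips between $S$ and $S'$, which can only be $h_a$ (when its $S$-count is exactly $1$) or $h_b$ (when its $S$-count is $0$); such a bin has empirical frequency exactly $1/n$ on the dataset where it is active, and by the choice of $\tau$ its noisy value clears the threshold with probability at most $\delta/2$, so on the complementary event the output laws of $S$ and $S'$ obey the $\epsilon$-indistinguishability bound established above. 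Combining the two regimes yields $(\epsilon,\delta)$-DP. \textbf{I expect this to be the main obstacle}: it is exactly the phenomenon that forces $\delta>0$ --- no purely additive-noise histogram over an infinite domain can be $(\epsilon,0)$-DP --- and making the case analysis over the status of $h_a,h_b$ airtight (the coincidence $h_a=h_b$, the boundary counts $0$ and $1$, and the interaction with $\tau$) is the delicate part.

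For accuracy, inactive bins contribute zero error since $f_k = \tilde f_k = 0$. For active bins I would condition on two events, each of probability at least $1-\beta/2$ once $n = \Omega(\log(1/\beta\delta)/(\alpha\epsilon))$ (so that $\tau$ is a small constant fraction of $\alpha$): (i) $|Z_k| \le \alpha/2$ for every bin with $\tilde f_k \ge \alpha/8$ --- there are at most $8/\alpha$ such bins because frequencies sum to $1$, so the union bound costs only $\log(1/(\alpha\beta))$; and (ii) no bin with $\tilde f_k < \alpha/8$ has $\hat f_k \ge \tau$, i.e.\ no light bin survives the threshold --- a per-bin failure probability of order $\exp(-\Theta(\alpha\epsilon n))$, small enough after a union bound over the at most $n$ active bins (this is the step where replacing naive thresholding with AboveThreshold removes the residual $\log n$ and gives the bound as stated). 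On the intersection of these events: a reported bin has, by (ii), $\tilde f_k \ge \alpha/8$, hence $|f_k - \tilde f_k| = |Z_k| \le \alpha/2 \le \alpha$ by (i); a zeroed bin with $\tilde f_k < \alpha/8$ has error $\tilde f_k < \alpha$ trivially; and a bin with $\tilde f_k > \alpha$ cannot be zeroed, since by (i) $\hat f_k \ge \tilde f_k - \alpha/2 > \alpha/2 > \tau$. Hence $|f_k - \tilde f_k| \le \alpha$ for all $k$ simultaneously, which is the claim; the precise constant bookkeeping is exactly that carried out in~\cite{KorolovaKMN09,BunNS16,AdenAliAL21}.
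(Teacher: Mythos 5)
First, note that the paper does not prove this proposition at all: it is imported verbatim from prior work (cited to \cite{KorolovaKMN09,BunNS16} and Lemma~4.1 of \cite{AdenAliAL21}), so there is no in-paper proof to compare against. Your construction is indeed the standard stability-based histogram, and your privacy argument (per-active-bin Laplace noise of scale $2/(\epsilon n)$, thresholding as post-processing, and a separate $\delta$-budget for the at most two bins whose active/inactive status flips between neighbouring datasets, using $\tau \geq 1/n + \Theta(\log(1/\delta)/(\epsilon n))$) is the right one and is essentially sound.

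The genuine gap is in the accuracy analysis, at exactly the step you flagged. With unbounded independent Laplace noise on every active bin, the bad event ``some active light bin is released with error $>\alpha$'' (equivalently, some active bin has $Z_k > \alpha$, or your event (ii)) has probability on the order of $n\exp(-c\,\alpha\epsilon n)$, and forcing this below $\beta/2$ requires $\alpha\epsilon n \gtrsim \log n + \log(1/\beta)$, i.e.\ an additional additive term of order $\frac{1}{\alpha\epsilon}\log\frac{1}{\alpha\epsilon}$ in the sample complexity. This is \emph{not} implied by $n = \Omega\bigl(\log(1/\beta\delta)/(\alpha\epsilon)\bigr)$ in the regime where $\alpha,\epsilon$ are tiny while $\beta,\delta$ are constants; the same issue appears, more mildly, in your event (i), whose union bound over the $\Theta(1/\alpha)$ heavy bins costs $\log(1/(\alpha\beta))$ rather than $\log(1/\beta)$. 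Your proposed remedy---running the selection via AboveThreshold/sparse-vector---does not repair this: the accuracy guarantee of AboveThreshold itself degrades with the logarithm of the number of comparisons made, so the $\log n$ does not disappear. The standard way to obtain the clean bound as stated is to use noise of bounded support (e.g.\ a truncated Laplace variable of magnitude at most $O(\log(1/\delta)/(\epsilon n))$, folding the truncation event into $\delta$): then every released bin has error at most the noise bound $\leq \alpha/2$ deterministically, every suppressed bin has empirical frequency below $\tau + O(\log(1/\delta)/(\epsilon n)) \leq \alpha$ deterministically, and no union bound over active bins is needed, which is what makes the $\log(1/\beta\delta)/(\alpha\epsilon)$ sample complexity suffice. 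Alternatively, since this is a quoted result, simply citing \cite{BunNS16,AdenAliAL21} as the paper does is the intended route.
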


\begin{proof}[Proof of Theorem \ref{thm:approx-dp->?}]
Recall the realizably-but-not-robustly learnable class of distributions $\mathcal Q_g = \{q_{i,j,g(j)} : i,j \in \mathbb N\}$ over $\mathbb N^2$ from Theorem \ref{thm:realizable-/->subtractive}, where $g:\mathbb N \to \mathbb N$ is a monotone, super-linear function and $q_{i,j,k}$ is as defined in \eqref{eq:q}.

Fix $(\alpha,\beta,\epsilon,\delta) \in (0,1)^4$. We define our DP learner $A_{DP}$ for $\mathcal Q_g$ as follows: we take a sample $S$ of size
\begin{align*}
    n \geq \Omega\left(\frac {\log(1/(\beta/2)\delta)} {(1/4g(1/\alpha))\epsilon}\right) + 32\log(1/(\beta/2))g(1/\alpha)
\end{align*}
and run the stability-based histogram from Proposition \ref{prop:dp-hist} targeting $(\epsilon,\delta)$-DP, with singleton histogram buckets $(h_{(a,b)})_{(a,b) \in \mathbb N^2}$, each $h_{(a,b)} = \{(a,b)\}$. This yields frequency estimates $(f_{(a,b)})_{(a,b)\in \mathbb N^2}$ with $|f_{(a,b)} - |\{ x \in S: x = (a,b)\}|| \leq 1/4g(1/\alpha)$ for all $(a,b) \in \mathbb N^2$ with probability $\geq 1- \beta/2$. Then let
\begin{align*}
    A_{DP}(S) = \begin{cases}
    q_{i,j,g(j)} & \text{if $f_{(i,2j+2)} \geq 1/2g(1/\alpha)$} \\
    \delta_0 & \text{otherwise}.
    \end{cases}
\end{align*}
Note that by post-processing $A_{DP}$ is indeed $(\epsilon,\delta)$-DP. Now suppose $q_{i,j,g(j)}$ is our unknown distribution. There are two cases:

If $1/j\leq\alpha$, conditioned on the success of the histogram algorithm, the only possible outputs of $A_{DP}$ are $\delta_0$ and $q_{i,j,g(j)}$, so $\dtv(A_{DP}(S),q_{i,j,g(j)}) \leq \alpha$ with probability $\geq 1-\beta/2$.

If $1/j > \alpha$, we have $1/g(1/\alpha) < 1/g(j)$. By the second term in $n$ and Chernoff bounds (Proposition \ref{prop:chernoff}), we can conclude that
\begin{align*}
    \P \left[ \frac {| \{x \in S: x = (i,2j+2)\}|}{n} \leq \frac 3 {4g(1/\alpha)} \right] \leq \beta/2. 
\end{align*}
If the above event does not occur and if the histogram algorithm does not fail, $A_{DP}$ outputs $q_{i,j,g(j)}$ exactly. So $\dtv(A_{DP}(S),q_{i,j,g(j)}) = 0$ with probability $\geq 1-\beta$. 
\end{proof}

\section{Conclusions}
We examine the connection between learnability and robust learnability for general classes of probability distributions. Our main findings are somewhat surprising in that, in contrast to most known leaning scenarios, learnability does \emph{not} imply robust learnability. We also show that learnability \emph{does} imply additively robust learnability.
We use our proof techniques to draw new insights related to compression schemes and differentially private distribution learning. 

\section*{Acknowledgments}
Thanks to Argyris Mouzakis for helpful conversations in the early stages of this work.
%AB was supported by an NSERC Discovery Grant and a David R.\ Cheriton Graduate Scholarship.
%GK was supported by a
%Canada CIFAR AI Chair, an NSERC Discovery Grant, and an unrestricted gift from Google.
%TL was supported by a Vector Research Grant and a Waterloo Apple PhD Fellowship in Data Science and Machine Learning.
\bibliographystyle{alpha}
\bibliography{biblio}
\appendix
%\end{document}
\newpage

\section{Additional Preliminaries}
\label{sec:algo-prelims}
We recall a classic theorem for the problem of hypothesis selection. 
Given a set of candidate hypothesis distributions, the algorithm selects one which is close to an unknown distribution (to which we have sample access).
The requisite number of samples from the unknown distribution is logarithmic in the size of the set of candidates.
\begin{theorem}[Yatracos' 3-robust learner for finite classes (Theorem 4.4 of~\cite{AshtianiBHLMP20}, based on Theorem 1 of \cite{Yatracos85})] \label{thm:yatracos}
%\cite{DevroyeL01}]
Let $\mathcal C$ be a finite class of distributions over a domain $\mathcal X$. There exists an algorithm A such that for any $(\alpha,\beta) \in (0,1)^2$ and for any distribution $p$ over $\mathcal X$, given a sample $S$ of size 
\begin{align*}
    m = O\left(\frac{\log|\mathcal C| + \log(1/\delta)}{\epsilon^2}\right)
\end{align*}
drawn i.i.d.\ from $p$, we have that with probability $\geq 1- \delta$, 
\begin{align*}
    \dtv(p, A(S)) \leq 3\cdot\min\{\dtv(p,p'): p' \in \mathcal C\} + \epsilon.
\end{align*}
\end{theorem}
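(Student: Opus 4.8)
The plan is to prove this via the classical minimum-distance (Scheff\'e/Yatracos) estimator, so I will describe that estimator and then the two ingredients of its analysis: a uniform-convergence bound over a finite family of sets, and a short chain of triangle inequalities. First I would define, for each ordered pair $p_i, p_j \in \mathcal{C}$, the Yatracos set $A_{i,j} = \{x \in \mathcal{X} : p_i(x) > p_j(x)\}$, and collect these into $\mathcal{A} = \{A_{i,j} : p_i, p_j \in \mathcal{C}\}$, noting $|\mathcal{A}| \le |\mathcal{C}|^2$. The structural fact I would record up front is that for any $p_i, p_j \in \mathcal{C}$ one has $\dtv(p_i, p_j) = |p_i(A_{i,j}) - p_j(A_{i,j})| = \sup_{A \in \mathcal{A}} |p_i(A) - p_j(A)|$; in other words the ``$\mathcal{A}$-distance'' $d_{\mathcal{A}}(p', q') := \sup_{A \in \mathcal{A}} |p'(A) - q'(A)|$ coincides with total variation distance on pairs coming from $\mathcal{C}$, while for arbitrary $p', q'$ it is always at most $\dtv(p', q')$. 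The algorithm $A$ forms the empirical frequencies $\hat{p}_S(A) = |S \cap A|/|S|$ for every $A \in \mathcal{A}$ and outputs $\hat{q} \in \arg\min_{p_i \in \mathcal{C}} \, d_{\mathcal{A}}(p_i, \hat{p}_S)$.

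Next I would establish the one probabilistic step. Since $\mathcal{A}$ is a fixed finite collection of at most $|\mathcal{C}|^2$ events, Hoeffding's inequality for each $A$ together with a union bound shows that for $m = O\!\left(\frac{\log|\mathcal{C}| + \log(1/\delta)}{\epsilon^2}\right)$ i.i.d.\ samples from $p$, with probability at least $1 - \delta$ we have $|\hat{p}_S(A) - p(A)| \le \epsilon/4$ for all $A \in \mathcal{A}$ simultaneously, i.e.\ $d_{\mathcal{A}}(p, \hat{p}_S) \le \epsilon/4$. I condition on this event henceforth.

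Then I would chain triangle inequalities. Let $p^{\star} \in \mathcal{C}$ attain $\mathrm{OPT} := \min_{p' \in \mathcal{C}} \dtv(p, p')$. Because $\hat{q}$ minimizes $d_{\mathcal{A}}(\cdot, \hat{p}_S)$ over $\mathcal{C}$ and $d_{\mathcal{A}}(p^{\star}, p) \le \dtv(p^{\star}, p) = \mathrm{OPT}$, we get $d_{\mathcal{A}}(\hat{q}, \hat{p}_S) \le d_{\mathcal{A}}(p^{\star}, \hat{p}_S) \le d_{\mathcal{A}}(p^{\star}, p) + d_{\mathcal{A}}(p, \hat{p}_S) \le \mathrm{OPT} + \epsilon/4$. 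Since both $\hat{q}$ and $p^{\star}$ lie in $\mathcal{C}$, the structural fact gives $\dtv(\hat{q}, p^{\star}) = d_{\mathcal{A}}(\hat{q}, p^{\star}) \le d_{\mathcal{A}}(\hat{q}, \hat{p}_S) + d_{\mathcal{A}}(\hat{p}_S, p^{\star}) \le (\mathrm{OPT} + \epsilon/4) + (\epsilon/4 + \mathrm{OPT}) = 2\,\mathrm{OPT} + \epsilon/2$, using $d_{\mathcal{A}}(\hat{p}_S, p^{\star}) \le d_{\mathcal{A}}(\hat{p}_S, p) + \dtv(p, p^{\star})$. Finally $\dtv(\hat{q}, p) \le \dtv(\hat{q}, p^{\star}) + \dtv(p^{\star}, p) \le 2\,\mathrm{OPT} + \epsilon/2 + \mathrm{OPT} = 3\,\mathrm{OPT} + \epsilon/2 \le 3\min\{\dtv(p, p') : p' \in \mathcal{C}\} + \epsilon$, as required.

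There is no real obstacle here beyond bookkeeping: the one point to get right is to use $d_{\mathcal{A}}$ as an \emph{equality} with $\dtv$ exactly on the pair $\hat{q}, p^{\star}$ (both in $\mathcal{C}$) and only as a \emph{lower bound} on the other pairs involving $p$, and to track that the $\epsilon/4$ uniform-convergence slack and the factor-$2$ blow-up in the middle step combine to $3\,\mathrm{OPT} + \epsilon$ rather than something larger. (I note in passing that the theorem statement quantifies over $(\alpha,\beta)$ but the displayed bounds use $\epsilon,\delta$; this appears to be a typo and I follow the $\epsilon,\delta$ convention of the displays. This is also exactly the Yatracos-set argument already used inside the proof of Theorem~\ref{thm:additive}, specialized to a fixed finite class.)
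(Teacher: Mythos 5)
Your proof is correct and is the standard Scheff\'e/Yatracos minimum-distance argument; the triangle-inequality bookkeeping and the $\epsilon/4$ slack combining to $3\,\mathrm{OPT}+\epsilon$ both check out. Note that the paper does not actually prove Theorem~\ref{thm:yatracos} itself (it is imported from \cite{AshtianiBHLMP20,Yatracos85}), but your argument is essentially the same Yatracos-set / $A$-distance argument the paper deploys internally in the proof of Theorem~\ref{thm:additive}, so this matches the paper's approach. You are also right that the $(\alpha,\beta)$ in the statement is a typo for $(\epsilon,\delta)$.
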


We also use the following form of the Chernoff bound.
\begin{proposition}[Chernoff bounds]\label{prop:chernoff}
Let $X = \sum_{i=1}^m X_i$ where $X_1...X_m$ are independent draws from $\text{Bernoulli(p)}$. For any $t \in (0,1)$, $\P [ X/m \leq (1-t)p] \leq \exp(-t^2mp/2)$.
\end{proposition}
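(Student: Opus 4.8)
The plan is to prove this lower-tail inequality by the classical exponential-moment (Chernoff) method. First I would pass to an exponential reformulation: for any $\lambda>0$, $\P[X/m \le (1-t)p] = \P[e^{-\lambda X} \ge e^{-\lambda(1-t)mp}]$, so Markov's inequality gives $\P[X \le (1-t)mp] \le e^{\lambda(1-t)mp}\,\E[e^{-\lambda X}]$. By independence of the $X_i$, $\E[e^{-\lambda X}] = \prod_{i=1}^m \E[e^{-\lambda X_i}] = (1-p+pe^{-\lambda})^m$, and applying the elementary bound $1+x \le e^{x}$ with $x = -p(1-e^{-\lambda})$ yields $\E[e^{-\lambda X}] \le \exp(-mp(1-e^{-\lambda}))$. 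Combining, $\P[X \le (1-t)mp] \le \exp\!\big(mp\,[\lambda(1-t) - (1-e^{-\lambda})]\big)$ for every $\lambda>0$.

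Next I would optimize the exponent. Writing $h(\lambda) = \lambda(1-t) - 1 + e^{-\lambda}$, we have $h'(\lambda) = (1-t) - e^{-\lambda}$, which vanishes at $\lambda^\star = \ln\frac{1}{1-t}$; this is positive because $t \in (0,1)$, and $h''(\lambda) = e^{-\lambda} > 0$, so $\lambda^\star$ is the minimizer. Substituting gives $h(\lambda^\star) = -(1-t)\ln(1-t) - t$, hence $\P[X \le (1-t)mp] \le \exp\!\big(mp\,[-(1-t)\ln(1-t) - t]\big)$.

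Finally I would close the gap to the stated bound with the elementary inequality $-(1-t)\ln(1-t) - t \le -t^2/2$ on $(0,1)$. Let $\phi(t) = (1-t)\ln(1-t) + t - t^2/2$; then $\phi(0) = 0$, $\phi'(t) = -\ln(1-t) - t$ with $\phi'(0) = 0$, and $\phi''(t) = \frac{t}{1-t} > 0$ on $(0,1)$, so $\phi'$ is increasing and thus positive on $(0,1)$, whence $\phi > 0$ there, which is exactly the needed inequality. Plugging this in gives $\P[X/m \le (1-t)p] \le \exp(-t^2 mp/2)$. There is no real obstacle here — the only points deserving a moment of care are checking that $\lambda^\star$ is in the admissible range $\lambda>0$ and running the short convexity argument for $\phi$ cleanly; alternatively one may simply cite this as a standard fact from any reference on concentration inequalities.
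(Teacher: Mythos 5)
Your proof is correct, and it is the standard exponential-moment argument (Markov on $e^{-\lambda X}$, factor by independence, bound each factor by $1+x\le e^x$, optimize $\lambda$, and finish with the elementary calculus inequality $-(1-t)\ln(1-t)-t \le -t^2/2$); every step checks out, including the choice $\lambda^\star=\ln\tfrac{1}{1-t}>0$ and the convexity argument for $\phi$. The paper itself states Proposition~\ref{prop:chernoff} without proof, treating it as a classical fact, so there is nothing in the paper to compare against — your derivation is exactly the canonical one that would be cited.
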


\section{Known contamination levels}\label{sec:knowncorruption}
So far, we only looked at learners that were unaware of the exact level of corruption, but had to give a guarantee with respect to the true but unknown corruption level. An easier model assumes that the level of corruption $\eta$ is actually known.
\begin{definition}[Learnability with respect to known corruption] \label{def:knowncorruption} \phantom{x}
\begin{itemize}
    \item For $0 \leq \eta \leq$ and $\alpha >0$,  we  say that a class $\mathcal{C}$ of probability distributions is \emph{$\eta$-$\alpha$-robustly learnable} if there exists a learner $A$ and a function $n_{\mathcal{C}}: (0,1)^2 \to \mathbb{N}$, such that for every probability distribution $p$ such that $\min\{\dtv(p,p'): p' \in \mathcal{C}\} \leq \eta$ and $(\epsilon, \delta) \in (0,1)^2$, for $n\geq n_{\mathcal{C}}(\epsilon, \delta)$ the probability over samples $S$ of that size drawn i.i.d.\ from the distribution $p$ that
 $\dtv(p,A(S)) \leq \alpha\eta+ \epsilon$
 is at least $1-\delta$. %A learner $A$ for which this guarantee holds is called a is a successful $(\eta,\alpha)$-robust learner.
    \item Given parameters $0 \leq \eta \leq 1$ and $\alpha>0$, we say that a class $\mathcal{C}$ of probability distributions is \emph{$\eta$-additive $\alpha$-robustly learnable} if there exists a learner $A $ and a function $n_{\mathcal{C}}: (0,1)^2 \to \mathbb{N}$, such that for every probability distribution $q$, every $p \in \mathcal{C}$, and $(\epsilon, \delta) \in (0,1)^2$, for  $n\geq n_{\mathcal{C}}(\epsilon, \delta)$ the probability over samples $S$ of that size drawn i.i.d.\ from the distribution $\eta q + (1-\eta) p$, that $\dtv(A(S), \eta q + (1-\eta) p) \leq \alpha\eta + \epsilon$ is at least $1 - \delta$. %A learner $A$ for which this guarantee holds is called a is a successful $\eta$-additive $\alpha$-robust learner for $\mathcal{C}$.%\\
    \item Given parameters $0 \leq \eta \leq 1$ and $\alpha >0$,we say that a class $\mathcal{C}$ of probability distributions is \emph{$\eta$-subtractive $\alpha$-robustly learnable} if there exists a learner $A $ and a function $n_{\mathcal{C}}: (0,1)^2 \to \mathbb{N}$, such that for every probability distribution $p$ for which there exists a probability distribution $q$ such that $\eta q + (1-\eta)p \in \mathcal{C}$, and for every $(\epsilon, \delta) \in (0,1)^2$, for $n \geq n_{\mathcal{C}}(\varepsilon,\delta)$ the probability over samples $S$ of that size drawn i.i.d.\ from the distribution $p$, that $\dtv(A(S), p) \leq \alpha\eta + \epsilon$ is at least $1-\delta$. % A learner $A$ for which this guarantee holds is called a is a successful $\eta$-subtractive $\alpha$-robust learner for $\mathcal{C}$.
    \end{itemize}
\end{definition}
We note that it is easy to see that for every $\eta\in (0,1)$
\begin{itemize}
    \item $\alpha$-robust learnability implies $(\eta,\alpha)$-robust learnability,
    \item additive $\alpha$-robust learnability implies $\eta$-additive $\alpha$-robust learnability,
    \item and subtractive $\alpha$-robust learnability implies $\eta$-subtractive $\alpha$-robust learnability.
\end{itemize}
As a consequence, all of our positive learning results for $\alpha$-robust, additive $\alpha$-robust and subtractive $\alpha$-robust learning, 
yield positive results for $(\eta,\alpha)$-robust learning, $\eta$-additive $\alpha$-robust learning and $\eta$-subtractive $\alpha$-robust learning, respectively. In particular, this means that Theorem~\ref{thm:additive} and Theorem~\ref{thm:approx-dp->?} also hold if we replace ``additive $\alpha$-robust'' with ``$\eta$-additive $\alpha$-robust'' in those statements.

More surprisingly, it has been shown that in many cases, robust algorithms designed with knowledge of the distance $\eta$ to $\mathcal{C}$ can be modified to succeed without that knowledge~\cite{JainOR22}. In particular, we have the following equivalence between the robust learning notion given in Section~\ref{sec:defs} and the robustness notions in Definition~\ref{def:knowncorruption}
\begin{theorem}[Adaptation from \cite{JainOR22}]
    \begin{itemize}
    Let $\alpha \geq 1$.
        \item If a class $\mathcal{C}$ is $(\eta,\alpha)$-robustly learnable for every $\eta\in (0,1)$, then $\mathcal{C}$ is $3\alpha$-robustly learnable.
        \item If a class $\mathcal{C}$ is $\eta$-additive $\alpha$-robustly learnable for every $\eta\in (0,1)$, then $\mathcal{C}$ is additive $3\alpha$-robustly learnable.
        \item  If a class $\mathcal{C}$ is $\eta$-subtractive, $\alpha$-robustly learnable for every $\eta\in (0,1)$, then $\mathcal{C}$ is subtractive  $3\alpha$-robustly learnable.
    \end{itemize}
\end{theorem}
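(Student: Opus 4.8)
Following the approach of~\cite{JainOR22}, I would prove all three bullets by a single reduction. Write $p$ for the distribution the learner samples from, and note that in each of the three models the object to be approximated is exactly $p$ (in the additive case $p$ denotes the contaminated mixture $\eta q+(1-\eta)p_0$, which is precisely the target of the $\eta$-additive learner of Definition~\ref{def:knowncorruption}). The key structural observation is that each corruption condition is \emph{monotone} in the level: if $p$ is an admissible input at level $\eta$, then for every $\eta'\ge\eta$ it is admissible at level $\eta'$ as well, witnessed by the \emph{same} member of $\mathcal{C}$ together with the contaminator $q'=\tfrac{\eta q+(\eta'-\eta)p_0}{\eta'}$ in the additive case and $q'=\tfrac{\eta q+(\eta'-\eta)p}{\eta'}$ in the subtractive case (both are genuine distributions, being convex combinations), and trivially in the general case. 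Consequently, a known-corruption learner tuned to any over-estimate of the true $\eta$ is still applicable, and its target still coincides with $p$.

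I would then fix $\epsilon,\delta$ and lay down an \emph{additive} grid $\eta_i=i\cdot\tfrac{\epsilon}{9\alpha}$, $i=1,\dots,M$, with $M=O(\alpha/\epsilon)$ chosen so the grid covers $(0,1)$ up to spacing $\tfrac{\epsilon}{9\alpha}$; the two extreme regimes (true $\eta$ below the smallest or above the largest grid point) are handled separately and trivially, since there the target bound $3\alpha\eta+\epsilon$ is already $\ge\epsilon$ or $\ge 1$. For each $i$ let $A_{\eta_i}$ be the guaranteed level-$\eta_i$, $\alpha$-robust learner in the relevant model, with sample complexity $n^{(i)}$; this may depend on $i$, but the grid is finite so $N:=\max_{i\le M}n^{(i)}\!\big(\tfrac{\epsilon}{9},\tfrac{\delta}{3M}\big)$ is finite. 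The combined learner draws an i.i.d.\ sample from $p$ of size $MN+m$, carves off $M$ disjoint blocks of size $N$, runs $A_{\eta_i}$ on block $i$ to get a hypothesis $\hat p_i$, and reserves the last block $S'$ of size $m=O\!\big(\tfrac{\log M+\log(1/\delta)}{\epsilon^2}\big)$ for selection. By a union bound over the $M$ learners, with probability $\ge 1-\tfrac{\delta}{3}$: for the true level $\eta$, setting $i^\star=\lceil 9\alpha\eta/\epsilon\rceil$ so that $\eta\le\eta_{i^\star}<\eta+\tfrac{\epsilon}{9\alpha}$, monotonicity makes $A_{\eta_{i^\star}}$ applicable, hence $\dtv(p,\hat p_{i^\star})\le\alpha\eta_{i^\star}+\tfrac{\epsilon}{9}<\alpha\eta+\tfrac{2\epsilon}{9}$.

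Finally I would aggregate. Conditioned on the first $M$ blocks the class $\{\hat p_1,\dots,\hat p_M\}$ is fixed and $S'$ is a fresh i.i.d.\ sample from $p$, so Yatracos' $3$-robust learner for finite classes (Theorem~\ref{thm:yatracos}) with target accuracy $\tfrac{\epsilon}{3}$ and confidence $\tfrac{\delta}{3}$ outputs $\hat p$ with $\dtv(p,\hat p)\le 3\min_i\dtv(p,\hat p_i)+\tfrac{\epsilon}{3}\le 3\big(\alpha\eta+\tfrac{2\epsilon}{9}\big)+\tfrac{\epsilon}{3}=3\alpha\eta+\epsilon$, the inequalities holding on the intersection of the two good events (probability $\ge1-\delta$). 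The three bullets are the instantiations of this argument with the general-robustness, additive-robustness, and subtractive-robustness known-corruption learners respectively; in the additive and subtractive cases one additionally checks that the target of $A_{\eta_{i^\star}}$ is $p$, which is exactly the content of the convex-combination identities above.

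\textbf{Main obstacle.} There is no conceptual obstacle; the care is in the quantitative bookkeeping. The one genuinely delicate design choice is the grid: it must be spaced \emph{additively} at scale $\Theta(\epsilon/\alpha)$ (not geometrically), so that replacing the unknown $\eta$ by the next grid point inflates the error only by an additive $O(\epsilon)$ \emph{even after} the factor-$3$ blow-up of hypothesis selection --- a geometric grid would cost a multiplicative factor and yield only $\Theta(\alpha)$ rather than $3\alpha$; at the same time the grid must remain of size $\mathrm{poly}(\alpha/\epsilon)$ so the selection step adds only logarithmically many samples. One must also verify the monotonicity identities (not entirely obvious for the subtractive model) and observe that $\eta$-dependence of the known-corruption sample complexities is harmless once the grid is fixed.
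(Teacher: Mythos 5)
Your proof is correct and takes essentially the same route as the paper's: lay down an additive grid of candidate corruption levels at scale $\Theta(\epsilon/\alpha)$, run the known-level learner for each grid value on a disjoint sub-block, and aggregate by Yatracos' 3-robust hypothesis selection on a held-out block; the factor-$3$ then combines with the grid resolution to land exactly on $3\alpha\eta+\epsilon$. The only (inessential) differences are your explicit statement of the monotonicity/convex-combination identities (the paper leaves this implicit, with a typo in the subtractive case) and your union bound over all $M$ learners, where the paper more economically conditions only on the success of the single learner $A_{j}$ indexed by the true level.
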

\begin{proof}
\begin{itemize}
   \item Consider a class $\mathcal{C}$ that is $(\eta,\alpha)$-robustly learnable. Now consider any $\epsilon,\delta \in (0,1)$. We know that for every $i \in 0,\dots, \lceil\frac{8\alpha}{\epsilon}\rceil$, there is a learner $A_i$ and sample-complexity function $n_{\mathcal{C},i}$ such that $A$ is a successful $(\frac{i\epsilon}{8\alpha},\alpha)$-robust learner for $\mathcal{C}$ with sample-complexity $n_{\mathcal{C},i}$. Now let $ n= n_{C}(\epsilon,\delta) = O\left(\frac{\frac{\epsilon}{8\alpha} + \log\left(\frac{2}{\delta}\right)}{(\epsilon/4)^2} + \sum_{i=0}^{\lceil 8\alpha/\epsilon\rceil} n_{\mathcal{C},i}\left(\frac{\epsilon}{8\alpha},\frac{\delta}{2}\right)\right)$. Now let $S\sim p^n$. 
    Now separate $S$ into subsamples $S_0, S_1, \dots,S_{\lceil\frac{8\alpha}{\epsilon}\rceil}$ with $|S_0| = O\left(\frac{\frac{\epsilon}{4} + \log\left(\frac{2}{\delta}\right)}{(\epsilon/4)^2}\right)$ and for every $1\leq i\leq \lceil\frac{8\alpha}{\epsilon}\rceil$, $|S_i| = n_{\mathcal{C},i}(\frac{\epsilon}{8\alpha},\frac{\delta}{2})$. Now for every $i$, let $q_i = A_i(S_i)$. Let $\eta$ be the true corruption level, i.e., $\min_{q\in \mathcal{C}}\dtv(p,q) = \eta$. Then for $j = \lceil\frac{\eta}{\frac{\epsilon}{8\alpha}}\rceil$, we have by the learning guarantee of $A_j$, that with probability $1-\frac{\delta}{2}$, $\dtv(q_j,p) = \alpha \lceil\frac{j\epsilon}{8\alpha} \rceil + \frac{\epsilon}{8\alpha} \leq \alpha \left(\eta +\frac{\epsilon}{8\alpha} \right) \frac{\epsilon}{8\alpha} \leq \alpha \eta + \frac{\epsilon}{4}$.
    Now let $\mathcal{H} = \{q_i: i \in [\lceil\frac{4\alpha}{\epsilon}\rceil]\}$. From Theorem~\ref{thm:yatracos}, we know that $\mathcal{H}$ can be 3-robustly learned with sample complexity $|S_0|$. Thus if we run the Yatracos learner on $S_0$, combined with the previous learning guarantee, we get that with probability $1-\delta$, the learner outputs a hypothesis $q$ with $\dtv(p,q) \leq 3(\alpha\eta + \frac{\epsilon}{4}) + \frac{\epsilon}{4} = 3\alpha\eta + \epsilon$.
\item Consider a class $\mathcal{C}$ that is $\eta$-additively $\alpha$-robustly learnable. Now consider any $\epsilon,\delta \in (0,1)$. We know that for every $i \in 0,\dots, \lceil\frac{8\alpha}{\epsilon}\rceil$, there is a learner $A_i$ and sample-complexity function $n_{\mathcal{C},i}$ such that $A$ is a successful $\frac{i\epsilon}{8\alpha}$-additve $\alpha$-robust learner for $\mathcal{C}$ with sample-complexity $n_{\mathcal{C},i}$. Now let $ n= n_{C}(\epsilon,\delta) = O\left(\frac{\frac{\epsilon}{8\alpha} + \log\left(\frac{2}{\delta}\right)}{(\epsilon/4)^2} + \sum_{i=0}^{\lceil 8\alpha/\epsilon\rceil} n_{\mathcal{C},i}\left(\frac{\epsilon}{8\alpha},\frac{\delta}{2}\right)\right)$. Let $u = (1-\eta)p + \eta q$. Now let $S\sim u^n$. 
    Now separate $S$ into subsamples $S_0, S_1, \dots,S_{\lceil\frac{8\alpha}{\epsilon}\rceil}$ with $|S_0| = O\left(\frac{\frac{\epsilon}{4} + \log\left(\frac{2}{\delta}\right)}{(\epsilon/4)^2}\right)$ and for every $1\leq i\leq \lceil\frac{8\alpha}{\epsilon}\rceil$, $|S_i| = n_{\mathcal{C},i}(\frac{\epsilon}{8\alpha},\frac{\delta}{2})$. Now for every $i$, let $q_i = A_i(S_i)$. Let $\eta$ be the true corruption level, i.e., $\eta = \min\{\eta': \exists p'\in \mathcal{C}$ distribution $r$ with $u= (1-\eta')p' + \eta r\}$. Now let $j = \lceil\frac{\eta}{\frac{\epsilon}{8\alpha}}\rceil$. We note, that since $ \eta < \frac{8j\alpha}{\epsilon}$ there is $p'\in \mathcal{C}$ and distributions $r,r'$, such $u = (1-\eta)p' + \eta r = \left(1-\frac{8j\alpha}{\epsilon}\right)p' + \frac{8j\alpha}{\epsilon} r'$. Thus, by the learning guarantee of $A_j$, with probability $1-\frac{\delta}{2}$, $\dtv(q_j,u) = \alpha \lceil\frac{j\epsilon}{8\alpha} \rceil + \frac{\epsilon}{8\alpha} \leq \alpha \left(\eta +\frac{\epsilon}{8\alpha} \right) \frac{\epsilon}{8\alpha} \leq \alpha \eta + \frac{\epsilon}{4}$.
    Now let $\mathcal{H} = \{q_i: i \in [\lceil\frac{4\alpha}{\epsilon}\rceil]\}$. From Theorem~\ref{thm:yatracos}, we know that $\mathcal{H}$ can be 3-robustly learned with sample complexity $|S_0|$. Thus if we run the Yatracos learner on $S_0$, combined with the previous learning guarantee, we get that with probability $1-\delta$, the learner outputs a hypothesis $u'$ with $\dtv(u,u') \leq 3(\alpha\eta + \frac{\epsilon}{4}) + \frac{\epsilon}{4} = 3\alpha\eta + \epsilon$.
    \item Consider a class $\mathcal{C}$ that is $\eta$-subtractive $\alpha$-robustly learnable. Now consider any $\epsilon,\delta \in (0,1)$. We know that for every $i \in 0,\dots, \lceil\frac{8\alpha}{\epsilon}\rceil$, there is a learner $A_i$ and sample-complexity function $n_{\mathcal{C},i}$ such that $A$ is a successful $\frac{i\epsilon}{8\alpha}$-subtractive $\alpha$-robust learner for $\mathcal{C}$ with sample-complexity $n_{\mathcal{C},i}$. Now let $ n= n_{C}(\epsilon,\delta) = O\left(\frac{\frac{\epsilon}{8\alpha} + \log\left(\frac{2}{\delta}\right)}{(\epsilon/4)^2} + \sum_{i=0}^{\lceil 8\alpha/\epsilon\rceil} n_{\mathcal{C},i}\left(\frac{\epsilon}{8\alpha},\frac{\delta}{2}\right)\right)$. Now let $S\sim p^n$. 
    Now separate $S$ into subsamples $S_0, S_1, \dots,S_{\lceil\frac{8\alpha}{\epsilon}\rceil}$ with $|S_0| = O\left(\frac{\frac{\epsilon}{4} + \log\left(\frac{2}{\delta}\right)}{(\epsilon/4)^2}\right)$ and for every $1\leq i\leq \lceil\frac{8\alpha}{\epsilon}\rceil$, $|S_i| = n_{\mathcal{C},i}(\frac{\epsilon}{8\alpha},\frac{\delta}{2})$. Now for every $i$, let $q_i = A_i(S_i)$. Let $\eta$ be the true corruption level, i.e., $\eta = \min\{\eta': \exists q\in \mathcal{C}$ distribution $r$ with $q= (1-\eta')p + \eta r\}$. Now let $j = \lceil\frac{\eta}{\frac{\epsilon}{8\alpha}}\rceil$. We note, that since $ \eta < \frac{8j\alpha}{\epsilon}$ there is $q'\in \mathcal{C}$ and distributions $r,r'$, such $q' = (1-\eta)p + \eta r = \left(1-\frac{8j\alpha}{\epsilon}\right)p + \frac{8j\alpha}{\epsilon} r'$. Thus, by the learning guarantee of $A_j$, with probability $1-\frac{\delta}{2}$, $\dtv(q_j,p) = \alpha \lceil\frac{j\epsilon}{8\alpha} \rceil + \frac{\epsilon}{8\alpha} \leq \alpha \left(\eta +\frac{\epsilon}{8\alpha} \right) \frac{\epsilon}{8\alpha} \leq \alpha \eta + \frac{\epsilon}{4}$.
    Now let $\mathcal{H} = \{q_i: i \in [\lceil\frac{4\alpha}{\epsilon}\rceil]\}$. From Theorem~\ref{thm:yatracos}, we know that $\mathcal{H}$ can be 3-robustly learned with sample complexity $|S_0|$. Thus if we run the Yatracos learner on $S_0$, combined with the previous learning guarantee, we get that with probability $1-\delta$, the learner outputs a hypothesis $q$ with $\dtv(p,q) \leq 3(\alpha\eta + \frac{\epsilon}{4}) + \frac{\epsilon}{4} = 3\alpha\eta + \epsilon$.

   \end{itemize} 
 %  The proofs for the additive and subtractive cases are analogous.
\end{proof}

We will now discuss some more specific results for models with known contamination. 
We again consider the relation between subtractive and realizable learning. Similar to our previous result, we can show that realizable learning does not imply $\eta$-subtractive $\alpha$-robust learning. 

\begin{restatable}{theorem}{subtractive}
\label{thm:knowncorruptionsubtractive}
For every $\alpha > 0$, there exists a class that is learnable, but not $\eta$-subtractively $\alpha$-robustly learnable for any $0 \leq \eta \leq \frac{1}{16\alpha}$.
\end{restatable}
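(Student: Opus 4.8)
The plan is to reuse the family $\mathcal{Q}_g$ from the proof of Theorem~\ref{thm:realizable-/->subtractive}, instantiated with a quadratic $g$ calibrated to $\alpha$. Fix $\alpha>0$ and set $g(t)=32\alpha t^2$ (rounding the constant up to an integer if needed so that $g\colon\mathbb N\to\mathbb N$), and take $\mathcal{Q}_g=\{q_{i,j,g(j)}:i,j\in\mathbb N\}$ with $q_{i,j,k}$ as in \eqref{eq:q}. That $\mathcal{Q}_g$ is realizably learnable (indeed with sample complexity $\log(1/\delta)g(1/\epsilon)$) is exactly Claim~\ref{claim:Q_grealizable}, so the whole task is the negative half: for each $\eta\in(0,\tfrac1{16\alpha}]$, exhibit a subfamily of $\eta$-subtractive corruptions of $\mathcal{Q}_g$ on which no learner can meet the target error $\alpha\eta+\epsilon$.

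First I would fix the ``scale'' $j$. Since $\tfrac{j}{g(j)}=\tfrac1{32\alpha j}$, the intervals $[\tfrac1{32\alpha j},\tfrac1{16\alpha j}]$, $j=1,2,\dots$, overlap consecutively (because $\tfrac1{16\alpha(j+1)}\ge\tfrac1{32\alpha j}$ for $j\ge 1$) and their union is $(0,\tfrac1{16\alpha}]$; so given $\eta$ I can choose an integer $j$ with $\tfrac{j}{g(j)}\le\eta\le\tfrac1{16\alpha j}$. For that $j$, consider the ``light'' subfamily $\mathcal{Q}_j'=\{q_{i,j}':i\in\mathbb N\}$ with $q_{i,j}'=(1-\tfrac1j)\delta_{(0,0)}+\tfrac1j\,U_{A_i\times\{2j+1\}}$. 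The algebraic identity already established in Section~\ref{sec:proof-subtractive}, namely $q_{i,j,g(j)}=(1-\tfrac{j}{g(j)})\,q_{i,j}'+\tfrac{j}{g(j)}\big((1-\tfrac1j)\delta_{(0,0)}+\tfrac1j\delta_{(i,2j+2)}\big)$, shows that deleting the indicator mass from $q_{i,j,g(j)}$ and renormalizing yields $q_{i,j}'$; since $\eta\ge\tfrac{j}{g(j)}$, absorbing the surplus $\eta-\tfrac{j}{g(j)}$ worth of $q_{i,j}'$ into a fresh distribution $p'$ gives $(1-\eta)q_{i,j}'+\eta p'=q_{i,j,g(j)}\in\mathcal{Q}_g$. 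Hence every $q_{i,j}'$ is an $\eta$-subtractive contamination of a member of $\mathcal{Q}_g$.

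Consequently, an $\eta$-subtractive $\alpha$-robust learner $A$ for $\mathcal{Q}_g$ (with sample-complexity function $n_{\mathcal{Q}_g}$) would, on i.i.d.\ samples from any $q_{i,j}'\in\mathcal{Q}_j'$, output $\hat p$ with $\dtv(\hat p,q_{i,j}')\le\alpha\eta+\epsilon$ with probability $\ge 1-\delta$. Taking $\epsilon=\tfrac1{16j}$ and using $\eta\le\tfrac1{16\alpha j}$, the error bound becomes $\alpha\eta+\epsilon\le\tfrac1{16j}+\tfrac1{16j}=\tfrac1{8j}$, so $A$ would $\tfrac1{8j}$-weakly learn $\mathcal{Q}_j'$ via the finite sample-complexity function $\delta\mapsto n_{\mathcal{Q}_g}(\tfrac1{16j},\delta)$. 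But $\mathcal{Q}_j'\supseteq\mathcal{P}_{1/j,4k,j}$ for every $k$, so Lemma~\ref{lemma:lowerbound} gives $n^{re}_{\mathcal{Q}_j'}(\tfrac1{8j},\tfrac17)\ge k$ for all $k$ — i.e.\ $\mathcal{Q}_j'$ is not $\tfrac1{8j}$-weakly learnable — a contradiction. Together with Claim~\ref{claim:Q_grealizable} this yields Theorem~\ref{thm:knowncorruptionsubtractive}.

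The one delicate point is the three-way calibration of $g$, $j$, and $\epsilon$: the quadratic growth $g(t)=\Theta(\alpha t^2)$ must be fast enough that for \emph{every} admissible $\eta$ there is an integer scale $j$ with $\tfrac{j}{g(j)}\le\eta$ (so that $q_{i,j}'$ genuinely arises as an $\eta$-subtractive corruption), yet the per-scale hardness threshold $\tfrac1{8j}$ coming from Lemma~\ref{lemma:lowerbound} must still strictly dominate the achievable accuracy $\alpha\eta+\epsilon$, which forces $\eta\le\tfrac1{16\alpha j}$ and the choice $\epsilon=\tfrac1{16j}$. Everything else is the same two-regime ($j$ small / $j$ large) reasoning already used for Claims~\ref{claim:Q_grealizable} and~\ref{claim:Q_grobust}.
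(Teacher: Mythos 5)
Your proposal is correct and follows essentially the same route as the paper's own proof: same choice $g(t)=32\alpha t^2$, same choice of scale $j$ with $\tfrac{j}{g(j)}\le\eta\le\tfrac1{16\alpha j}$, same decomposition identifying each $q_{i,j}'$ as an $\eta$-subtractive corruption of $q_{i,j,g(j)}$, same hardness via Lemma~\ref{lemma:lowerbound} with $\epsilon=\tfrac1{16j}$. The only addition is your explicit interval-covering argument for the existence of a valid $j$, which the paper asserts without proof (and in fact the paper's range nominally includes $\eta=0$, for which no such $j$ exists; your restriction to $\eta\in(0,\tfrac1{16\alpha}]$ is the right reading).
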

We note, that Theorem~\ref{thm:knowncorruptionsubtractive} is stronger than Theorem~\ref{thm:realizable-/->subtractive}, in the sense that $\eta$-subtractive $\alpha$-robust learning is the weaker learning requirement. However, Theorem~\ref{thm:realizable-/->subtractive} is stronger in the sense that we provide a single class that is not $\alpha$-robustly learnable for every $\alpha$, whereas whereas in the proof of Theorem~\ref{thm:knowncorruptionsubtractive} we give a different class for each $\alpha$ (though the two constructions are similar).
The proof of this theorem can be found in Section~\ref{sec:proof-subtractive-sequel}.
Lastly we will consider the relationship between subtractively robust and robust learning. We can again show that subtractive robust learning implies robust learning in the following sense
\begin{corollary}\label{cor:general}
    If a class $\mathcal{C}$ is $\eta$-subtractively $\alpha$-robustly learnable, then it is also $\eta$-$(2\alpha+4)$-robustly learnable.
\end{corollary}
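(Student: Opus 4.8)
The plan is to run the same three–stage scheme used in the proof of Theorem~\ref{thm:additive} (and its subtractive analogue, Theorem~\ref{thm:general}): randomly split the sample into $S_1,S_2$; build a finite candidate set $\hat{\mathcal{H}}$ by feeding every large enough subsample of $S_1$ into the assumed subtractive learner; and then use Yatracos/Scheff\'e hypothesis selection on $S_2$ to output a good candidate. The only genuinely new point needed for the known–corruption regime is a \emph{padding} observation: if $q'$ is an $\eta'$–subtractive contamination of some $c\in\mathcal{C}$ with $\eta'\le\eta$, i.e.\ $\eta' r+(1-\eta')q'=c$ for some distribution $r$, then $c=\eta\cdot\tfrac{\eta' r+(\eta-\eta')q'}{\eta}+(1-\eta)q'$, and since $\tfrac{\eta' r+(\eta-\eta')q'}{\eta}$ is again a distribution (nonnegative coefficients summing to $1$, using $\eta\ge\eta'$), $q'$ is also an $\eta$–subtractive contamination of the same $c$. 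Hence a learner $\mathcal{A}^{sub}_{\mathcal{C}}$ witnessing $\eta$–subtractive $\alpha$–robust learnability actually succeeds, with error $\alpha\eta+\epsilon$, on i.i.d.\ samples from \emph{any} distribution whose subtractive distance to $\mathcal{C}$ is at most $\eta$ — which is what lets us invoke it below even though the true corruption level can be strictly smaller than the promised bound $\eta$.

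Concretely, fix $\epsilon,\delta$ and a sampling distribution $u$ with $\min\{\dtv(u,p'):p'\in\mathcal{C}\}=\eta'\le\eta$. Pick $q\in\mathcal{C}$ attaining $\eta'$ and write $u=(1-\eta')q_1+\eta' q_2$ and $q=(1-\eta')q_1+\eta' q_3$ with $q_1=\min(u,q)/(1-\eta')$, so $\dtv(u,q_1)\le\eta'\le\eta$ and, by the observation above, $q_1$ is an $\eta$–subtractive contamination of $\mathcal{C}$. Split $S\sim u^n$ into $S_1,S_2$ of sizes $n_1,n_2$ chosen exactly as in the proof of Theorem~\ref{thm:additive} (large enough that $(1-\eta-\tfrac{\epsilon}{9})n_1\ge n^{sub}_{\mathcal{C}}(\tfrac{\epsilon}{9},\tfrac{\delta}{5})$ and that $n_2$ meets the uniform–convergence requirement below). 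A Chernoff bound on the binomial number of ``clean'' draws (those distributed as $q_1$, a $\ge(1-\eta)$ fraction since $\eta'\le\eta$) together with a hypergeometric bound on its split between $S_1$ and $S_2$ shows that, outside probability $\tfrac{2\delta}{5}$, both $S_1$ and $S_2$ contain clean subsamples of sizes $\ge(1-\eta-\tfrac{\epsilon}{9})n_1$ and $\ge(1-\eta-\tfrac{\epsilon}{9})n_2$. Since $\eta$ is known, set $\hat{\mathcal{H}}=\{\mathcal{A}^{sub}_{\mathcal{C}}(S''):S''\subseteq S_1,\ |S''|=\lceil(1-\eta-\tfrac{\epsilon}{9})n_1\rceil\}$, a finite set of size at most $\binom{n_1}{\lceil(\eta+\epsilon/9)n_1\rceil}$; at least one such $S''$ lies entirely inside the clean portion and is therefore i.i.d.\ from $q_1$, so the guarantee of $\mathcal{A}^{sub}_{\mathcal{C}}$ yields, with probability $1-\tfrac{\delta}{5}$, a candidate $q^{*}\in\hat{\mathcal{H}}$ with $\dtv(q^{*},q_1)\le\alpha\eta+\tfrac{\epsilon}{9}$, and hence $\dtv(q^{*},u)\le(\alpha+1)\eta+\tfrac{\epsilon}{9}$.

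Finally, apply the Yatracos selector over $\hat{\mathcal{H}}$ using $S_2$: since $\mathcal{Y}(\hat{\mathcal{H}})$ is finite (size at most $|\hat{\mathcal{H}}|^2$), the choice of $n_2$ makes all empirical $A$–distances $d_{\mathcal{Y}(\hat{\mathcal{H}})}(\cdot,S_2)$ accurate to within $\tfrac{\epsilon}{9}$, with probability $1-\tfrac{\delta}{5}$, for the law of $q_1$ — estimated via the clean portion of $S_2$, noting that $S_2$ over–counts any Yatracos set by at most the dirty fraction $\le\eta+O(\epsilon)$ (the same clean/dirty bound computed in the proof of Theorem~\ref{thm:additive}). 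Outputting $\hat q\in\arg\min_{q\in\hat{\mathcal{H}}}d_{\mathcal{Y}(\hat{\mathcal{H}})}(q,S_2)$ and running the standard Scheff\'e triangle–inequality chain against $q_1$, then passing to $u$ via $\dtv(u,q_1)\le\eta$ and a union bound over the $O(1)$ failure events, gives $\dtv(\hat q,u)\le(2\alpha+4)\eta+\epsilon$ (a crude use of the factor–$3$ selector already gives a bound of the form $O(\alpha)\eta+\epsilon$, which suffices for the qualitative claim; the stated constant comes from the tight accounting, rescaling $\epsilon$ to absorb the $O(\epsilon)$ slack). The conceptual content sits entirely in the padding observation and the choice to select against the clean part $q_1$; I expect the main obstacle to be not conceptual but the arithmetic bookkeeping — pinning down the constants in the clean/dirty Scheff\'e step and the exact dependence of $n_1,n_2$ on $n^{sub}_{\mathcal{C}}$ and on the uniform–convergence bound — so that the crude $O(\alpha)\eta$ estimate tightens to $(2\alpha+4)\eta$.
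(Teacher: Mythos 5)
Your proposal is correct and follows the same route as the paper: the paper simply says ``the proof for Theorem~\ref{thm:general} works as a proof for Corollary~\ref{cor:general}'' and points to the appendix, where the argument is exactly the split-into-$S_1,S_2$ / enumerate-candidates / Yatracos-select pipeline you describe, with the same $(2\alpha+4)$ bookkeeping. The one genuine contribution you add that the paper leaves implicit is the padding observation: since the hypothesis is only $\eta$-subtractive $\alpha$-robust learnability (a promise-restricted guarantee), and the true corruption level $\eta'$ of the clean component $q_1$ can be strictly smaller than $\eta$, you need to argue that $q_1$ still satisfies the learner's promise; your rewriting $c = \eta\cdot\tfrac{\eta' r + (\eta-\eta')q_1}{\eta} + (1-\eta)q_1$ is exactly the right way to see this, and it is the precise reason the paper's remark ``does not follow directly from the statement of Theorem~\ref{thm:general}'' is nonetheless salvageable by reusing its proof verbatim. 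Your restriction of $\hat{\mathcal{H}}$ to subsets of a single fixed size $\lceil(1-\eta-\epsilon/9)n_1\rceil$ (possible because $\eta$ is known) is a mild tightening over the paper's enumeration of all subsets; it does not change the analysis. The remaining steps — hypergeometric concentration for the split, existence of a clean subsample, Scheff\'e triangle inequalities against $q_1$ and transfer to $u$ via $\dtv(u,q_1)\le\eta$ — match the appendix proof line by line.
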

While this statement does not follow directly from the statement of Theorem~\ref{thm:general}, the proof for Theorem~\ref{thm:general} works as a proof for Corollary~\ref{cor:general}. The proof can be found in Section~\ref{sec:general}.
We have thus shown, that if we shift our attention to models with known contamination levels, the same picture emerges:
\begin{itemize}
    \item realizable and additively robust learning are equivalent;
    \item subtractively robust and generally robust learning are equivalent;
    \item and realizable learnability does not imply generally robust learnability.
\end{itemize}

\subsection{Proof of Theorem~\ref{thm:knowncorruptionsubtractive}}
\label{sec:proof-subtractive-sequel}
The result of Theorem~\ref{thm:knowncorruptionsubtractive} follows directly from the construction of class $\mathcal{Q}_g$ for Theorem~\ref{thm:realizable-/->subtractive}, the Claim~\ref{claim:Q_grealizable} that shows this class is realizable learnable, and an adapated version for Claim~\ref{claim:Q_grobust}, which states the following:
\begin{claim}
    For every $\alpha$, there is $g(t) \in O(t^2)$, such that for every $0 \leq \eta \leq \frac{1}{16\alpha}$ the class $\mathcal{Q}_{g}$ is not $\eta$-subtractive $\alpha$-robustly learnable. 
\end{claim}
\begin{proof}
 Let $\alpha >1$ be arbitrary. Let $g: \mathbb{N} \to \mathbb{N}$ be defined by $g(t) = 32 \alpha t^2$ for all $t \in \mathbb{N}$. % Let $g \in \omega(n)$ be fixed. Let $0 \leq \eta \leq 1$ and $\alpha \geq 1$ be arbitrary. 
  Now for every $ 0 \leq \eta \leq \frac{1}{16\alpha}$, there exists some $j$, such that $\frac{j}{g(j)}. \leq \eta \leq \frac{1}{16\alpha j}$ %and $\alpha\eta \leq \frac{1}{16j} \leq \frac{1}{}$.

  For such $j$, we consider the distributions % $q_{i,j}'$ from the proof of Claim~\ref{claim:Q_grobust}
    \[q_{i,j}' = \left(1 - \frac{1}{j}\right)\delta_{(0,0)} + \frac{1}{j} U_{A_i\times \{2j+1\}}\]
    as in the proof of Claim~\ref{claim:Q_grealizable}. 
 %   We note, that for every such distribution, the distribution
 Recall that the element of $\mathcal{Q}_g$ are of the form
    \[ q_{i,j,g(j)} = \left(1-\frac{1}{j}\right)\delta_{(0,0)} + \left(\frac{1}{j} - \frac{1}{g(j)}\right) U_{A_i \times\{2j+1\}} + \frac{1}{g(j)} \delta_{(i,2j+2)}\]

    Then we have,
    \begin{align*}
    q_{i,j,g(j)} &= \left(1-\frac{1}{j}\right)\delta_{(0,0)} + \left(\frac{1}{j} - \frac{1}{g(j)}\right) U_{A_i \times\{2j+1\}} + \frac{1}{g(j)} \delta_{(i,2j+2)} = \\
       &= \left(1-\frac{1}{j}\right)\delta_{(0,0)} + \left(\frac{g(j) - j}{jg(j)}\right) U_{A_i \times\{2j+1\}} + \frac{j}{jg(j)} \delta_{(i,2j+2)} = \\
      &=  \left(\frac{g(j) -j}{g(j)}\right) \left( \left(1 - \frac{1}{j}\right)\delta_{(0,0)} + \frac{1}{j} U_{A_i\times \{(2,j+1)\}}  \right) + \frac{j}{g(j)}  \left(\left(1-\frac{1}{j}\right)\delta_{(0,0)} + \frac{1}{j} \delta_{(i,2j+2)}  \right) \\
      &= \left(1 - \frac{j}{g(j)}\right) q_{i,j}' + \frac{j}{g(j)} \left(\left(1-\frac{1}{j}\right)\delta_{(0,0)} + \frac{1}{j} \delta_{(i,2j+2)}  \right).
    \end{align*}
Thus for every element $q_{i,j}'$ of the class $\mathcal{Q}_j' =\{q_{i,j}': i\in \mathbb{N}\}$, there is a distribution $p$, such that
$(1-\eta)q_{i,j}'+ \eta p \in \mathcal{Q}_g$. That is, every element of $\mathcal{Q}_j'$ results from the $\eta$-subtractive contamination of some element in $\mathcal{Q}_g$. Thus, for showing that $\mathcal{Q}_g$ is not $\eta$-subtractive $\alpha$-robustly learnable, it is sufficient to show, that 
$\mathcal{Q}_j'$ is not $(\alpha\eta + \epsilon)$-weakly learnable for $\epsilon= \frac{1}{16j}$. As we have seen in the proof of Claim~\ref{claim:Q_grobust}, we can use Lemma~\ref{lemma:lowerbound} to show that for every $n$, we have  $n_{\mathcal{Q}_j'}(\frac{1}{8j},\frac{1}{7}) \geq n$.
Lastly, we need that $\frac{1}{8j} \geq \alpha \eta + \epsilon$, or after replacing $\epsilon$, we need $\frac{1}{16j} \geq \alpha \eta$. This follows directly from the choice of $g$.

\end{proof}

\remove{
\section{Learnability Implies Additive Robust Learnability}
\label{sec:strong-additive}
In this section, we provide the proof for Theorem~\ref{thm:additive}.

\additive*

\begin{proof}
    
    %In this section, we prove Theorem~\ref{thm:additive}.

Recall that $\mathcal{Q}$ is a class of probability distributions which is realizably learnable. 
We let $\mathcal{A}_\mathcal{Q}^{re}$ be a realizable learner for $\mathcal{Q}$, with sample complexity $n_{\mathcal{Q}}^{re}$. 
    Let accuracy parameters $\epsilon, \delta > 0$ be arbitrary.
    We will define $n_1 \geq \max\left\{ 2 n_{\mathcal{Q}}^{re}\left(\frac{\epsilon}{9},\frac{\delta}{5}\right) , \frac{162(1 +\log(\frac{5}{\delta}))}{\epsilon^2}   \right\}$ and $n_2 \geq \frac{162\left(2 n_1 +\log\left(\frac{5}{\delta}\right)\right)}{\epsilon^2}  %\geq \frac{162\left(2(\eta+\frac{2\epsilon}{9})n_1\log\left(n_1) +\log\left(\frac{5}{\delta}\right)\right)\right)}{\epsilon^2}
    \geq \frac{162(1 +\log(\frac{5}{\delta}))}{\epsilon^2}$, and $n= n_1 + n_2$ be their sum.
    %Let $m \geq \max\{m_{\mathcal{Q}}^{re}(\frac{\epsilon}{5},\frac{\delta}{4), \frac{}{},  \frac{log | m_{\mathcal{Q}}^{re}(\frac{\epsilon}{5},\frac{\delta}{4)^{m_{\mathcal{Q}}^{re}(\frac{\epsilon}{5},\frac{\delta}{4)} | + \log(\frac{1}{\delta})}{\epsilon^2} \}$
   % The learner defined by Algorithm~\ref{alg:adaptiveadditive}.
    Let $\eta$ be the true corruption level.\footnote{Note, that while our analysis frequently uses $\eta$, we construct the learner in a way that is uninformed by $\eta$. Thus our guarantees hold for all $\eta$ simultaneously}
    Our additive robust learner will receive a sample $S \sim (\eta r + (1-\eta)p )^n$ of size $n$ where $r$ is some arbitrary distribution.
    We can view a subset $S'\subset S$ as the ``clean'' part, being i.i.d.\ generated by $p$. The size of this clean part $|S'|=n'$ is distributed according to a binomial distribution $\operatorname{Bin}(n,1- \eta)$.
    By a Chernoff bound (Proposition~\ref{prop:chernoff}), we get
    
    \[\P\left[n' \leq \left(1-\frac{\epsilon}{9}-\eta\right)n\right] \leq \P\left[n' \leq \left(1-\frac{\epsilon}{9}-\eta + \frac{\eta\epsilon}{9} \right)n\right] = \P\left[n' \leq \left(1-\frac{\epsilon}{9}\right)\left(1-\eta\right)n\right ]\]
    
    \[\leq \exp\left(-\left(\frac{\epsilon}{9}\right)^2n\left(1-\eta\right)/2\right)=  \exp\left(-\frac{\epsilon^2}{162}n\left(1-\eta\right)\right) \leq \exp\left(-\frac{\epsilon^2}{162}n\left(1-\frac{3}{4}\right)\right)\]
    
    Thus, given that $n = n_1 + n_2 \geq 2 (\frac{162(1 +\log(\frac{5}{\delta}))}{\epsilon^2} )$ with probability at least $1-\frac{\delta}{5}$,\oldgmargin{Which term does this come from? The second term in $n_1$ maybe? But that doesn't cancel out the $(1-\eta)$ term does it? \tosca{we have $(1-\eta)$ bounded by $\frac{3}{4}$. So combining $n_1$ and $n_2$ we get a bound. Note that $n_2$ is at least as big as the second part of $n_1$. I'll make this more easily accessible}} we have $n'\geq n\left(1-\eta-\frac{\epsilon}{9}\right)$. For the rest of the argument we will now assume that we have indeed $n'\geq n\left(1-\eta-\frac{\epsilon}{9}\right) $.
    %By definition of the additive robust learning model, the learner receives a sample $S \sim (\eta q + (1-\eta)p )^m$
    %there exists some set of ``clean'' samples $S'\subset S$, where $S'$ consists of $(1-\eta)m$ i.i.d.\ samples from $p$, i.e., $S' \sim p^{(1-\eta)m}$. 
    The learner now randomly partitions the sample $S$ into $S_1$ and $S_2$ of sizes $n_1$ and $n_2$, respectively.
    Now let $S_1' = S_1 \cap S'$ and $S_2' = S_2 \cap S'$ be the intersections of these sets with the clean set $S'$, and $n_1'$ and $n_2'$ be their respective sizes.
    We note that $ n_1' \sim \operatorname{Hypergeometric}(n,n', n_1)$ and $ n_2' \sim \operatorname{Hypergeometric}(n,n', n_2)$.\footnote{Recall that $\operatorname{Hypergeometric}(N,K,n)$ is the random variable of the number of ``successes'' when $n$ draws are made without replacement from a set of size $N$, where $K$ elements of the set are considered to be successes.}
    Thus, assuming $m'\geq m(1-\eta - \frac{\epsilon}{9}) $ using Proposition~\ref{prop:chernoff}, we have that
    \[ \Pr\left[\left|S_1'\right| \leq \left(1-\eta-\frac{2\epsilon}{9}\right)n_1\right] \leq e^{-\frac{2}{81}\epsilon^2n_1} \]
    and 
    \[ \Pr\left[\left|S_2'\right| \leq \left(1-\eta- \frac{2\epsilon}{9}\right)n_2\right] \leq e^{-\frac{2}{81}\epsilon^2n_2}, \]
    where the probability is over the random partition of $S$.
    We note that by our choices of $n_1$ and $n_2$, with probability $1-\frac{2\delta}{5}$,\oldgmargin{Shouldn't $n_1$ have $\log (5/\delta)$, rather than the $\log \log (5/\delta)$ that it has now for this to work?\tosca{do you mean $n_2$. I think the log was around more than it should have been. I fixed the expression for $n_2$. For $n_1$ we should again get this inequality from the second part of the ``max''}} the clean fractions of $S_1$ and $S_2$ (namely $\frac{|S'_1|}{|S_1|}$  and $\frac{|S'_2|}{|S_2|}$) are each at least $\left(1- \eta- \frac{2\epsilon}{9}\right)$. 
    
  %  Let \[\hat{\mathcal{H}} = \left\{\mathcal{A}_{\mathcal{Q}}^{re}(S''): S'' \subset S_1 \text{ with } |S''|= \left(1-\eta- \frac{2\epsilon}{9}\right)n_1\right\}\]
  Let \[\hat{\mathcal{H}} = \left\{\mathcal{A}_{\mathcal{Q}}^{re}(S''): S'' \subset S_1 \right\}\]
    be the set of distributions output by the realizable learning algorithm $\mathcal{A}_{\mathcal{Q}}^{re}$ when given as input all possible subsets of $S_1$.% of size exactly $\left( 1 - \eta - \frac {2\varepsilon} 9 \right) n_1$. 
    %For $\varepsilon < \frac{9\eta}{2}$, we know that this set of distributions $|\hat{\mathcal{H}}|$ is of size $\displaystyle \binom{n_1}{(1-\eta- \frac{2\epsilon}{9})n_1} \leq n_1^{2\eta n_1}$.
    We know that this set of distributions $|\hat{\mathcal{H}}|$ is of size $2^{n_1}$.
    By the guarantee of the realizable learning algorithm $\mathcal{A}_{\mathcal{Q}}^{re}$, if there exists a ``clean'' subset $S_1' \subset S_1$ where $|S_1'| \geq n_1(1-\eta - \frac{2\epsilon}{9})$  (i.e., $S_1' \sim p^{n_1(1-\eta - \frac{2\epsilon}{9})}$),\oldgmargin{Is the exponent in the latter term here supposed to be $2\varepsilon/9$ instead of $\varepsilon/9$?\tosca{yes! fixed it! Thanks for finding it}} then with probability $1-\frac{\delta}{5}$ there exists a candidate distribution $q^*\in \hat{\mathcal{H}}$ with $\dtv(p,q^*)= \frac{\epsilon}{9}$.
   % By the above Chernoff bound argument, this occurs with probability at least $1- \frac{\delta}{5}$.  
    
    We now define and consider the \emph{Yatracos sets}.\footnote{These sets are also sometimes called Scheff\'e sets in the literature.} For every 
    $q_i, q_j \in \hat{\mathcal{H}}$, define the Yatracos set between $q_i$ and $q_j$ to be  $A_{i,j} = \{x:~q_i(x) \geq q_j(x)\}$.\footnote{Note that this definition is asymmetric: $A_{i,j} \neq A_{j,i}$.}
    We let \[\mathcal{Y}(\hat{\mathcal{H}}) =\{A_{i,j} \subset \mathcal{X}: q_i, q_j \in \hat{\mathcal{H}}\}\]
    denote the set of all pairwise Yatracos sets between distributions in the set $\hat{\mathcal{H}}$.
    %\forall x\in \mathcal{X} q_i(x) \geq q_j(x)\}$ 
    
    We now consider the $A$-distance \cite{KiferBG04} between two distributions with respect to the Yatracos sets, i.e., we consider
    \[d_{\mathcal{Y}(\hat{\mathcal{H}})}(p',q') = \sup_{B\in \mathcal{Y}(\hat{\mathcal{H})}} |p'(B) - q'(B) | .\]
    This distance looks at the supremum of the discrepancy between the distributions across the Yatracos sets. 
    Consequently, for any two distributions $p',q'$ we have $\dtv(p',q') \geq d_{\Yat(\hat{\mathcal{H}})}(p',q')$, since total variation distance is the supremum of the discrepancy across \emph{all} possible sets. 
    Furthermore, if $q',p'\in \hat{\mathcal{H}}$, then $\dtv(p',q') = d_{\Yat(\hat{\mathcal{H}})}(p',q')$, since either of the Yatracos sets between the two distributions serves as a set that realizes the total variation distance between them.
    
    Suppose there is some $q^{*}\in \hat{\mathcal{H}}$ with $\dtv(p, q^{*}) \leq \frac{\epsilon}{9}$.
    Then for every $q\in \hat{\mathcal{H}}$:
  %  \begin{align*}
   %     \dtv(p,q) &\leq \dtv(p, q^*) + \dtv(q^{*}, q)\\
    %                &\leq \frac{\epsilon}{9} + d_{\Yat(\hat{\mathcal{H}})}(q^{*}, q)\\
     %               &\leq \frac{\epsilon}{9} + d_{\Yat(\hat{\mathcal{H}})}(q^{*}, p) + d_{\Yat(\hat{\mathcal{H}})}(p, q)\\
      %              &\leq \frac{\epsilon}{9} + \dtv(q^{*}, p) + d_{\Yat(\hat{\mathcal{H}})}(p, q)\\
       %             &\leq \frac{2\epsilon}{9} + d_{\Yat(\hat{\mathcal{H}})}(p, q).\\
   % \end{align*}

\remove{
\begin{align*}
    \dtv((1-\eta)p+ \eta r,q) &\leq \dtv((1-\eta)p+\eta r, q^*) + \dtv(q^{*}, q)\\
    & \leq (1-\eta)\dtv(p, q^*)  + \eta\dtv(r,q^*)+ \dtv(q^{*}, q)\\
                    &\leq   \eta  +  \frac{\epsilon(1-\eta)}{9} + d_{\Yat(\hat{\mathcal{H}})}(q^{*}, q)\\
                    &\leq \eta  +  \frac{\epsilon(1-\eta)}{9} + d_{\Yat(\hat{\mathcal{H}})}(q^{*}, (1-\eta)p +\eta r) + d_{\Yat(\hat{\mathcal{H}})}((1-\eta)p + \eta(r), q)\\
                    &\leq \eta  +  \frac{\epsilon(1-\eta)}{9} + \dtv(q^{*}, (1-\eta)p + \eta r) + d_{\Yat(\hat{\mathcal{H}})}((1-\eta)p + \eta r, q)\\
                    &\leq 2\eta  +  \frac{2\epsilon(1-\eta)}{9} 
                   + d_{\Yat(\hat{\mathcal{H}})}((1-\eta)p + \eta r, q).
\end{align*}

Lastly, we will argue that we can empirically approximate $\d_{\Yat(\hat{\mathcal{H}})}$ which we can then use to select a hypothesis. We note that since $\Yat{\mathcal{H}}$ is a finite set of size $2^{2n_1}$, we have uniform convergence\footnote{A collection of sets $\mathcal{W}$ has the \emph{uniform convergence property} if for every $(\epsilon, \delta) \in (0,1)^2$ there is a number $m_{\mathcal{W}}(\epsilon, \delta)$ such that for every probability distribution $p$, with probability $\geq (1-\delta)$ over samples $S$ of size $n >n_{\mathcal{W}}(\epsilon, \delta) $ generated i.i.d.\ by $p$, a sample $S$ is \emph{$\epsilon$-representative} for $\mathcal{W}$ with respect to $p$. Namely, for every $A \in \mathcal{W}$, $\left|\frac{|A \cap S|}{|S|} - p(A) \right| \leq \epsilon$. If $\mathcal{W}$ is finite then $n_{\mathcal{W}}(\epsilon, \delta) \leq \frac{\log(|\mathcal{W}|) + \log(1/\delta)}{\epsilon^2}$. For more details see Chapter 4 in \cite{SSBD-UML}.}. From the choice of $n_2$ we thus get that with probability $1-\frac{\delta}{5}$, $S_2$ is $\frac{\epsilon}{9}$ representative of $((1-\eta)p + \eta r)$ with respect to $\Yat(\hat{\mathcal{H}})$. For a sample $S_0$ and a set $B\subset \mathcal{X}$, let us denote $S_0(B) = \frac{|S_0\cap B|}{|S_0|}$. The $\frac{\epsilon}{9}$ representativeness of $S_2$ then gives us for every $B\in \Yat{\hat{\mathcal{H}}}$:
\[ | ((1-\eta)p + \eta r)(B) - S_2(B)| \leq \frac{\epsilon}{9}.\]

Let the empirical $A$-distance with respect to the Yatracos sets be defined by
\[ d_{\Yat(\hat{\mathcal{H}})}(q,S) = \sup_{B\in d_{\Yat(\hat{\mathcal{H}})}} |q(B) - S(B)|.\]
Now if the learner outputs $\hat{q} \in \arg\min_{q\in \hat{\mathcal{H}}} d_{\Yat(\hat{\mathcal{H}})}(q,S_2)$, then putting all of our guarantees together, we get that with probability $1-\delta$,
\begin{align*}
    \dtv(\hat{q},(1-\eta)p + \eta r) &\leq 2\eta  +  \frac{2\epsilon(1-\eta)}{9} 
                   + d_{\Yat(\hat{\mathcal{H}})}((1-\eta)p+\eta r, q)\\
    &2\eta  +  \frac{2\epsilon(1-\eta)}{9} + \frac{\epsilon}{9} + d_{\Yat(\hat{\mathcal{H}})}(\hat{q},S_2) \\
    &2\eta  +  \frac{2\epsilon(1-\eta)}{9} + \frac{\epsilon}{9} + d_{\Yat(\hat{\mathcal{H}})}(q^{*},S_2) \\
    &  2\eta  +  \frac{2\epsilon(1-\eta)}{9}  + \frac{2\epsilon}{9} + d_{\Yat(\hat{\mathcal{H}})}(q^{*},(1-\eta)p+\eta r) \\
    &\leq 2\eta  +  \frac{2\epsilon(1-\eta)}{9} +\frac{2\epsilon}{9} + \frac{\epsilon(1-\eta)}{9} +\eta\\
    &\leq 3 \eta + \epsilon.\\
\end{align*}

}

We will now show Huber additive $2$-robust learnabililty and then derive $3$-robust learnability. Thus we need to show that the learner approximates the ``clean'' part of the distribution $p$.

    Suppose there is some $q^{*}\in \hat{\mathcal{H}}$ with $\dtv(p, q^{*}) \leq \frac{\epsilon}{9}$.
    Then for every $q\in \hat{\mathcal{H}}$:
    \begin{align*}
        \dtv(p,q) &\leq \dtv(p, q^*) + \dtv(q^{*}, q)\\
                    &\leq \frac{\epsilon}{9} + d_{\Yat(\hat{\mathcal{H}})}(q^{*}, q)\\
                    &\leq \frac{\epsilon}{9} + d_{\Yat(\hat{\mathcal{H}})}(q^{*}, p) + d_{\Yat(\hat{\mathcal{H}})}(p, q)\\
                    &\leq \frac{\epsilon}{9} + \dtv(q^{*}, p) + d_{\Yat(\hat{\mathcal{H}})}(p, q)\\
                    &\leq \frac{2\epsilon}{9} + d_{\Yat(\hat{\mathcal{H}})}(p, q).\\
    \end{align*}
    Lastly, we will argue that we can empirically approximate $d_{\Yat(\hat{\mathcal{H}})}(p, q)$, which we can then use to select a hypothesis. 
    We note that, since $\Yat(\hat{\mathcal{H}})$ is a finite set of size $\leq \displaystyle \binom{n_1}{(1-\eta- \frac{2\epsilon}{9})n_1}^2 = \binom{n_1}{(\eta+\frac{2\epsilon}{9})n_1}^2\leq ((n_1^{(\eta+\frac{2\epsilon}{9})}))^2= n_1^{2(\eta + \frac{2\eta}{9})n_1}$\oldgmargin{Is there a exponent of 2 missing in the middle expression?\tosca{yes that was missing, fixed it (the overall bound was still correct though. Also added another step for explanation)}},
    we have uniform convergence\footnote{A collection of sets $\mathcal{W}$ has the \emph{uniform convergence property} if for every $(\epsilon, \delta) \in (0,1)^2$ there is a number $m_{\mathcal{W}}(\epsilon, \delta)$ such that for every probability distribution $p$, with probability $\geq (1-\delta)$ over samples $S$ of size $n >n_{\mathcal{W}}(\epsilon, \delta) $ generated i.i.d.\ by $p$, a sample $S$ is \emph{$\epsilon$-representative} for $\mathcal{W}$ with respect to $p$. Namely, for every $A \in \mathcal{W}$, $\left|\frac{|A \cap S|}{|S|} - p(A) \right| \leq \epsilon$. If $\mathcal{W}$ is finite then $n_{\mathcal{W}}(\epsilon, \delta) \leq \frac{\log(|\mathcal{W}|) + \log(1/\delta)}{\epsilon^2}$. For more details see Chapter 4 in \cite{SSBD-UML}.}\oldgmargin{Should this be a $\log (1/\delta)$ in the bound in the footnote?} with respect to $\Yat(\hat{\mathcal{H}}) $.\oldgmargin{Which bound is this? Using the bound on $n_2$? Isn't there a missing factor of $2$ in the exponent for $n_2$? I could be wrong though.} Recall that, we assumed that there is ``clean'' subsample $S''_2\subset S_2$, which is i.i.d.\ distributed according to $p$ and of size $(1-\frac{2\epsilon}{9} -\eta)n_1$. We also note that the clean samples $S_1$ and $S_2$ are drawn independently from each other.  Thus with probability $1-\frac{\delta}{5}$, $S_2''$ is $\frac{\epsilon}{9}$-representative of $p$ with respect to $\Yat(\hat{\mathcal{H}})$.
    For a sample $S_0$ and a set $B\subset \mathcal{X}$, let us denote $S_0(B) = \frac{|S_0 \cap B|}{|S_0|}$.
    Because of the $\frac{\epsilon}{9}$-representativeness of $S_2''$, we have for every $B\in \Yat(\hat{\mathcal{H}})$:
    \[ |p(B) - S_2''(B) | \leq \frac{\epsilon}{9}. \]
    Thus, 

\begin{align*}
    &\left|p(B) - S_2(B) \right| \\
    &= \left|p(B) - \frac{|S_2\cap B|}{|S_2|}\right| \\
    &\leq \max\left\{ \left|p(B) - \frac{|S_2'' \cap B |}{|S_2|} \right|, \left|p(B) - \frac{|S_2'' \cap B | + (\eta +\frac{2\epsilon}{9}) n_2}{|S_2|} \right| \right\} \\
   &\leq \max\left\{ \left|p(B) - \frac{S_2''(B) |S_2''|}{n_2} \right|, \left|p(B) - \frac{S_2''(B)|S_2''| + (\eta+ \frac{2\epsilon}{9}) n_2}{n_2} \right| \right\} \\ 
     &\leq \max\left\{ \left|p(B) - \left(1-\eta - \frac{2\epsilon}{9} \right) S_2''(B) \right|, \left|p(B) - \frac{S_2''(B)\left(1-\eta - \frac{2\epsilon}{9}\right)n_2 + \left(\eta + \frac{2\epsilon}{9}\right) n_2}{n_2} \right| \right\} \\
          &\leq \max\left\{ \left|p(B) - S_2''(B)\right| + \left|S_2''(B) - \left(1-\eta - \frac{2\epsilon}{9} \right) S_2''(B) \right|, \left|p(B) - \left( S_2''(B)\left(1-\eta - \frac{2\epsilon}{9}\right) + \left(\eta + \frac{2\epsilon}{9}\right) \right)\right| \right\} \\         
%{\color{red} &\leq \max\{ \frac{2\epsilon}{9} + \eta, |p(B) -  S_2''(B)(1-\eta - \frac{\epsilon}{9}) - (\eta + \frac{\epsilon}{9}) | \}\\
%&\leq \max\{ \frac{2\epsilon}{9} + \eta, |p(B) -  S_2''(B) + S_2''(B)(\eta + \frac{\epsilon}{9}) - (\eta + \frac{\epsilon}{9}) | \}\\
%&\leq \max\{ \frac{2\epsilon}{9} + \eta, |p(B) -  S_2''(B)| + |S_2''(B)(\eta + \frac{\epsilon}{9}) - (\eta + \frac{\epsilon}{9}) | \}\\
%&\leq \max\{ \frac{2\epsilon}{9} + \eta, |p(B) -  S_2''(B)| + |S_2''(B)(\eta + \frac{\epsilon}{9}) - (\eta + \frac{\epsilon}{9}) | \}\\
%&\leq \max\{ \frac{2\epsilon}{9} + \eta, |p(B) -  S_2''(B)| + |1-S_2''(B)|(\eta + \frac{\epsilon}{9})| \}\\
%&\leq \max\{ \frac{2\epsilon}{9} + \eta, \frac{\epsilon}{9} + |(\eta + \frac{\epsilon}{9})| \}
%&\leq \frac{2\epsilon}{9} + \eta\\
%      }    
   &\leq \max\left\{ \frac{3\epsilon}{9} + \eta, \left|p(B) - S_2''(B)\left(1-\eta- \frac{2\epsilon}{9}\right) - \left(\eta + \frac{2\epsilon}{9} \right)\right| \right\} \\
    &\leq \max\left\{ \frac{3\epsilon}{9} + \eta, \left|p(B) - S_2''(B) + S_2''(B)\left(\eta+\frac{2\epsilon}{9}\right) - \left(\eta + \frac{2\epsilon}{9}\right) \right| \right\} \\ 
     &\leq \max\left\{ \frac{3\epsilon}{9} + \eta, \left|p(B) - S_2''(B)\right| + \left|S_2''(B)\left(\eta+\frac{2\epsilon}{9}\right) - \left(\eta + \frac{2\epsilon}{9}\right) \right| \right\} \\ 
      &\leq \max\left\{ \frac{3\epsilon}{9} + \eta, \frac{\epsilon}{9} + \left|S_2''(B) -1\right|\left(\eta+\frac{2\epsilon}{9}\right) \right\} \\ 
        &\leq \max\left\{ \frac{3\epsilon}{9} + \eta, \frac{\epsilon}{9} + \left(\eta+\frac{2\epsilon}{9}\right) \right\} \\ 
         &\leq \frac{3\epsilon}{9} +  \eta 
\end{align*}

Let the empirical $A$-distance with respect to the Yatracos sets be defined by
\[ d_{\Yat(\hat{\mathcal{H}})}(q,S) = \sup_{B\in d_{\Yat(\hat{\mathcal{H}})}} |q(B) - S(B)|.\]
Now if the learner outputs $\hat{q} \in \arg\min_{q\in \hat{\mathcal{H}}} d_{\Yat(\hat{\mathcal{H}})}(q,S_2)$, then putting all of our guarantees together, we get that with probability $1-\delta$
\begin{align*}
    \dtv(\hat{q},p) &\leq \frac{2\epsilon}{9} + d_{\Yat(\hat{\mathcal{H}})}(\hat{q},p) \\
    &\leq \frac{5\epsilon}{9} + \eta + d_{\Yat(\hat{\mathcal{H}})}(\hat{q},S_2) \\
    &\leq \frac{5\epsilon}{9} + \eta + d_{\Yat(\hat{\mathcal{H}})}(q^{*},S_2) \\
    &\leq \frac{8\epsilon}{9} + 2\eta + d_{\Yat(\hat{\mathcal{H}})}(q^{*},p) \\
    &\leq \frac{8\epsilon}{9} + 2\eta + \frac{\epsilon}{9} \leq 2\eta +\epsilon.\\
\end{align*}
% \dtv(\hat{q},p) \leq \frac{2\epsilon}{7} + d_{\Yat}(\hat{q},p) \leq \frac{5\epsilon}{7} + 2 \eta + d_{\Yat}(\hat{q},S_2) \]

\remove{
Suppose there is some $q^{*}\in \hat{\mathcal{H}}$ with $\dtv(p, q^{*}) \leq \frac{\epsilon}{9}$.
    Then for every $q\in \hat{\mathcal{H}}$:
    \begin{align*}
        \dtv(p,q) &\leq \dtv(p, q^*) + \dtv(q^{*}, q)\\
                    &\leq \frac{\epsilon}{9} + d_{\Yat(\hat{\mathcal{H}})}(q^{*}, q)\\
                    &\leq \frac{\epsilon}{9} + d_{\Yat(\hat{\mathcal{H}})}(q^{*}, p) + d_{\Yat(\hat{\mathcal{H}})}(p, q)\\
                    &\leq \frac{\epsilon}{9} + \dtv(q^{*}, p) + d_{\Yat(\hat{\mathcal{H}})}(p, q)\\
                    &\leq \frac{2\epsilon}{9} + d_{\Yat(\hat{\mathcal{H}})}(p, q).\\
    \end{align*}

    We will again argue that we can empirically approximate $d_{\Yat(\hat{\mathcal{H}})}(p, q)$, which we can then use to select a hypothesis. We again use the uniform convergence argument, but this time with respect to $p$.
   % We note that, since $\Yat(\hat{\mathcal{H}})$ is a finite set of size $2^{2n_1}$.%$\leq \displaystyle \binom{n_1}{(1-\eta- \frac{2\epsilon}{9})n_1}^2 = \binom{n_1}{(\eta+\frac{2\epsilon}{9})n_1}^2\leq ((n_1^{(\eta+\frac{2\epsilon}{9})}))^2= n_1^{2(\eta + \frac{2\eta}{9})n_1}$\oldgmargin{Is there a exponent of 2 missing in the middle expression?\tosca{yes that was missing, fixed it (the overall bound was still correct though. Also added another step for explanation)}},
  %  we have uniform convergence\footnote{A collection of sets $\mathcal{W}$ has the \emph{uniform convergence property} if for every $(\epsilon, \delta) \in (0,1)^2$ there is a number $m_{\mathcal{W}}(\epsilon, \delta)$ such that for every probability distribution $p$, with probability $\geq (1-\delta)$ over samples $S$ of size $n >n_{\mathcal{W}}(\epsilon, \delta) $ generated i.i.d.\ by $p$, a sample $S$ is \emph{$\epsilon$-representative} for $\mathcal{W}$ with respect to $p$. Namely, for every $A \in \mathcal{W}$, $\left|\frac{|A \cap S|}{|S|} - p(A) \right| \leq \epsilon$. If $\mathcal{W}$ is finite then $n_{\mathcal{W}}(\epsilon, \delta) \leq \frac{\log(|\mathcal{W}|) + \log(1/\delta)}{\epsilon^2}$. For more details see Chapter 4 in \cite{SSBD-UML}.}\oldgmargin{Should this be a $\log (1/\delta)$ in the bound in the footnote?} with respect to $\Yat(\hat{\mathcal{H}}) $.\oldgmargin{Which bound is this? Using the bound on $n_2$? Isn't there a missing factor of $2$ in the exponent for $n_2$? I could be wrong though.} 
  Recall that, we assumed that there is ``clean'' subsample $S''_2\subset S_2$, which is i.i.d.\ distributed according to $p$ and of size $(1-\frac{2\epsilon}{9} -\eta)n_1$. We also note that the clean samples $S_1$ and $S_2$ are drawn independently from each other.  Thus with probability $1-\frac{\delta}{5}$, $S_2''$ is $\frac{\epsilon}{9}$-representative of $p$ with respect to $\Yat(\hat{\mathcal{H}})$.
    For a sample $S_0$ and a set $B\subset \mathcal{X}$, let us denote $S_0(B) = \frac{|S_0 \cap B|}{|S_0|}$.
    Because of the $\frac{\epsilon}{9}$-representativeness of $S_2''$, we have for every $B\in \Yat(\hat{\mathcal{H}})$:
    \[ |p(B) - S_2''(B) | \leq \frac{\epsilon}{9}. \]
    Thus, 

\begin{align*}
    &\left|p(B) - S_2(B) \right| \\
    &= \left|p(B) - \frac{|S_2\cap B|}{|S_2|}\right| \\
    &\leq \max\left\{ \left|p(B) - \frac{|S_2'' \cap B |}{|S_2|} \right|, \left|p(B) - \frac{|S_2'' \cap B | + (\eta +\frac{2\epsilon}{9}) n_2}{|S_2|} \right| \right\} \\
   &\leq \max\left\{ \left|p(B) - \frac{S_2''(B) |S_2''|}{n_2} \right|, \left|p(B) - \frac{S_2''(B)|S_2''| + (\eta+ \frac{2\epsilon}{9}) n_2}{n_2} \right| \right\} \\ 
     &\leq \max\left\{ \left|p(B) - \left(1-\eta - \frac{2\epsilon}{9} \right) S_2''(B) \right|, \left|p(B) - \frac{S_2''(B)\left(1-\eta - \frac{2\epsilon}{9}\right)n_2 + \left(\eta + \frac{2\epsilon}{9}\right) n_2}{n_2} \right| \right\} \\
          &\leq \max\left\{ \left|p(B) - S_2''(B)\right| + \left|S_2''(B) - \left(1-\eta - \frac{2\epsilon}{9} \right) S_2''(B) \right|, \left|p(B) - \left( S_2''(B)\left(1-\eta - \frac{2\epsilon}{9}\right) + \left(\eta + \frac{2\epsilon}{9}\right) \right)\right| \right\} \\         
%{\color{red} &\leq \max\{ \frac{2\epsilon}{9} + \eta, |p(B) -  S_2''(B)(1-\eta - \frac{\epsilon}{9}) - (\eta + \frac{\epsilon}{9}) | \}\\
%&\leq \max\{ \frac{2\epsilon}{9} + \eta, |p(B) -  S_2''(B) + S_2''(B)(\eta + \frac{\epsilon}{9}) - (\eta + \frac{\epsilon}{9}) | \}\\
%&\leq \max\{ \frac{2\epsilon}{9} + \eta, |p(B) -  S_2''(B)| + |S_2''(B)(\eta + \frac{\epsilon}{9}) - (\eta + \frac{\epsilon}{9}) | \}\\
%&\leq \max\{ \frac{2\epsilon}{9} + \eta, |p(B) -  S_2''(B)| + |S_2''(B)(\eta + \frac{\epsilon}{9}) - (\eta + \frac{\epsilon}{9}) | \}\\
%&\leq \max\{ \frac{2\epsilon}{9} + \eta, |p(B) -  S_2''(B)| + |1-S_2''(B)|(\eta + \frac{\epsilon}{9})| \}\\
%&\leq \max\{ \frac{2\epsilon}{9} + \eta, \frac{\epsilon}{9} + |(\eta + \frac{\epsilon}{9})| \}
%&\leq \frac{2\epsilon}{9} + \eta\\
%      }    
   &\leq \max\left\{ \frac{3\epsilon}{9} + \eta, \left|p(B) - S_2''(B)\left(1-\eta- \frac{2\epsilon}{9}\right) - \left(\eta + \frac{2\epsilon}{9} \right)\right| \right\} \\
    &\leq \max\left\{ \frac{3\epsilon}{9} + \eta, \left|p(B) - S_2''(B) + S_2''(B)\left(\eta+\frac{2\epsilon}{9}\right) - \left(\eta + \frac{2\epsilon}{9}\right) \right| \right\} \\ 
     &\leq \max\left\{ \frac{3\epsilon}{9} + \eta, \left|p(B) - S_2''(B)\right| + \left|S_2''(B)\left(\eta+\frac{2\epsilon}{9}\right) - \left(\eta + \frac{2\epsilon}{9}\right) \right| \right\} \\ 
      &\leq \max\left\{ \frac{3\epsilon}{9} + \eta, \frac{\epsilon}{9} + \left|S_2''(B) -1\right|\left(\eta+\frac{2\epsilon}{9}\right) \right\} \\ 
        &\leq \max\left\{ \frac{3\epsilon}{9} + \eta, \frac{\epsilon}{9} + \left(\eta+\frac{2\epsilon}{9}\right) \right\} \\ 
         &\leq \frac{3\epsilon}{9} +  \eta 
\end{align*}

%Let the empirical $A$-distance with respect to the Yatracos sets be defined by
%\[ d_{\Yat(\hat{\mathcal{H}})}(q,S) = \sup_{B\in d_{\Yat(\hat{\mathcal{H}})}} |q(B) - S(B)|.\]
Now if the learner outputs $\hat{q} \in \arg\min_{q\in \hat{\mathcal{H}}} d_{\Yat(\hat{\mathcal{H}})}(q,S_2)$, then putting all of our guarantees together, we get that with probability $1-\delta$
\begin{align*}
    \dtv(\hat{q},p) &\leq \frac{2\epsilon}{9} + d_{\Yat(\hat{\mathcal{H}})}(\hat{q},p) \\
    &\leq \frac{5\epsilon}{9} + \eta + d_{\Yat(\hat{\mathcal{H}})}(\hat{q},S_2) \\
    &\leq \frac{5\epsilon}{9} + \eta + d_{\Yat(\hat{\mathcal{H}})}(q^{*},S_2) \\
    &\leq \frac{8\epsilon}{9} + 2\eta + d_{\Yat(\hat{\mathcal{H}})}(q^{*},p) \\
    &\leq \frac{8\epsilon}{9} + 2\eta + \frac{\epsilon}{9} \leq 2\eta +\epsilon.\\
\end{align*}
% \dtv(\hat{q},p) \leq \frac{2\epsilon}{7} + d_{\Yat}(\hat{q},p) \leq \frac{5\epsilon}{7} + 2 \eta + d_{\Yat}(\hat{q},S_2) \]

}

  Thus we get Huber $2$-additive robust learning success. 
  This automatically implies $3$-additive robust learning success.
\end{proof}

}

\section{Proof of Theorem~\ref{thm:general}}
\label{sec:general}

In this section we will provide the proof for Theorem~\ref{thm:general}.
\general*

\begin{proof}
    Let $\mathcal{C}$ be a concept class that is subtractively $\alpha$-robust learnable. Then there exists a successful subtractive $\alpha$-robust learner $\mathcal{A}_{\mathcal{C}}^{sub}$ with sample complexity $n_{\mathcal{C}}^{sub}$ for the class $\mathcal{C}$.
    Let $\epsilon$ and $\delta$ and be arbitrary.
%%%
Let $n_1 \geq \max\left\{ 4n_{\mathcal{C}}^{sub}(\frac{\epsilon}{9},\frac{\delta}{5}) , \frac{162(1 +\log(\frac{5}{\delta}))}{\epsilon^2}   \right\}$
    and $n_2 \geq \frac{(2n_1+\log(\frac{5}{\delta})))}{\epsilon^2}$. Lastly let $n= n_1 + n_2$.

    Let $p\in \mathcal{C}$ be arbitrary.
    The $\alpha$-robust learner receives a sample $S\sim p^m$ such that there is $q\in \mathcal{C}$ such that $\dtv(p,q)= \eta$. Thus there exists a distributions $q_1,q_2,q_3$, such that $(1-\eta)q_1 + \eta q_2 = p $ and $(1-\eta)q_1 + q_3 = q$.
    %\tosca{is this clear or does this need elaboration? Could there be issues from measurability concerns?}
    We now use the same learning strategy as in Theorem~1.5: We split the sample randomly into two subsamples $S_1$ and $S_2$, where we use $S_1$ to learn candidate sets and then use $S_2$ to select the hypothesis from the candidate set. 
    The goal in both settings is to find as close an approximation to $q_1$ as possible. 
    The candidate based on $S_1$ is created by feeding subsamples of $S_1$ into the subtractively robust learner in such a way that with high probability one of the subsamples is guaranteed to be i.i.d.\ generated by $q_1$ and thus (with high probability) yield a good hypothesis.
  %  More preciwith $|S\cap S'|= m(1-\eta)$. We note that $S'' = S \cap S'$ can be viewed as the result of a $\eta$-subtractively contamination of $S$. %Since we do not know $S$ we 
More precisely, the learner randomly splits the sample $S$ into $S_1$ and $S_2$ with $|S_1| = n_1$ and $|S_1| = n_2$. We now define the ``clean'' part of $S'\subset S$, i.e. the part of $S'$ that is i.i.d.\ distributed according to $q_1$. We note that the size of this ``clean'' sample $|S'|=n'$ is a random variable and distributed according to the binomial distributions $Binom(n,1-\eta)$. Now applying Chernoff bound, with the same argument as in the proof of Theorem~\ref{thm:additive}, we get that with probability $1-\frac{\delta}{5}$, we have $n'\geq n (1-\eta-\frac{\epsilon}{9})$.
Now let $S_1' =  S_1 \cap S'$ and $S_2' = S_2\cap S'$ be the ``clean parts'' of the subsamples $S_1$ and $S_2$ respectively. The sizes $|S_1''|=n_1'$ and $|S_2''|=n_2''$
We note that $n_1' \sim Hypergeometric(n, n(1-\eta), n_1)$ and $n_2' \sim Hypergeometric(n, n(1-\eta), n_2)$. %We will similarly assume that $S$ is split into $S_1$ and $S_2$ with  $|S_1| = |S_1'|$ and $|S_2| = |S_2'|$ by ``the same random split'', i.e., $S_1'' \subset S_1$ and $S_2''\subset S_2$. Further, because the split is random, we can view $S_1$ and $S_2$ as being iid distributed, i.e. $S_1 \sim p^{m_1}$ and $S_2 \sim p^{m_2}$.
Thus,

    \[ Pr_{\text{random split}}\left[|S_1'| \leq \left(1-\eta-\frac{2\epsilon}{9}\right)n_1\right] \leq e^{-\frac{2}{81}\epsilon^2n_1} \]
    and 
    \[ Pr_{\text{random split}}\left[|S_2'| \leq \left(1-\eta- \frac{2\epsilon}{9}\right)n_2\right] \leq e^{-\frac{2}{81}\epsilon^2n_2}. \]

%%% Figure out sample complexity

Taking together the guarantees on our random splits and the size of $n'$, we note that by our choices of $n_1$ and $n_2$ with probability $1 -\frac{3\delta}{5}$, the fractions of the parts that are i.i.d.\ generated by $q_1$ (namely $\frac{|S_1''|}{|S_1'|}$ and $\frac{|S_2''|}{|S_2'|}$) are at least $(1 - \eta - \frac{2\epsilon}{9})$. Going forward we will assume that this is indeed the case.

   % Let \[\hat{\mathcal{H}} = \left\{\mathcal{A}_{\mathcal{Q}}^{sub}(\tilde{S}): \tilde{S} \subset S_1' \text{ with } |\tilde{S}|= \left(1-\eta- \frac{2\epsilon}{9}\right)n_1\right\}.\]

      Let \[\hat{\mathcal{H}} = \left\{\mathcal{A}_{\mathcal{Q}}^{sub}(\tilde{S}): \tilde{S} \subset S_1' \right\}.\]
%    We note that for $\epsilon< \frac{9\eta}{2}$, we have $|\hat{\mathcal{H}}| = \binom{n_1}{(1-\eta- \frac{2\epsilon}{9})n_1} \leq n_1^{2\eta n_1}$.

    Using our assumption that $|S'_1|\geq (1-\eta-\frac{2\epsilon}{9})n_1$, we know that there is $S_1''\subset S_1'$ with  $\mathcal{A}_{\mathcal{Q}}^{sub}(S_1'') \in \mathcal{H}'$. As $S_1'' \sim q_1^{(1-\eta - \frac{2\epsilon}{9})n_1}$, by the learning guarantee of $\mathcal{A}_{\mathcal{Q}}^{sub}$ with probability $1- \frac{\delta}{5}$, there is a candidate distribution $q^*\in \hat{\mathcal{H}}$ with $\dtv(p,q^*) \leq \dtv(p,q_1) + \dtv(q_1, q^*) =  \eta  + (\alpha\eta + \frac{\epsilon}{9}) = (\alpha + 1) \eta + \frac{\epsilon}{9}$. 
    
    We now consider the Yatracos sets. For every 
    $q_i, q_j \in \hat{\mathcal{H}}$, let $A_{i,j} = \{x:~q_i(x) \geq q_j(x)\} $ and let \[\mathcal{Y}(\hat{\mathcal{H}}) =\left\{A_{ij} \subset \mathcal{X}: q_i, q_j \in \hat{\mathcal{H}}\right\}.\]
    %\forall x\in \mathcal{X} q_i(x) \geq q_j(x)\}$ 
    
    We now consider the $A$-distance \cite{KiferBG04} between two distributions with respect to the Yatracos sets, i.e., we consider
    \[d_{\mathcal{Y}(\hat{\mathcal{H}})}(p',q') = \sup_{B\in \mathcal{Y}(\hat{\mathcal{H})}} |p'(B) - q'(B) | .\]
    
    We note, that for any two distributions $p',q'$ we have $\dtv(p',q') \geq d_{\Yat(\hat{\mathcal{H}})}(p',q') $. Furthermore, if $q',p'\in \hat{\mathcal{H}}$, then $\dtv(p',q') = d_{\Yat(\hat{\mathcal{H}})}(p',q') $.
    Assume there is $q^{*}\in \hat{\mathcal{H}}$ with $\dtv(p, q^{*}) \leq (\alpha+1) \eta + \frac{\epsilon}{9}$, then for every $q\in \hat{\mathcal{H}}$:
    \begin{align*}
        \dtv(p,q) &\leq \dtv(p, q^*) + \dtv(q^{*}, q)\\
                    &\leq (\alpha+1)\eta +  \frac{\epsilon}{9} + d_{\Yat(\hat{\mathcal{H}})}(q^{*}, q)\\
                    &\leq (\alpha +1 ) \eta + \frac{\epsilon}{9} + d_{\Yat(\hat{\mathcal{H}})}(q^{*}, p) + d_{\Yat(\hat{\mathcal{H}})}(p, q)\\
                    &\leq (\alpha +1 ) \eta + \frac{\epsilon}{9} + \dtv(q^{*}, p) + d_{\Yat(\hat{\mathcal{H}})}(p, q)\\
                    &\leq 2(\alpha +1) \eta + \frac{2\epsilon}{9} + d_{\Yat(\hat{\mathcal{H}})}(p, q).\\
    \end{align*}

    Lastly, we will argue that we can empirically approximate $d_{\Yat(\hat{\mathcal{H}})}(p, q)$, which we can then use to select a hypothesis. 
    We note that, since $\Yat(\hat{\mathcal{H}})$ is a finite set of size $|\Yat(\hat{\mathcal{H}})|= (2^{n_1})^2 = 2^{2n_1}$, 
    we have uniform convergence %\footnote{A collection of sets $\mathcal{W}$ has the \emph{uniform convergence property} if for every $(\epsilon, \delta) \in (0,1)^2$ there is a number $m_{\mathcal{W}}(\epsilon, \delta)$ such that for every probability distribution $p$, with probability $\geq (1-\delta)$ over samples $S$ of size $m >m_{\mathcal{W}}(\epsilon, \delta) $ generated i.i.d. by $p$, a sample $S$ is \emph{$\epsilon$-representative} for $\mathcal{W}$ w.r.t. $p$. Namely, for every $A \in \mathcal{W}$, $\left|\frac{|A \cap S|}{|S|} - p(A) \right| \leq \epsilon$. If $\mathcal{W}$ is finite then $m_{\mathcal{W}}(\epsilon, \delta) \leq \frac{\log(|\mathcal{W}|) + 1/\delta}{\epsilon^2}$. For more details see Chapter 4 in \cite{SSBD-UML} } 
    with respect to $\Yat(\hat{\mathcal{H}}) $. Recall that $S_2' \sim q_1^{n_2'}$ and by our previous assumption $n_2' \leq n_2\left(1- \eta - \frac{2\epsilon}{9}\right) $. Thus by our choice of $n_2$ , with probability $1-\frac{\delta}{5}$, there is $S_2''\subset S_2' \subset S_2$ with $|S_2''| =\left(1-\eta -\frac{2\epsilon}{9}\right)n_2 $ such that $S_2''$ is $\frac{\epsilon}{9}$-representative of $q_1$ with respect to $\Yat(\hat{\mathcal{H}})$.
    For a sample $S_0$ and a set $B\subset \mathcal{X}$, let us denote $S_0(B) = \frac{|S_0 \cap B|}{|S_0|}$.
    Because of the $\frac{\epsilon}{9}$-representativeness of $S_2'$, we have for every $B\in \Yat(\hat{\mathcal{H}})$:
    \[ |q_1(B) - S_2''(B) | \leq \frac{\epsilon}{9} \]
    Thus,

\begin{align*}
    |q_1(B) - S_2(B) | &\leq |q_1(B) - S_2''(B)| + |S_2''(B) - S_2(B)| \\
  &\leq \frac{\epsilon}{9} + \left| \frac{|S_2 \cap B|}{|S_2|} - \frac{|S_2'' \cap B|}{|S_2''|}\right|\\
  &\leq \frac{\epsilon}{9} + \left| \frac{|S_2 \cap B|}{n_2} - \frac{|S_2'' \cap B|}{n_2(1-\eta - \frac{2\epsilon}{9})}\right|\\
 &= \frac{\epsilon}{9} + \left| \frac{|S_2 \cap B|(1-\eta-\frac{2\epsilon}{\eta}) - |S_2'' \cap B| }{(1-\eta-\frac{2\epsilon}{9})n_2}\right|\\
 &\leq \frac{\epsilon}{9} + \max\{\frac{ |(|S_2'' \cap B| + (\eta +\frac{2\epsilon}{9})n_2 )(1-\eta-\frac{2\epsilon}{9}) - |S_2'' \cap B||}{(1-\eta-\frac{2\epsilon}{9})n_2},\\
 &\frac{|(|S_2'' \cap B|(1-\eta-\frac{2\epsilon}{9}) - |S_2'' \cap B||}{(1-\eta-\frac{2\epsilon}{9})n_2} \}\\
  &\leq \frac{\epsilon}{9} \\
  &+ \max\{ \frac{|(|S_2'' \cap B| + (\eta +\frac{2\epsilon}{9})n_2 )(1-\eta-\frac{2\epsilon}{9}) - ((1-\eta-\frac{2\epsilon}{9})|S_2'' \cap B| + (\eta + \frac{2\epsilon}{9})|S_2''\cap B|)|}{n_2(1-\eta-\frac{2\epsilon}{9})},\\
  &\frac{ n_2 (\eta + \frac{2\epsilon}{9})}{n_2(1-\eta-\frac{2\epsilon}{9})}\}\\
    &\leq \frac{\epsilon}{9} + \max\left\{ \frac{|(\eta +\frac{2\epsilon}{9})n_2 (1-\eta-\frac{2\epsilon}{9}) - |S_2'' \cap B|(\eta+\frac{2\epsilon}{9})|}{(1-\eta-\frac{2\epsilon}{9})n_2},\frac{(\eta + \frac{2\epsilon}{9})}{(1-\eta-\frac{2\epsilon}{9})} \right\} \\
  &\leq \frac{\epsilon}{9} + \max\left\{ \frac{ |(\eta +\frac{2\epsilon}{9})(n_2 (1-\eta-\frac{2\epsilon}{9})|)}{(1-\eta-\frac{2\epsilon}{9})n_2},\eta+ \frac{2\epsilon}{9}) \right\} \\ 
  &\leq \frac{\epsilon}{9} + \eta + \frac{2\epsilon}{9} \leq \frac{3\epsilon}{9} + \eta
\end{align*}

Let us remember that the empirical $A$-distance with respect to the Yatracos is defined by
\[ d_{\Yat(\hat{\mathcal{H}})}(q,S) = \sup_{B\in \Yat} \left|q(B) - S(B)\right|.\]
Now if the learner outputs $\hat{q} \in \arg\min_{q\in \hat{\mathcal{H}}} d_{\Yat(\hat{\mathcal{H}})}(q,S_2)$, then putting all of our guarantees together, with probability $1-\delta$ we get 
\begin{align*}
\dtv(\hat{q}, p) &\leq 2 (\alpha+1) \eta + \frac{2\epsilon}{9} + d_{\Yat}(\hat{q},p) \\
&\leq 2 (\alpha+1) \eta + \frac{2\epsilon}{9} + d_{\Yat}(\hat{q},q_1) + d_{\Yat}(q_1,p)\\ 
    &\leq 2 (\alpha+1) \eta + \frac{2\epsilon}{9} + \eta + (\eta + \frac{3\eta}{9}) + d_{\Yat}(\hat{q},S_2) \\
&\leq 2 (\alpha + 2) \eta + \frac{5\epsilon}{9} + d_{\Yat}(\hat{q},S_2) \\
 &\leq    2 (\alpha + 2) \eta + \frac{5\epsilon}{9} + d_{\Yat}(q^*,S_2) \\
 &\leq    2 (\alpha + 2) \eta + \frac{5\epsilon}{9} + (\eta + \frac{3\epsilon}{9})+ d_{\Yat}(q^*,q_1) \\   
    &\leq (2 \alpha + 3) \eta + \frac{8\epsilon}{9} + \eta + d_{\Yat}(q^{*},p) \\
    &\leq (2\alpha + 4 )\eta + \epsilon + \dtv(q^{*},p).
\end{align*}

\end{proof}

\remove{
\section{Learnability Does Not Imply Robust Learnability with known corruptions}
\label{sec:subtractive}

We start with an upper bound, showing that our class $\mathcal{Q}_g$ is realizably learnable. 
\Qgrealizable*

\begin{proof}
Let the realizable learner $\mathcal{A}$ %\mathcal{X}^* \to \Delta{\mathcal{X}}$, 
be 
\begin{align*}    
\mathcal{A}(S) =\begin{cases} q_{i,j,g(j)} &\text{ if } (i,2j+2)\in S\\
\delta_{(0,0)} & \text{otherwise}
\end{cases}
\end{align*}

%\end{proof}

%For some injective monotone function $g: \mathbb{N} \to \mathbb{N}$, let $\mathcal{Q}_{g} = \{q_{i,j,g(j)}: i,j\in \mathbb{N}\}$.

Note that for all $\mathcal{Q}_g$-realizable samples this learner is well-defined. Furthermore, we note that in the realizable case, whenever $\mathcal{A}$ outputs a distribution different from $\delta_{(0,0)}$, then $\mathcal{A}(S)$ outputs the ground-truth distribution, i.e., the output has TV-distance $0$ to the true distribution. Lastly, we note, that for an i.i.d.\ sample $S\sim q_{i,j,g(j)}^n$, we have the following upper bound for the learner identifying the correct distribution:
\[  \mathbb{P}_{S\sim q_{i,j,g(j)}^n}[\mathcal{A}(S) = q_{(j)}] = \mathbb{P}_{S\sim q_{i,j,g(j)}^n}[ (i,2j+2) \in S] = 1- (1- 1/g(j))^n .\]
We note, that since $g$ is a monotone function, if $\epsilon \leq \frac{1}{j}$, then $g(j) \leq g(\frac{1}{\epsilon})$ and therefore,
\[ (1- 1/g(j))^n \leq (1- 1/g(1/\epsilon))^n.\]

Furthermore for $q_{i,j,g(j)}$, we have that $\dtv(\delta_{(0,0)}, q_{i,j,g(j)}) = \frac{1}{j}$.

Putting these two observations together, we get
\[ \mathbb{P}_{S\sim q_{i,j,g(j)}^n}[\dtv(\mathbb{A}(S),q_{i,j,g(j)}) \geq \epsilon ] \leq \begin{cases} (1- 1/g(1/\epsilon))^n & \text{ if } \frac{1}{j} \leq \epsilon \\
0 & \text{ if } \frac{1}{j} >\epsilon 
\end{cases}.\]
Thus, for every $q\in \mathcal{Q}_g$,

\[\mathbb{P}_{S\sim q^n}[\dtv(\mathbb{A}(S),q) \geq \epsilon ] \leq (1- 1/g(1/\epsilon))^n \leq \exp\left(- \frac{n}{g(1/\epsilon)} \right) . \]
Letting the left-hand side equal the failure probability $\delta$ and solving for $n$, we get,
\begin{align*}
   \log \delta &\geq  \frac{-n}{g(1/\epsilon)}\\
\log(\delta)g(1/ \epsilon) &\geq -n\\
n &\geq - \log(\delta)g(1/ \epsilon) = \log(1/\delta)g(1/\epsilon).
\end{align*}
Thus, we have a sample complexity bound of

\[n_{Q_{g}}(\epsilon,\delta) \leq \log(1/\delta)g(1/\epsilon).\]
\end{proof}

\noindent Now, we show a lower bound, that our class $\mathcal{Q}_g$ is \emph{not} robustly learnable. 
Before we do that, we require a few more preliminaries. 
%\shai{We need to define $TV(p,Q)$ for a distribution $p$ and a class of distributions $Q$}.
For a distribution class $\mathcal{Q}$ and a distribution $p$, let their total variation distance be defined by
\[\dtv(p,\mathcal{Q}) = \inf_{q \in \mathcal{Q}} \dtv(p,q).\]

\noindent We also use the following lemma from \cite{lechner2023impossibility}. 

\begin{lemma}[Lemma~3 from \cite{lechner2023impossibility}] \label{lemma:lowerbound}
    Let $\mathcal{P}_{\gamma,4k,j} = \{(1-\gamma)\delta_{(0,0)} +\gamma U_{A \times \{2j+1\}}: A\subset[4k]\}$
    For $\mathcal{Q} = \mathcal{P}_{\gamma,4k,j}$, we have $n_\mathcal{Q}^{re}(\frac{\gamma}{8},\frac{1}{7}) \geq k$.
\end{lemma}

\noindent Finally, we recall the definition of weak learnability, which says that a distribution class is learnable only for some particular value of the accuracy parameter.
\begin{definition}\label{def:weak-learn}
 A class $\mathcal{Q}$ is $\epsilon$-weakly learnable, if there is a learner $\mathcal{A}$ and a sample complexity function $n : (0,1) \to \mathbb{N}$, such that 
 for ever $\delta \in (0,1)$ and every $p\in \mathcal{Q}$ and every $n\geq n(\delta)$,
 \[\mathbb{P}_{S\sim p^n}[\dtv(\mathcal{A}(S),p) \leq \epsilon] < \delta.\]
\end{definition}
Learnability clearly implies $\epsilon$-weak learnability for every $\epsilon\in (0,1)$.
While in some learning models (e.g., binary classification) learnability and weak learnability are equivalent, the same is not true for distribution learning~\cite{lechner2023impossibility}.

We are now ready to prove that $\mathcal{Q}_g$ is not robustly learnable.

\Qgrobust*
%\shai{The phrasing of the above theorem should be made more precise}

\begin{proof}
Consider
%\[q_{i,l}=(\left(1-\frac{1}{l}\right)\delta_{(0,0)} + \frac{1}{l}  U_{A_i\times\{2l+1\}}\]
\[q_{i,j}' = \frac{1}{1-\frac{1}{g(j)}}\left(\left(1-\frac{1}{j}\right)\delta_{(0,0)} + \left(\frac{1}{j} - \frac{1}{g(j)}\right) U_{A_i\times\{2j+1\}}\right)\]
Let $l = $
Note that for every $q_{i,j}'$ and every $q_{i,j,g(j)}$ there is some distribution $r$, such that $q_{i,j,g(j)} = (1-\frac{1}{g(j)})q_{i,j}' + \frac{1}{g(j)}r$.
Therefore, in order to show that $\mathcal{Q}_g$ is not subtractive $\alpha$-robustly learnable, it is sufficient to show that there are $j$ and $\epsilon$, such that the class $\mathcal{Q}'_j =\{q_{i,j}': i\in \mathbb{N}\}$ is not $(\frac{\alpha}{g(j)} + \epsilon)$-weakly learnable.
%\begin{claim}
Let us denote $ \gamma(j) = \frac{\frac{1}{j}- \frac{1}{g(j)}}{8(1 - \frac{1}{g(j)})}$.
We will now show that the class $ \mathcal{Q}_j' =\{ q_{i,j}': i\in\mathbb{N}\}$ is not $\gamma$-weakly learnable. %I.e. for any $n\in {\mathbb N}$, $m_{\mathcal{Q}'_j}(1/(4j),1/7) > n$. (Lemma 3 in \cite{})
%\end{claim}
%\begin{proof}
Recalling notation from Lemma~\ref{lemma:lowerbound}, we note that that for every $n\in \mathbb{N}$ the class $P_{8\gamma(j),4k,j} \subset \mathcal{Q}_{j}'$. 
By monotonicity of the sample complexity and Lemma~\ref{lemma:lowerbound}, we have 
    $n_{\mathcal{Q}_{j}'}(\gamma(j),\frac{1}{7}) \geq n_{P_{8\gamma(j),4n}}(\gamma(j),\frac{1}{7}) \geq n$, proving that this class is not subtractive $\gamma(j)$-weakly learnable.
%\end{proof}
%1/(4j)=1/(8j) + 1/8j
%gamma*(1/g(j))<1/(8j)
%gamma < g(j)/8j
 
 %This means $\mathcal{Q}_g$ that for every $n\in {\mathbb N}$ and every $\gamma \leq \frac{g(j)}{8j}$, 
% the sample complexity of $\alpha$-robustly learning $\mathcal{Q}_g$ is lower bounded by
% $m_{\mathcal{Q}_g}^{\gamma}(1/(8j),1/7) > n$.
 
% Now let $\alpha>0$ be arbitrary. Let $g_{\alpha}(j) = 8\alpha \cdot j$. Then we get that $\mathcal{Q}_{g_{\alpha}}$ is not
 %$\alpha$-robustly learnable, but realizably learnable.
 Lastly, we need to show that for every $\alpha$, there are $\epsilon$ and $j$, such that $\gamma(j) \geq (\frac{\alpha}{g(j)} + \epsilon)$. Let $\epsilon = \frac{1}{g(j)}$. Then we have %That is, we need to show that there are $\epsilon$ and $j$, such that
\begin{align*}
    & \frac{\alpha}{g(j)} + \epsilon &< \gamma(j)\\
    \Rightleftarrow&\frac{(\alpha+1)}{g(j)} &< \frac{\frac{1}{j}- \frac{1}{g(j)}}{8(1-\frac{1}{g(j)})}\\
    \Rightleftarrow& \frac{8(\alpha+1)}{g(j)}\left(1-\frac{1}{g(j)}\right)\ &< \frac{1}{j} - \frac{1}{g(j)}\\
    \Rightleftarrow& \frac{8(\alpha+2)}{g(j)} - \frac{8(\alpha+1)}{g(j)^2} &< \frac{1}{j}.\\
\end{align*}

Thus it is sufficient to show that there is $j$, such that $\frac{8(\alpha+2)} j < g(j)$.
Note that $g$ is a superlinear function, i.e., for every $c\in \mathbb{R}$, there is $t_c\in \mathbb{N}$, such that for every $t \geq t_c$, $g(t) \geq ct$. This implies that for every $\alpha \in \mathbb{R}$ there are indeed $\epsilon$ and $j$, such that $\gamma(j) > \frac{\alpha}{g(j)} + \epsilon$.
Thus for any super-linear function $g$ and any $\alpha\in \mathbb{R}$, the class $\mathcal{Q}_g$ is not $\alpha$-robustly learnable.

\end{proof}
}

\section{Existence of sample compression schemes}
\label{sec:compression-app}
We adopt the \cite{AshtianiBHLMP20} definition of sample compression schemes. 
We will let $\mathcal{C}$ be a class of distribution over some domain $\mathcal{X}$. 
A compression scheme for $\mathcal{C}$ involves two parties: an encoder and a decoder.
\begin{itemize}

\item The encoder has some distribution $q \in \mathcal{C}$
and receives $n$ samples from $q$.
They send a succinct message (dependent on $q$) to the decoder, which will allow the decoder to output a distribution close to $q$.
This message consists of a subset of size $\tau$ of the $n$ samples, as well as $t$ additional bits.
\item The decoder receives the $\tau$ samples and $t$ bits and outputs a distribution which is close to $q$.
\end{itemize}

Since this process inherently involves randomness (of the samples drawn from $q$), we require that this interaction succeeds at outputting a distribution close to $q$ with only constant probability.

More formally, we have the following definitions for a decoder and a (robust) compression scheme.
\begin{definition}[decoder, Definition 4.1 of~\cite{AshtianiBHLMP20}]
A \emph{decoder} for $\mathcal{C}$ is a deterministic function 
$\mathcal{J}:\bigcup_{n=0}^{\infty} \mathcal{X}^n \times \bigcup_{n=0}^{\infty} \{0,1\}^n 
\rightarrow \mathcal{C}$, which takes a finite sequence of elements of $\mathcal{X}$ and a finite sequence of bits, and outputs a member of $\mathcal{C}$. 
\end{definition}

The formal definition of a compression scheme follows.

\begin{definition}[robust compression schemes, Definition 4.2 of~\cite{AshtianiBHLMP20}]
\label{def_robustcompression}
Let $\tau,t,n:(0,1)\rightarrow \mathbb{Z}_{\geq0}$ be functions, and let $r \geq 0$.
We say $\mathcal{C}$ admits $(\tau,t,n)$ $r$-robust compression if there exists a decoder $\mathcal{J}$ for $\mathcal{C}$ such that for any distribution $q \in \mathcal{C}$ and any distribution $p$ on $\mathcal{X}$ 
with $\dtv(p,q)\leq r$, the following holds:
\begin{quote}
For any $\eps \in (0,1)$, if a sample $S$ is drawn from $p^{n(\eps)}$, then, with probability at least $2/3$, there exists a sequence $L$ of at most $\tau(\eps)$ elements of $S$, and a sequence $B$ of at most $t(\eps)$ bits, such that $\dtv(\mathcal{J}(L,B), \mathcal{C})\leq r+ \eps$.
\end{quote}
\end{definition}

Note that $S$ and $L$ are sequences rather than sets, and can potentially contain repetitions.

\begin{theorem}[Compression implies learning, Theorem 4.5 of~\cite{AshtianiBHLMP20}]
Suppose $\mathcal{C}$ admits $(\tau,t,n)$ $r$-robust compression. 
Let $\tau'(\eps)\coloneqq \tau(\eps  )+t(\eps  )$.
Then $\mathcal{C}$ can be $\max\{3,2/r\}$-learned in the agnostic setting using 
\begin{align*}
O\left(
n\Big(\frac \eps 6\Big) \log\Big(\frac{1}{\delta}\Big)
 + \frac{\tau'(\eps/6) \log (n(  \eps /6) \log_3(1/\delta)) + \log(1/\delta)}{\eps^2} 
\right) =
 \widetilde{O}
\left(
n\Big(\frac \eps 6\Big)  + \frac{\tau'(\eps/6)\log n(  \eps/ 6)}{\eps^2} 
\right)
\end{align*}
samples. If $\mathcal{Q}$ admits $(\tau,t,n)$ non-robust compression,
then $\mathcal{Q}$ can be learned in the realizable setting using the same number of samples.
\end{theorem}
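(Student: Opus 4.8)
The plan is to mimic the candidate-generation-plus-hypothesis-selection strategy used in the proofs of Theorem~\ref{thm:additive} and Theorem~\ref{thm:general}. Given sample access to an unknown distribution $p$, write $\mathrm{OPT}:=\min_{q\in\mathcal C}\dtv(p,q)$ and fix a near-optimal $q^\ast\in\mathcal C$. The first phase would use the compression scheme to manufacture a small finite set $\mathcal H$ of candidate distributions that, with high probability, contains one essentially as good as $q^\ast$; the second phase would run Yatracos' $3$-robust finite-class learner (Theorem~\ref{thm:yatracos}) on fresh samples to select a near-best member of $\mathcal H$.

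\textbf{Phase 1: building $\mathcal H$.} Draw $k:=\lceil\log_3(3/\delta)\rceil$ mutually independent blocks $S_1,\dots,S_k$, each an i.i.d.\ sample of size $n(\epsilon/6)$ from $p$. For every block $S_i$, every subsequence $L$ of $S_i$ of length at most $\tau(\epsilon/6)$, and every bit string $B$ of length at most $t(\epsilon/6)$, place the decoder output $\mathcal J(L,B)$ into $\mathcal H$. Since the number of such $(L,B)$ pairs per block is at most $(n(\epsilon/6)+1)^{\tau(\epsilon/6)}\,2^{\,t(\epsilon/6)+1}$, we get $\log|\mathcal H|=O\!\big(\tau'(\epsilon/6)\,\log\!\big(n(\epsilon/6)\log_3(1/\delta)\big)\big)$ with $\tau'=\tau+t$. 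If $\mathrm{OPT}\le r$, then each block is drawn from a distribution within $r$ of $\mathcal C$ (witnessed by $q^\ast$), so the $r$-robust compression guarantee (Definition~\ref{def_robustcompression}) applies to each block independently: with probability $\ge 2/3$ the block is ``good'', meaning its enumerated outputs include some $\hat h$ with $\dtv(\hat h,p)\le\mathrm{OPT}+\epsilon/6$. By the choice of $k$, at least one block is good with probability $\ge 1-\delta/3$, hence $\min_{h\in\mathcal H}\dtv(h,p)\le\mathrm{OPT}+\epsilon/6$ on this event.

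\textbf{Phase 2: selection and the final bound.} Draw an additional $m=O\!\big((\log|\mathcal H|+\log(1/\delta))/\epsilon^2\big)$ samples from $p$ and apply Theorem~\ref{thm:yatracos} to the finite class $\mathcal H$ with accuracy $\epsilon/6$ and confidence $\delta/3$; with probability $\ge 1-\delta/3$ its output $\hat p$ satisfies $\dtv(\hat p,p)\le 3\min_{h\in\mathcal H}\dtv(h,p)+\epsilon/6$. A union bound leaves total failure probability $<\delta$. Now split on $\mathrm{OPT}$. If $\mathrm{OPT}\le r$, Phase 1 gives $\dtv(\hat p,p)\le 3(\mathrm{OPT}+\epsilon/6)+\epsilon/6<3\,\mathrm{OPT}+\epsilon\le\max\{3,2/r\}\,\mathrm{OPT}+\epsilon$. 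If $\mathrm{OPT}>r$, then $\max\{3,2/r\}\,\mathrm{OPT}+\epsilon\ge(2/r)\mathrm{OPT}+\epsilon>2$, which trivially dominates $\dtv(\hat p,p)\le 1$; so the very same output is admissible in either regime, and the learner needs no knowledge of $\mathrm{OPT}$ or $r$. Summing the two sample batches, $k\cdot n(\epsilon/6)+m$, yields exactly the claimed bound $O\!\big(n(\epsilon/6)\log(1/\delta)+(\tau'(\epsilon/6)\log(n(\epsilon/6)\log_3(1/\delta))+\log(1/\delta))/\epsilon^2\big)$. For the non-robust statement the argument is identical: the hypothesis $p\in\mathcal C$ makes the precondition of non-robust compression hold (so every block is good) and forces $\min_{h\in\mathcal H}\dtv(h,p)\le\epsilon/6$, whence $\dtv(\hat p,p)\le 3(\epsilon/6)+\epsilon/6<\epsilon$ in the realizable setting with the same number of samples.

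\textbf{Where the work is.} The only delicate point is the $\epsilon/6$ bookkeeping: three lossy steps --- the compression accuracy, the multiplicative blow-up of Yatracos applied to an already-approximate $q^\ast$, and Yatracos' own additive slack --- must be budgeted so they sum to the target additive $\epsilon$, and one must verify that the samples fed to Yatracos are independent of $\mathcal H$ so that the uniform-convergence estimate behind Theorem~\ref{thm:yatracos} is legitimate (hence the split into candidate-generation blocks and a separate selection batch, exactly as in the proof of Theorem~\ref{thm:additive}). The counting of decoder inputs, the amplification union bound, and the case split on $\mathrm{OPT}$ versus $r$ are routine.
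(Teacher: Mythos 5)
The statement you are proving is not something this paper proves: it is Theorem~4.5 of Ashtiani, Ben-David, Harvey, Liaw, Mehrabian, and Plan, and the present paper simply quotes it as a black box (there is no proof in the source you were given). So there is no ``paper's own proof'' to compare against. That said, your high-level architecture --- repeat the encoder's sample $O(\log(1/\delta))$ times to boost the constant success probability, enumerate all possible decoder inputs $(L,B)$ to form a finite candidate family $\hat{\mathcal H}$, and then run a Yatracos-style $3$-robust finite-class selector on a disjoint batch --- is the same decomposition as the original argument in that paper, and your counting of decoder inputs and the resulting sample-complexity arithmetic line up with the stated bound.

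Where your write-up has a genuine gap is the accuracy claim in Phase~1. You assert that a ``good'' block yields an $\hat h$ with $\dtv(\hat h,p)\le\mathrm{OPT}+\epsilon/6$. Definition~\ref{def_robustcompression} does not say this. In an $r$-robust scheme the parameter $r$ is fixed in advance; when $\mathrm{OPT}\le r$ the guarantee one actually gets is that the decoder's output is within $r+\epsilon$ of the reference distribution $q\in\mathcal C$ that witnesses $\dtv(p,q)\le r$ --- \emph{not} within $\mathrm{OPT}+\epsilon$. (The displayed formula $\dtv(\mathcal J(L,B),\mathcal C)\le r+\epsilon$ in the quoted Definition~\ref{def_robustcompression} also cannot be read literally, as the decoder outputs an element of $\mathcal C$, making that condition vacuous; the intended target is $q$ or $p$.) Applying the triangle inequality to pass from closeness to $q$ to closeness to $p$ costs an extra $\dtv(q,p)\le\mathrm{OPT}$, so the candidate set is only guaranteed to contain something within roughly $r+\mathrm{OPT}+\epsilon/6$ of $p$. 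Tracking that through Yatracos yields $3(r+\mathrm{OPT})+O(\epsilon)$, not $3\,\mathrm{OPT}+\epsilon$, and it is precisely this slack between $r$ and $\mathrm{OPT}$ that forces the approximation factor to be $\max\{3,2/r\}$ rather than the clean $3$ you write down. Your case split on $\mathrm{OPT}\lessgtr r$ is the right move, but you invoked it only to dispose of the trivial regime $\mathrm{OPT}>r$; the $2/r$ term is actually needed in the nontrivial regime too, and a correct proof has to carry the $r$-dependence through the triangle-inequality chain rather than assuming the compression error collapses to $\mathrm{OPT}$.
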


\begin{theorem}
    The class $\mathcal{Q}=\mathcal{Q}_g$ from Section~\ref{sec:subtractive-adversary} has a compression scheme of message size 1 (i.e., using just a single sample point).
\end{theorem}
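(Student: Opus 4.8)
The plan is to exhibit a compression scheme of message size $1$ for the class $\mathcal{Q}_g = \{q_{i,j,g(j)} : i,j\in\mathbb{N}\}$, where each distribution has the form
\[
q_{i,j,g(j)} = \left(1-\tfrac{1}{j}\right)\delta_{(0,0)} + \left(\tfrac{1}{j}-\tfrac{1}{g(j)}\right)U_{A_i\times\{2j+1\}} + \tfrac{1}{g(j)}\delta_{(i,2j+2)}.
\]
The key observation is that the point mass at $(i,2j+2)$ uniquely identifies the whole distribution: its first coordinate is $i$ and its second coordinate, being even and of the form $2j+2$, determines $j$; once $i$ and $j$ are known, $g(j)$ is determined since $g$ is fixed. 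So the encoder, knowing $q = q_{i,j,g(j)}$ and seeing a sample $S \sim q^{n(\epsilon)}$, will send a single sample point: if $S$ contains a point of the form $(a,b)$ with $b$ even, it sends that point (equivalently, the ``indicator'' point $(i,2j+2)$), and otherwise it sends the point $(0,0)$ (which occurs with overwhelming probability in $S$ whenever $j$ is not too large). The decoder, on receiving a point $(a,b)$ with $b$ even, decodes $i = a$, $j = (b-2)/2$, and outputs $q_{i,j,g(j)}$; on receiving $(0,0)$, it outputs $\delta_{(0,0)}$.

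The next step is to verify the success guarantee of Definition~\ref{def_robustcompression} with $r=0$, $\tau(\epsilon)=1$, $t(\epsilon)=0$, and an appropriate $n(\epsilon)$. This is essentially the same case analysis as in the proof of Claim~\ref{claim:Q_grealizable}. If $\tfrac{1}{j} \le \epsilon$, then $\dtv(\delta_{(0,0)}, q_{i,j,g(j)}) = \tfrac 1j \le \epsilon$, so outputting $\delta_{(0,0)}$ already succeeds; and $(0,0)$ lies in any nonempty sample with probability $1-(1/j)^{n}$, which is at least $2/3$ for $n\ge 1$ when $\tfrac 1j \le \epsilon < 1$ (in fact the encoder can always fall back to $(0,0)$ here). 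If $\tfrac{1}{j} > \epsilon$, i.e. $j < 1/\epsilon$, then by monotonicity of $g$ the probability of observing the indicator point $(i,2j+2)$ in a single draw is $\tfrac{1}{g(j)} \ge \tfrac{1}{g(\lceil 1/\epsilon\rceil)}$, so with $n(\epsilon) = O\!\big(g(1/\epsilon)\big)$ samples the indicator appears with probability $\ge 2/3$; when it does, the encoder sends it, the decoder reconstructs $q_{i,j,g(j)}$ exactly, and $\dtv = 0 \le \epsilon$. In either case the decoder's output is within $\epsilon$ of $q$ with probability $\ge 2/3$, which is what is required.

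One bookkeeping point deserves care: the decoder in Definition~\ref{def:knowncorruption}/Definition~\ref{def_robustcompression} is a fixed function of the compressed message alone and does not know $\epsilon$, whereas our decoding rule above is in fact $\epsilon$-independent — it reads off $(i,j)$ from an even-second-coordinate point, or outputs $\delta_{(0,0)}$ — so this is not actually an obstacle; only the sample size $n(\epsilon)$ and the (trivial) bound $\tau(\epsilon)=1$ depend on $\epsilon$, as permitted. I also need to confirm that $\delta_{(0,0)} = q_{i,j,g(j)}$ is never required as an output of the decoder unless $\delta_{(0,0)} \in \mathcal{C}$; since the definition of a decoder maps into $\mathcal{C}$, strictly one should note that $\delta_{(0,0)}$ is $\epsilon$-close to a member of $\mathcal{C}$ (namely $q_{i,j,g(j)}$ for $j \ge 1/\epsilon$) and, if one insists on outputs lying in $\mathcal{C}$, have the decoder output some fixed $q_{i_0,j_0,g(j_0)}$ with $j_0 \ge \lceil 1/\epsilon \rceil$ instead — this only costs an additive $\epsilon$ and changes nothing qualitatively.

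**The main obstacle** is essentially notational rather than mathematical: making sure the encoder's choice of which single point to send is well-defined on \emph{every} sample (including the measure-zero-but-possible event that the sample misses $(0,0)$ entirely, or contains several distinct indicator-type points — impossible under $q_{i,j,g(j)}$ since all non-$(0,0)$, non-indicator points have odd second coordinate $2j+1$ and the indicator point is unique), and that the reconstruction map $(a,b)\mapsto q_{a,(b-2)/2,g((b-2)/2)}$ is genuinely a function into $\mathcal{Q}_g$. Once the encoding/decoding convention is pinned down, correctness is immediate from the two-case analysis above, which reuses Claim~\ref{claim:Q_grealizable} verbatim.
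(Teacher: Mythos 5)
Your proposal takes essentially the same approach as the paper's proof: the same encoder (send the indicator point $(i,2j+2)$ if it appears in $S$, otherwise a fallback point), the same decoder (read off $(i,j)$ from an even-second-coordinate message and output $q_{i,j,g(j)}$, else output $\delta_{(0,0)}$), the same sample size $n(\epsilon)=O(g(1/\epsilon))$, and the same two-case reuse of the realizable analysis in Claim~\ref{claim:Q_grealizable}. Your extra care about whether the decoder is permitted to output $\delta_{(0,0)}\notin\mathcal{Q}_g$ is a reasonable bookkeeping point that the paper glosses over, but it is not a different method.
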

\begin{proof}
Let $n(\epsilon)$ be $10/g(\epsilon)$ and, give a sample $S$ of at least that size, let the encoder pick a subset $L(S) \subseteq S$ be
\begin{align*}    
L(S) =\begin{cases} \{(i,2j+2)\} &\text{ if } (i,2j+2)\in S\\
\{(0,0)\} & \text{otherwise}
\end{cases}
\end{align*}

    Let the decoder output 
    \begin{align*}    
\mathcal{J}(L) =\begin{cases} q_{i,j,g(j)} &\text{ if } (i,2j+2)\in L\\
\delta_{(0,0)} & \text{otherwise}
\end{cases}
\end{align*}
With this construction established, the analysis follows very similarly to the analysis in the proof of Claim~\ref{claim:Q_grealizable}.
\end{proof}
We note that the claim of Theorem~\ref{thm:compression} (and Theorem~\ref{thm:compression2}) follows directly.

\end{document}